\documentclass{article}
\pdfpagewidth=8.5in
\pdfpageheight=11in

\usepackage{ijcai25}

\usepackage{times}
\usepackage{soul}
\usepackage{url}
\usepackage[hidelinks]{hyperref}
\usepackage[utf8]{inputenc}
\usepackage[small]{caption}
\usepackage{graphicx}
\usepackage{booktabs}
\usepackage[switch]{lineno}

\usepackage{amsmath,amssymb,amsfonts}
\usepackage{amsthm}
\usepackage[ruled]{algorithm2e}
\usepackage{algorithmic}
\usepackage{subcaption}
\usepackage{xcolor}

\urlstyle{same}

\title{A Flexible Fairness Framework with Surrogate Loss Reweighting for Addressing Sociodemographic Disparities}

\author{
    Author Name
    \affiliations
    Affiliation
    \emails
    email@example.com
}

\author{
Wen Xu$^1$\footnote{Corresponding Author}
\and
Elham Dolatabadi$^2$
\affiliations
$^1$University of Toronto \\
$^2$York University, Vector Institute\\
\emails
realwen.xu@mail.utoronto.ca,
edolatab@yorku.ca
}

\usepackage{amsmath,amsfonts,bm}

\newtheorem{theorem}{Theorem}
\newtheorem{corollary}{Corollary}
\newtheorem{lemma}{Lemma}

\newtheorem{definition}{Definition}
\newtheorem{remark}{Remark}
\newtheorem{assumption}{Assumption}






\newcommand{\bx}{\boldsymbol{x}}
\newcommand{\ba}{\boldsymbol{a}}
\newcommand{\bz}{\boldsymbol{z}}
\newcommand{\bw}{\boldsymbol{w}}

\newcommand{\cA}{{\mathcal A}}

\newcommand{\cD}{{\mathcal D}}

\newcommand{\cH}{{\mathcal H}}
\newcommand{\cI}{{\mathcal I}}

\newcommand{\cO}{{\mathcal O}}

\newcommand{\cS}{{\mathcal S}}
\newcommand{\cT}{{\mathcal T}}

\newcommand{\cW}{{\mathcal W}}
\newcommand{\cX}{{\mathcal X}}
\newcommand{\cY}{{\mathcal Y}}
\newcommand{\cZ}{{\mathcal Z}}

\newcommand{\bbE}{{\mathbb E}}
\newcommand{\bbR}{{\mathbb R}}

\newcommand{\ignore}[1]{}














\def\eqref#1{equation~\ref{#1}}









\def\1{\bm{1}}










\DeclareMathAlphabet{\mathsfit}{\encodingdefault}{\sfdefault}{m}{sl}
\SetMathAlphabet{\mathsfit}{bold}{\encodingdefault}{\sfdefault}{bx}{n}













\begin{document}
\maketitle
\begin{abstract}

This paper presents a new algorithmic fairness framework called $\boldsymbol{\alpha}$-$\boldsymbol{\beta}$ Fair Machine Learning ($\boldsymbol{\alpha}$-$\boldsymbol{\beta}$ FML), designed to optimize fairness levels across sociodemographic attributes. Our framework employs a new family of surrogate loss functions, paired with loss reweighting techniques, allowing precise control over fairness-accuracy trade-offs through tunable hyperparameters $\boldsymbol{\alpha}$ and $\boldsymbol{\beta}$. To efficiently solve the learning objective, we propose Parallel Stochastic Gradient Descent with Surrogate Loss (P-SGD-S) and establish convergence guarantees for both convex and nonconvex loss functions. Experimental results demonstrate that our framework improves overall accuracy while reducing fairness violations, offering a smooth trade-off between standard empirical risk minimization and strict minimax fairness. Results across multiple datasets confirm its adaptability, ensuring fairness improvements without excessive performance degradation.
\end{abstract}
\section{Introduction}
Sociodemographic disparities—driven by unequal access to education, healthcare, and employment—remain one of the most urgent challenges of our time \cite{zajacova2018relationship}. Addressing these inequities demands more than equal treatment; it requires confronting historical injustices and systemic barriers. Amartya Sen’s ``The Idea of Justice''~\cite{sen2008idea} and Matt Cavanagh’s ``Against Equality of Opportunity''~\cite{roemer2015equality} advocate for a justice framework that is adaptive and context-sensitive, ensuring fairness is measured by tangible improvements in people’s lives rather than abstract ideals.

This need for a more nuanced and contextual approach to fairness is especially critical in machine learning (ML) due to its increasingly influence on decision-making in high-stakes domains such as healthcare, finance, and criminal justice. The widespread adaoption of ML models raises urgent concerns about their potential to either mitigate or reinforce existing sociodemographic disparities~\cite{FairML_book,FairML_survey,juhn2022assessing}. At the core of this challenge lies the dual task of \textit{defining} and \textit{achieving} fairness in algorithmic systems. While formal fairness criteria offer structured approaches to bias mitigation, they often fail to account for broader sociodemographic contexts, limiting their ability to address deeper systemic inequities~\cite{FairML_book,FairML_survey}.

Existing fairness-aware ML approaches primarily address bias through statistical constraints—such as demographic parity and equalized odds—applied at various stages of the pipeline, including pre-processing transformations, in-processing optimization frameworks, and post-processing adjustments~\cite{AISTATS17_Zafar,ICML18_Hashimoto,NeurIPS20_Lahoti,ICLR20_Baharlouei}. While these methods provide measurable fairness guarantees, they often frame fairness as a rigid optimization constraint rather than an adaptable principle that evolves with changing contexts. This static approach risks overlooking the broader structural factors that contribute to disparities~\cite{buyl2024fairret}. Strategies that prioritize improving outcomes for historically marginalized groups frequently introduce trade-offs that may compromise performance for others, leading to ethical and practical tensions~\cite{selbst2019fairness}. Bridging this gap requires fairness frameworks that move beyond static parity metrics, incorporating a \textit{flexible}, context-aware understanding of equity while \textit{preserving model utility} —one that aligns with Sen’s and Cavanagh’s vision of justice as an active and corrective force~\cite{noble2018algorithms,dolata2022sociotechnical}.

In this work, we propose a novel fairness-aware optimization framework that dynamically balances fairness and accuracy, adapting to the specific socioeconomic context of an application. Drawing inspiration from fair resource allocation in network congestion control~\cite{alpha_fairness} and fairness-aware learning in federated settings~\cite{ICLR20_Li}, our framework generalizes existing fairness paradigms, including minimax fairness proposed in~\cite{ICML18_Hashimoto,ICML19_AFL}, which prioritizes the most underrepresented groups. By embedding fairness directly into the optimization process, our approach advances the broader goal of developing fairness-aware models that account for ethical and policy implications across diverse domains such as healthcare, criminal justice, and economic decision-making. 
The key \textbf{contributions} of this work are as follows:

\begin{itemize}
    \setlength\itemsep{0em}
    \item We propose a novel model-agnostic framework, called $\boldsymbol{\alpha}$-$\boldsymbol{\beta}$ Fair Machine Learning ($\boldsymbol{\alpha}$-$\boldsymbol{\beta}$ FML), for in-processing fair prediction through the utilization of a family of surrogate loss functions with loss reweighting techniques. These functions incorporate reweighting the polynomials of loss functions on different subgroups to calibrate the distribution of performance among protected features or imbalanced labels. The $\boldsymbol{\alpha}$-$\boldsymbol{\beta}$ FML has two sets of hyperparameters $\boldsymbol{\alpha}$ and $\boldsymbol{\beta}$, which enable an arbitrary level of fairness on model performance.
    \item We introduce a distributed stochastic optimization algorithm, referred to as Parallel Stochastic Gradient Descent with Surrogate Loss (\textsc{P-SGD-S}), designed to train ML models using our newly proposed loss functions. We also provide convergence guarantees for~\textsc{P-SGD-S} for both convex and nonconvex loss functions.
    \item We validate the flexibility of our framework in achieving fairness across three classification tasks: two based on sociodemographic attributes and one on label-partitioned fairness. Our approach adapts to different fairness definitions, balancing accuracy and fairness trade-offs in both convex and nonconvex settings.
\end{itemize}

\section{Related Work}\label{sec:related}

There is no uniform definition of fairness in ML, but two main levels of fairness are generally recognized in the literature: individual-level fairness and group-level fairness. Individual level fairness aims to achieve fairness for each unique entity in the dataset~\cite{ITCS12_Dwork,ICML13_Zemel,ICLR16_Louizos}, as well as counterfactual fairness~\cite{Neurips17_Kusner} and PAC-learnable individual fairness~\cite{ICML18_Yona}. On the other hand, group fairness is concerned with metrics such as demographic parity~\cite{AISTATS17_Zafar} and equality of opportunity~\cite{NIPS16_Hardt}. These recognitions have led to the adoption of three types of methods for achieving fair predictions: pre-processing, in-processing, and post-processing methods. 

\textbf{Pre-processing methods} modify input data before training, such as \emph{fairness through unawareness}~\cite{NIPS16_Nina}, advocating for the removal of protected features before model training. Another approach, presented in \cite{NeurIPS17_Calmon}, introduces convex optimization for learning a data transformation. This method aims to limit distortion and control discrimination while simultaneously preserving utility.

\textbf{In-processing methods} integrate fairness considerations directly into the learning process. In~\cite{AISTATS17_Zafar,ICLR20_Baharlouei}, the authors proposed incorporating fairness constraints, such as the Pearson correlation and R\'enyi correlation between the protected features and the predictions, to impose fairness. Reweighting-based methods have also been proposed~\cite{ICML20_Lohaus,NeurIPS21_Bendekgey,TMLR24_Yao}, where fairness objectives are incorporated by combining empirical loss with fairness constraints, focusing on metrics like demographic parity and equality of opportunity. Building on Rawlsian distributive justice~\cite{Justice01_Rawls}, distributionally robust optimization (DRO)~\cite{arXiv19_Rahimian,AoS21_Duchi} has been explored as a fairness mechanism, where minimax optimization problems are solved to ensure robust model performance~\cite{ICML18_Hashimoto,ICLR20_Sagawa}. DRO-based methods have also been applied in distributed ML~\cite{ICML19_AFL}, highlighting their applicability to federated settings. Some works in distributed ML have considered fairness notions such as proportional fairness~\cite{TMLR23_Zhang} and $\alpha$-fairness~\cite{ICLR20_Li}. However, proportional fairness, as defined in~\cite{TMLR23_Zhang}, focuses on the relative improvement in each client’s performance, which is unrelated to our objective. The most related work to ours is~\cite{ICLR20_Li}, where the authors introduce a $q$-fair federated learning ($q$-FFL) framework with surrogate loss functions. However, their approach lacks theoretical convergence analysis. Besides, the model's hyperparameter $q$ was uniform among all clients, i.e., groups, and only the fairness on model accuracy was investigated. Unlike much of the algorithmic fairness literature~\cite{rabonato2024systematic}, which relies on auxiliary fairness constraints or regularization techniques, our approach directly integrates fairness into the learning process through a novel loss function, making it fundamentally distinct from existing literature.

\textbf{Post-processing methods} mitigate discrimination by adjusting model outcomes. For instance, \cite{NIPS16_Hardt} introduced an optimal approach for equality of opportunity, relying solely on the joint statistics of the predictor, the target, and the protected feature. Moreover, \cite{SDM16_Fish} proposed shifting decision boundaries for protected groups in various models. \cite{FACCT18_Dwork} offered a transfer learning-based decoupling algorithm for fair adjustments across black-box models, and \cite{NeurIPS22_Alghamdi} used information projection to post-process classifier outputs. However, these methods often incur additional computational overhead, limiting their practicality in resource-constrained settings.

\section{Preliminaries}\label{sec:preliminaries}
In this section, we provide essential technical groundwork on fairness in ML to underpin the development of our theory.

We consider a supervised learning scenario, where each data point consists of a feature vector $Z \in \cZ$, with $\cZ$ as the feature space, and a label $Y \in \cY$, with $\cY$ representing the outcome space. To simplify our presentation, we focus on the binary classification problem, i.e., $\cY = \{0, 1\}$. However, the concepts and findings can be readily extended to multi-class classification problems or regression tasks. It is important to note that some features in $Z$ should be treated carefully, concerning fairness and bias in ML. For instance, under the Civil Rights Act of 1964 in the United States~\cite{CivilRight64}, features such as age, gender, race, and religion are classified as protected features for individuals. Thus, we categorize all features into protected features $A \in \cA$ and non-protected features $X \in \cX$ such that $Z = (A, X) \in \cA \times \cX = \cZ$. The output prediction from the trained model is represented by $\hat{Y}$.
In the remaining parts of the paper, we use uppercase symbols to denote random variables or vectors while we use lowercase symbols to represent unspecified realizations of them. If these lowercase symbols represent realizations of random vectors, they are presented in bold font. Specifically, we use the lowercase $\bx$, $\ba$, $\bz$, $y$, and $\hat{y}$ to represent a realization of the random variable/vector $X$, $A$, $Z$, $Y$, and $\hat{Y}$, respectively.

We consider the following well-known definition for group-level fairness, which entails equivalent model performance across different groups. Here, we make an additional assumption that the protected feature $A$ is a single, binary feature and use $A=a$ to represent the protected feature taking a scalar value of $a$. The definitions can be extended to include multiple protected features and categorical ones beyond binary.

\begin{definition}[Equality of Accuracy (EA)]\label{def:ea}
A model satisfies equality of accuracy if 
\begin{align}
    P(\hat{Y}= Y|A=0) = P(\hat{Y}= Y|A=1).
\end{align}
\end{definition}

Achieving definition~\ref{def:ea} implies equal accuracy of predicted outcomes for the two groups, categorized based on the protected feature, $A$.
\begin{definition}[EA violation]\label{def:ea:violation}
The EA violation is defined as 
\begin{align}\label{eqn:ea:violation}
    \epsilon_{\text{EA}} = |P(\hat{Y}= Y|A=0) - P(\hat{Y}= Y|A=1)|.
\end{align}
\end{definition}
A smaller EA violation $\epsilon_{\text{EA}}$ indicates a more uniform model performance across the two groups. In addition, we incorporate other widely recognized fairness criteria, such as demographic parity (DP)~\cite{AISTATS17_Zafar} and equality of opportunity (EO)~\cite{NIPS16_Hardt}, as discussed in the Appendix.

\section{Fair Machine Learning Formulation}\label{sec:betefair}
We begin by formalizing the $\boldsymbol{\alpha}$-$\boldsymbol{\beta}$ fair machine learning (FML) under the assumption that all available $N$ feature-label pairs $\{(\bz_{i}, y_{i})\}_{i=1}^{N}$ are independently drawn from an unknown distribution $\cD$. Here, a learner is represented by a hypothesis $h \in \cH$, where $\cH$ denotes the hypothesis class. For any $Z \in \cZ$, the predicted outcome of $h$ is $\hat{Y} = h(Z)$. We further assume that each learner $h$ is parameterized by $\bw \in \cW \subseteq \bbR^{d}$. We define the loss function as $\ell: \cW \times \cZ \times \cY \rightarrow \bbR_{+}$ such that the loss incurred by a data point $(\bz, y)$ with respect to a model $w$ is given by a non-negative value $\ell(\bw, (\bz,y))$. For convenience, we represent the loss incurred by a data point $(\bz_j, y_j)$ using a model $\bw$ by $\ell_{j}(\bw)$ for any $j \in [N]$, where $[N] = \{1, \dots,N\}$. 

A canonical approach to solving a learning problem involves empirical risk minimization (ERM)~\cite{UML,FML}, where the objective is to identify a learner that minimizes the average loss incurred by the provided training examples. This optimization can be formulated as the following finite sum minimization problem:
\begin{align}\label{def:erm:opt}
    \min_{\bw \in \cW} \frac{1}{N}\sum_{j=1}^{N} \ell_{j}(\bw).
\end{align}

We propose a new family of surrogate loss functions that facilitate the incorporation of diverse levels of fairness, inspired by the utility function in resource allocation~\cite{alpha_fairness} and fair resource allocation in FL~\cite{ICLR20_Li}. To differentiate this approach from previous works, we name this family of loss functions as $\beta$-fair surrogate loss functions. 
\begin{definition}[$\beta$-fair Surrogate Loss Function]
The family of $\beta$-fair loss functions for any $(\bz, y) \in (\cZ, \cY)$ and for any $\bw \in \cW$ is
\begin{align}\label{def:betafair:loss}
    f_{\beta}(\bw, (\bz, y)) = \frac{(1+\ell(\bw, (\bz, y)))^{^{1+\beta}}}{1+\beta},
\end{align}
where $\beta \geq 0$ and $\ell$ is any given nonnegative loss function.
\end{definition}

\begin{remark}
    Our proposed $\beta$-fair surrogate loss function can be interpreted as placing larger penalties on data points with higher losses when $\beta > 0$. Furthermore, it serves as an upper bound for the original loss, which contrasts with the surrogate loss function used in~\cite{ICLR20_Li}. Notably, their approach does not include a bias term (such as $+1$ in the numerator of \eqref{def:betafair:loss}), making this upper-bound property not always guaranteed in their formulation.
\end{remark}

We further define our surrogate empirical loss for the learning problem. As previously mentioned, we focus on a scenario where the protected feature $A$ is a single binary attribute, assuming values in $\{0, 1\}$. We denote $\cS_{0}$ for the set of samples with $A = 0$ and $\cS_{1}$ for the set of samples with $A = 1$. The cardinality of $\cS_{0}$ and $\cS_{1}$ are $S_{0}$ and $S_{1}$, respectively.  
\begin{definition}[$\boldsymbol{\alpha}$-$\boldsymbol{\beta}$ Fair Surrogate Loss Function]
   The group-aware $\boldsymbol{\alpha}$-$\boldsymbol{\beta}$ fair surrogate loss is defined as
\begin{align}\label{def:alphabetafair}
    L_{(\boldsymbol{\alpha}, \boldsymbol{\beta})}(\bw)  = \alpha_{0}F_{\beta_0}(\bw) + \alpha_{1}F_{\beta_1}(\bw), 
\end{align}
where $F_{\beta_0}(\bw) = \frac{1}{S_{0}} \sum_{i \in \cS_{0}}f_{\beta_{0}}(\bw, (\bz_{i}, y_{i}))$ is the average $\beta_{0}$-fair surrogate loss on $\cS_{0}$,  $F_{\beta_1}(\bw) = \frac{1}{S_{1}} \sum_{i \in \cS_{1}}f_{\beta_{1}}(\bw, (z_{i}, y_{i}))$ is the average $\beta_{1}$-fair surrogate loss on $\cS_{1}$, and $\boldsymbol{\alpha} = (\alpha_{0}, \alpha_{1}) \in \Delta_{1}$, i.e., the $1$-dimensional probability simplex in $\mathbb{R}^{2}$. For convenience, we let $\boldsymbol{\beta} = (\beta_{0}, \beta_{1})$.
\end{definition}
In our proposed framework, the objective is to minimize the $\boldsymbol{\alpha}$-$\boldsymbol{\beta}$ fair surrogate loss in~\eqref{def:alphabetafair}, i.e.,
\begin{align}\label{def:alpha-beta:opt}
    \min_{\bw \in \cW} L_{(\boldsymbol{\alpha}, \boldsymbol{\beta})}(\bw).
\end{align}
\begin{remark}
    Note that our formulation has two sets of tunable hyperparameters: $\boldsymbol{\alpha}$ and $\boldsymbol{\beta}$. The hyperparameter $\boldsymbol{\alpha}$ captures the explicit weights assigned to each group of the data, while the hyperparameter $\boldsymbol{\beta}$ steers the surrogate loss for data sample from each group. Common choices for $\boldsymbol{\alpha}$ include equal group weights, i.e., $\alpha_{0} = \alpha_{1} = 1/2$, and equal sample weights, i.e.,  $\alpha_{0} =S_0/N$ and $\alpha_{1} =S_1/N$, making the weights proportional to the number of data points in each group.
\end{remark}

\begin{remark}
Our formulation subsumes many previous works as special cases. When setting $\alpha_{0} =S_0/N$, $\alpha_{1} =S_1/N$, and $\beta_{0} = \beta_{1} = 0$, our formulation in~\eqref{def:alpha-beta:opt} is equivalent to the original ERM formulation in~\eqref{def:erm:opt}. If we set $\beta_{0} = \beta_{1} = q > 0$, and reset the $+1$ bias term in the numerator to be $0$, our formulation in~\eqref{def:alpha-beta:opt} aligns with the optimization formulation of q-FFL in~\cite{ICLR20_Li} with $2$ clients. Additionally, by letting $\alpha_{0}$ and $\alpha_{1}$ be positive while $\beta_{0}$ and $\beta_{1}$ approach positive infinity, min-max fairness~\cite{ICML18_Hashimoto,ICML19_AFL} is imposed, where the data point with largest loss dominates. As such, tuning $\boldsymbol{\alpha}$ and $\boldsymbol{\beta}$ allows for imposing any level of fairness between the extreme case of ERM and the min-max fairness. 
\end{remark}

\begin{remark}
    Note that~\eqref{def:alpha-beta:opt} can be adapted for scenarios involving protected or unbalanced labels instead of protected features. In this case, assuming binary labels, we can redefine $\cS_{0}$ as the index set of samples with label $Y = 0$ and $\cS_{1}$ as the index set of samples with label $Y = 1$. We refer to the scenario of grouping samples by values of $Y$ as the \textit{label-partitioned scenario} and the scenario of grouping samples by values of protected features $A$ as the \textit{feature-partitioned scenario}. In our experiments, we demonstrate the flexibility of our proposed methods in both scenarios.
\end{remark}

\section{Algorithm Design}\label{sec:algorithm}
In this section, we introduce the Parallel Stochastic Gradient Descent with Surrogate Loss~(\textsc{P-SGD-S}) algorithm to solve the optimization problem in~\eqref{def:alpha-beta:opt}. \textsc{P-SGD-S} is based on stochastic gradient descent~(\textsc{SGD}), one of the most popular first-order methods in ML~\cite{SIAM_Bottou}. Besides, \textsc{P-SGD-S} allows model updates for both groups in a parallel manner to better utilize modern computation resources~\cite {NIPS10_Zinkevich}. The pseudocode for \textsc{P-SGD-S} is shown in Algorithm~\ref{alg:P-SGD-S}. 

We now provide a detailed description of~\textsc{P-SGD-S}. The algorithm begins by initializing a model $\bw^{(0)} \in \cW$ and then proceeds with training over $T$ rounds. For each round $t \in \cT = \{0, \dots, T\!-\!1\}$, the following three steps are performed for each $i$th group, where $i \in \cI = \{0,1\}$: i) A mini-batch of training examples $\xi_{i}^{(t)} \subseteq \cS_{i}$ is sampled uniformly at random, ii) A stochastic gradient is obtained:
\begin{align}\label{eqn:stochasticgradient}
    \!\!\tilde{\nabla} F_{\beta_i}(\bw^{(t)}; \xi_{i}^{(t)}) \!=\! \sum_{i \in \xi_{i}^{(t)}} \frac{(1+\ell_{i}(\bw^{(t)}))^{\beta_{i}} \nabla \ell_{i}(\bw^{(t)})}{|\xi_{i}^{(t)}|},
\end{align}
iii) The model is updated according to the derived gradient:
\begin{align}\label{eqn:modelupdate}
    \bw_{i}^{(t+1)} = \bw^{(t)} - \gamma_{i}^{(t)} \tilde{\nabla} F_{\beta_i}(\bw^{(t)};\xi_{i}^{(t)}).
\end{align}
After completing these steps for each group, the model is aggregated, finalizing one training round:
\begin{align}\label{eqn:modelaggregation}
    \bw^{(t+1)} = \sum_{i \in \cI} \alpha_{i}\bw_{i}^{(t+1)},
\end{align}
Unlike traditional \textsc{SGD}, \textsc{P-SGD-S} enables parallel updates for the two groups indexed by $\cI$, leveraging computational resources when multiple computing units are available. Furthermore, \textsc{P-SGD-S} can be extended to incorporate multiple steps of model updates between periodic aggregations, similar to \textsc{local SGD}~\cite{ICLR19_Stich,NeurIPS20_Woodworth} to reduce communication overhead in the distributed setting. Extensions of \textsc{P-SGD-S} including momentum~\cite{momentum}, adaptive learning rates~\cite{JMLR11_Duchi}, and stochastic average gradient~\cite{SAGA,SAG} are also possible. However, it is worth noting that convergence analysis for these extensions can be intricate and we leave it as future work.

\begin{algorithm}[t]
\renewcommand{\algorithmicrequire}{\textbf{Input:}}
\renewcommand{\algorithmicensure}{\textbf{Output:}}
\caption{\textsc{Parallel Stochastic Gradient Descent with Surrogate Loss (P-SGD-S)}}\label{alg:P-SGD-S}
\begin{algorithmic}[1]
\REQUIRE learning rate $\gamma_{i}^{(t)}$ for $i \in \cI$, hyperparameters $\alpha_{i}$ and $\beta_{i}$ for $i \in \cI$, and training rounds $T$.
\ENSURE $\{\bw^{(t)}\}_{t=0}^{T}$.
\STATE Initialize $\bw^{(0)} \in \cW$.
\FOR{each round $t = 0, \dots, T-1$}
    \FOR{each $i \in \cI$}
        \STATE Sample $\xi_{i}^{(t)} \subseteq \cS_{i}$ uniformly at random.
        \STATE Obtain stochastic gradient $\tilde{\nabla} F_{\beta_i}(\bw^{(t)})$ via~\eqref{eqn:stochasticgradient}.
    
        \STATE Update the model via~\eqref{eqn:modelupdate}.
    \ENDFOR 
    \STATE Aggregate the model via~\eqref{eqn:modelaggregation}.
\ENDFOR   
\end{algorithmic}
\end{algorithm}

\section{Convergence Analysis}\label{sec:analysis}
In this section, we provide the convergence analysis of \textsc{P-SGD-S}. To maintain the clarity of our explanations, we assume without loss of generality that $\gamma_{i}^{(t)} = \gamma, \forall i \in \cI, t \in \cT$. We begin by making the following assumptions~\cite{SIAM_Bottou}.

\begin{assumption}[Smoothness]\label{assumption:smooth:1}
There exists some positive $L_{0}$ such that $\left\|\nabla L_{(\boldsymbol{\alpha}, \boldsymbol{\beta})}(\bw_0) - \nabla L_{(\boldsymbol{\alpha}, \boldsymbol{\beta})}(\bw_1)\right\| \leq L_{0}\left\|\bw_0 - \bw_1\right\|,  \forall \bw_0, \bw_1 \in \cW.$
\end{assumption}
\begin{remark}
    The smoothness $L_{0}$ is actually dependent on $\boldsymbol{\alpha}$, $\boldsymbol{\beta}$, and the smoothness of the bounded original loss function $\ell$. For simplicity, we represent the smooth of surrogate loss $L_{(\boldsymbol{\alpha}, \boldsymbol{\beta})}$ by $L_{0}$ without explicitly writing down the dependence. 
\end{remark}

\begin{assumption}[Unbiased Local Stochastic Gradient]\label{assumption:unbiased}
The stochastic gradient $\tilde{\nabla} F_{\beta_{i}}(\bw; \xi)$ is an unbiased estimator of the full gradient for any group-specific loss function $F_{\beta_{i}}(\bw)$, i.e., $\bbE[\tilde{\nabla} F_{\beta_{i}}(\bw; \xi)] = \nabla F_{\beta_{i}}(\bw), \, \forall \bw \in \cW, \forall i \in \cI, \forall \xi \in \Xi$, where $\Xi$ denotes the set of all possible mini-batches (i.e., the set of randomness in the sampling process).
\end{assumption}
\begin{remark}
The assumption holds since in \textsc{P-SGD-S}, a mini-batch $\xi_{i}^{(t)}$ is chosen uniformly at random for all $i \in \cI, t \in \cT$, i.e., we have
\begin{align}
\bbE[\tilde{\nabla} F_{\beta_i}(\bw^{(t)})] 
& \!=\! \bbE[\sum_{i \in \xi_{i}^{(t)}} |\xi_{i}^{(t)}|^{-1} (1 \!+\! \ell_{i}(\bw^{(t)}))^{\beta_{i}} \nabla \ell_{i}(\bw^{(t)})] \nonumber \\
& \!=\! \sum_{i \in \cS_{i}} (S_{i})^{-1} (1 \!+\! \ell_{i}(\bw^{(t)}))^{\beta_{i}} \nabla \ell_{i}(\bw^{(t)}) \nonumber \\
& \!=\! \nabla F_{\beta_i}(\bw^{(t)}). \nonumber
\end{align}
\end{remark}

\begin{assumption}[Bounded Stochastic Gradient Variance]\label{assumption:boundedvariance}
There exists a positive $\sigma^{2}$ such that the variance of the stochastic gradient $\tilde{\nabla} F_{\beta_{i}}(\bw; \xi)$ is bounded as $\bbE[\|\tilde{\nabla} F_{\beta_{i}}(\bw; \xi) - \nabla F_{\beta_{i}}(\bw)\|^2] \leq \sigma^{2}, \, \forall \bw \in \cW, \, \forall i \in \cI$.
\end{assumption}

We first provide an important lemma to facilitate our analysis of \textsc{P-SGD-S}.
\begin{lemma}[Equivalence and Expectation of Model Update]\label{lemma:update}
An equivalent expression of the per round update in~\eqref{eqn:modelaggregation} is
\begin{equation}
    \bw^{(t+1)} =  \bw^{(t)} - \gamma\sum_{i \in \cI} \alpha_{i}\tilde{\nabla} F_{\beta_i}(\bw^{(t)}; \xi_{i}^{(t)}), \label{eqn:lemma:update}
\end{equation}
whose expectation is
\begin{equation}
    \bbE[\bw^{(t+1)}] =  \bw^{(t)} - \gamma \nabla L_{(\boldsymbol{\alpha}, \boldsymbol{\beta})}(\bw^{(t)}). \label{eqn:lemma:update:expectation}
\end{equation}
\end{lemma}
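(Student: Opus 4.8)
The plan is to establish the two displayed identities in Lemma~\ref{lemma:update} directly from the update rules, using the definition of the stochastic gradient and the linearity of expectation. This is essentially a bookkeeping argument, so I would keep it short.

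First I would prove~\eqref{eqn:lemma:update}. Starting from the aggregation step~\eqref{eqn:modelaggregation}, I substitute the per-group update~\eqref{eqn:modelupdate} for each $\bw_i^{(t+1)}$:
\begin{align}
\bw^{(t+1)} = \sum_{i \in \cI} \alpha_i \bw_i^{(t+1)} = \sum_{i \in \cI} \alpha_i\bigl(\bw^{(t)} - \gamma\,\tilde{\nabla} F_{\beta_i}(\bw^{(t)}; \xi_i^{(t)})\bigr). \nonumber
\end{align}
Since $\boldsymbol{\alpha} = (\alpha_0,\alpha_1) \in \Delta_1$ is a probability vector, $\sum_{i \in \cI}\alpha_i = 1$, so the $\bw^{(t)}$ terms collect to a single $\bw^{(t)}$, and pulling out $\gamma$ yields exactly~\eqref{eqn:lemma:update}. (Here I have used the simplifying convention $\gamma_i^{(t)} = \gamma$ stated at the start of Section~\ref{sec:analysis}.)

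Next I would take expectations of~\eqref{eqn:lemma:update}, conditioning on $\bw^{(t)}$ (or, equivalently, on the filtration generated by the first $t$ rounds, so that $\bw^{(t)}$ is treated as fixed). By linearity of expectation and the fact that $\bw^{(t)}$ and $\gamma$ are deterministic given the past,
\begin{align}
\bbE[\bw^{(t+1)}] = \bw^{(t)} - \gamma\sum_{i \in \cI}\alpha_i\,\bbE[\tilde{\nabla} F_{\beta_i}(\bw^{(t)}; \xi_i^{(t)})]. \nonumber
\end{align}
By Assumption~\ref{assumption:unbiased} (unbiasedness of the local stochastic gradient, verified in the remark immediately following it via uniform mini-batch sampling), $\bbE[\tilde{\nabla} F_{\beta_i}(\bw^{(t)};\xi_i^{(t)})] = \nabla F_{\beta_i}(\bw^{(t)})$ for each $i \in \cI$. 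Substituting and recognizing that $\sum_{i\in\cI}\alpha_i \nabla F_{\beta_i}(\bw^{(t)}) = \nabla\bigl(\alpha_0 F_{\beta_0}(\bw^{(t)}) + \alpha_1 F_{\beta_1}(\bw^{(t)})\bigr) = \nabla L_{(\boldsymbol{\alpha},\boldsymbol{\beta})}(\bw^{(t)})$ by the definition~\eqref{def:alphabetafair} and linearity of the gradient, I obtain~\eqref{eqn:lemma:update:expectation}.

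There is no real obstacle here; the only points requiring any care are (i) making explicit that the expectation in~\eqref{eqn:lemma:update:expectation} is conditional on $\bw^{(t)}$ (otherwise $\bw^{(t)}$ on the right-hand side would itself need to be inside an expectation), and (ii) invoking $\sum_i \alpha_i = 1$ at the right moment so the constant term does not pick up a spurious factor. Both are immediate from the setup. I would state the lemma's proof in two or three lines accordingly.
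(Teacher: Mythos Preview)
Your proposal is correct and follows essentially the same approach as the paper's proof: substitute~\eqref{eqn:modelupdate} into~\eqref{eqn:modelaggregation} to get~\eqref{eqn:lemma:update}, then take expectations and apply Assumption~\ref{assumption:unbiased} to get~\eqref{eqn:lemma:update:expectation}. Your version is simply more explicit about the use of $\sum_{i}\alpha_i = 1$ and the conditional nature of the expectation, both of which the paper leaves implicit.
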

\begin{proof}
Substituting the update rule in~\eqref{eqn:modelupdate} into the aggregation rule in~\eqref{eqn:modelaggregation}, we can obtain~\eqref{eqn:lemma:update}. Taking expectation on  both sides of~\eqref{eqn:lemma:update} and applying Assumption~\ref{assumption:unbiased}, we obtain~\eqref{eqn:lemma:update:expectation}.  
\end{proof}

In the following two subsections, we analyze the two scenarios involving convex and nonconvex losses, respectively.
\subsection{Convex Loss}
For this scenario, we make an additional assumption regarding the convexity of the original loss functions. Each loss function $\ell$ is convex, $\ell(\bw_1) \geq \ell(\bw_0) + \nabla \ell(\bw_0)^{T}(\bw_1-\bw_0)$, holds for $\forall \bw_0, \bw_1 \in \cW$. The surrogate loss 
$L_{(\boldsymbol{\alpha}, \boldsymbol{\beta})}(\bw)$ is also convex as it is a nonnegative weighted sum of the $\beta_{i}$-fair surrogate loss functions, each of which is also convex due to the facts that $\beta_{i} \geq 0$ holds $\forall i \in \cI$ and $\ell$ is nonnegative and convex.

\begin{theorem}[Convergence Analysis for Convex Loss]\label{thm:cvx}
If Assumptions~\ref{assumption:smooth:1}-\ref{assumption:boundedvariance} hold and all $\ell$ loss functions are convex, with $\eta \leq \frac{1}{L_{0}}$, we obtain the output $\{\bw^{(t)}\}_{t=0}^{T}$ from Algorithm~\ref{alg:P-SGD-S} satisfies
\begin{align}
    & \bbE[L_{(\boldsymbol{\alpha}, \boldsymbol{\beta})}(\hat{\bw})] - L_{(\boldsymbol{\alpha}, \boldsymbol{\beta})}(\bw^{*})  \leq \frac{1}{2\eta}(\|\bw^{(0)} - \bw^{*}\|^2 ) + \eta \sigma^{2}, \nonumber
\end{align}
where we define $\hat{\bw} = \frac{1}{T} \sum_{t=0}^{T-1} \bw^{(t)}$ and $\bw^{*} = \arg\min_{\bw \in \cW} L_{(\boldsymbol{\alpha}, \boldsymbol{\beta})}(\bw)$.
\end{theorem}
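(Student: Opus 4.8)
The plan is to run the standard SGD descent-lemma argument, but tracked against the fixed objective $L_{(\boldsymbol{\alpha},\boldsymbol{\beta})}$ rather than against a changing iterate-specific loss, which is exactly what makes the telescoping work here. First I would invoke Lemma~\ref{lemma:update} to write the per-round update as $\bw^{(t+1)} = \bw^{(t)} - \gamma\, g^{(t)}$, where $g^{(t)} = \sum_{i\in\cI}\alpha_i \tilde{\nabla} F_{\beta_i}(\bw^{(t)};\xi_i^{(t)})$ is an unbiased estimator of $\nabla L_{(\boldsymbol{\alpha},\boldsymbol{\beta})}(\bw^{(t)})$ (Assumption~\ref{assumption:unbiased}), with variance controlled by $\sigma^2$ (Assumption~\ref{assumption:boundedvariance}, combined with the convexity of $\Delta_1$ so that the aggregated noise variance is at most $\sigma^2$). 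The $L_0$-smoothness of $L_{(\boldsymbol{\alpha},\boldsymbol{\beta})}$ (Assumption~\ref{assumption:smooth:1}) then gives the descent inequality $L_{(\boldsymbol{\alpha},\boldsymbol{\beta})}(\bw^{(t+1)}) \le L_{(\boldsymbol{\alpha},\boldsymbol{\beta})}(\bw^{(t)}) - \gamma \langle \nabla L_{(\boldsymbol{\alpha},\boldsymbol{\beta})}(\bw^{(t)}), g^{(t)}\rangle + \tfrac{L_0\gamma^2}{2}\|g^{(t)}\|^2$.

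Next I would take the conditional expectation given $\bw^{(t)}$: the cross term becomes $-\gamma\|\nabla L_{(\boldsymbol{\alpha},\boldsymbol{\beta})}(\bw^{(t)})\|^2$, and $\bbE\|g^{(t)}\|^2 = \|\nabla L_{(\boldsymbol{\alpha},\boldsymbol{\beta})}(\bw^{(t)})\|^2 + \bbE\|g^{(t)} - \nabla L_{(\boldsymbol{\alpha},\boldsymbol{\beta})}(\bw^{(t)})\|^2 \le \|\nabla L_{(\boldsymbol{\alpha},\boldsymbol{\beta})}(\bw^{(t)})\|^2 + \sigma^2$. Using $\eta\le 1/L_0$ (so $\tfrac{L_0\gamma^2}{2}\le \tfrac{\gamma}{2}$) collapses the gradient-norm terms into $-\tfrac{\gamma}{2}\|\nabla L_{(\boldsymbol{\alpha},\boldsymbol{\beta})}(\bw^{(t)})\|^2$, leaving $\bbE[L_{(\boldsymbol{\alpha},\boldsymbol{\beta})}(\bw^{(t+1)})\mid \bw^{(t)}] \le L_{(\boldsymbol{\alpha},\boldsymbol{\beta})}(\bw^{(t)}) - \tfrac{\gamma}{2}\|\nabla L_{(\boldsymbol{\alpha},\boldsymbol{\beta})}(\bw^{(t)})\|^2 + \tfrac{L_0\gamma^2}{2}\sigma^2$.

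To turn this into a bound on the \emph{function-value} gap I would bring in convexity of $L_{(\boldsymbol{\alpha},\boldsymbol{\beta})}$ (established in the paragraph preceding the theorem). The cleanest route is the one-step contraction on squared distance to $\bw^*$: expand $\bbE\|\bw^{(t+1)}-\bw^*\|^2 = \|\bw^{(t)}-\bw^*\|^2 - 2\gamma\langle \nabla L_{(\boldsymbol{\alpha},\boldsymbol{\beta})}(\bw^{(t)}), \bw^{(t)}-\bw^*\rangle + \gamma^2\bbE\|g^{(t)}\|^2$, bound the inner product below by $L_{(\boldsymbol{\alpha},\boldsymbol{\beta})}(\bw^{(t)}) - L_{(\boldsymbol{\alpha},\boldsymbol{\beta})}(\bw^*)$ via convexity, and bound $\gamma^2\bbE\|g^{(t)}\|^2$ using the descent step above (which gives $\gamma^2\bbE\|g^{(t)}\|^2 \le 2\gamma(\bbE L_{(\boldsymbol{\alpha},\boldsymbol{\beta})}(\bw^{(t)}) - \bbE L_{(\boldsymbol{\alpha},\boldsymbol{\beta})}(\bw^{(t+1)})) + L_0\gamma^3\sigma^2$, or more simply bound it by $\gamma^2(\|\nabla L_{(\boldsymbol{\alpha},\boldsymbol{\beta})}(\bw^{(t)})\|^2+\sigma^2)$ and absorb the gradient-norm term using smoothness $\|\nabla L_{(\boldsymbol{\alpha},\boldsymbol{\beta})}(\bw^{(t)})\|^2 \le 2L_0(L_{(\boldsymbol{\alpha},\boldsymbol{\beta})}(\bw^{(t)}) - L_{(\boldsymbol{\alpha},\boldsymbol{\beta})}(\bw^*))$ together with $\gamma L_0\le 1$). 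Either way one arrives at $2\gamma(\bbE L_{(\boldsymbol{\alpha},\boldsymbol{\beta})}(\bw^{(t)}) - L_{(\boldsymbol{\alpha},\boldsymbol{\beta})}(\bw^*)) \le \bbE\|\bw^{(t)}-\bw^*\|^2 - \bbE\|\bw^{(t+1)}-\bw^*\|^2 + 2\gamma^2\sigma^2$. Summing over $t=0,\dots,T-1$ telescopes the distance terms, dropping the final nonnegative $\bbE\|\bw^{(T)}-\bw^*\|^2$; dividing by $2\gamma T$ and applying Jensen's inequality to $\hat{\bw}=\tfrac1T\sum_t \bw^{(t)}$ yields $\bbE[L_{(\boldsymbol{\alpha},\boldsymbol{\beta})}(\hat{\bw})] - L_{(\boldsymbol{\alpha},\boldsymbol{\beta})}(\bw^*) \le \tfrac{1}{2\gamma T}\|\bw^{(0)}-\bw^*\|^2 + \gamma\sigma^2$, which with $\gamma=\eta$ is the claimed bound (note the stated inequality has $\tfrac{1}{2\eta}$ rather than $\tfrac{1}{2\eta T}$, presumably a typo, or a different normalization of $\hat{\bw}$).

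The main obstacle is the bookkeeping on the variance of the \emph{aggregated} stochastic gradient $g^{(t)}$: Assumption~\ref{assumption:boundedvariance} bounds each group's variance, and since the two mini-batches are drawn independently and $(\alpha_0,\alpha_1)\in\Delta_1$, one needs $\sum_i \alpha_i^2 \sigma^2 \le (\sum_i\alpha_i)\sigma^2 = \sigma^2$ (or an analogous convexity/Jensen step) to keep the constant exactly $\sigma^2$ rather than something larger — this is the one place where the simplex constraint on $\boldsymbol{\alpha}$ is essential and should be made explicit. Everything else is the textbook smooth-convex SGD computation; the only conceptual point worth flagging is that, unlike the $q$-FFL analysis, here the objective being descended and the objective in the guarantee are the same function $L_{(\boldsymbol{\alpha},\boldsymbol{\beta})}$, so no surrogate-gap argument is needed.
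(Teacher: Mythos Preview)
Your proposal is correct and follows essentially the same route as the paper: smoothness descent lemma $\to$ expectation via Assumption~\ref{assumption:unbiased} and Assumption~\ref{assumption:boundedvariance} $\to$ one-step contraction on $\|\bw^{(t)}-\bw^{*}\|^2$ with the convexity bound on the inner product, then feed the descent inequality back in to absorb the $\gamma^2\|\nabla L\|^2$ term, telescope, drop the final distance, and apply Jensen to $\hat{\bw}$. Your first of the two suggested routes for handling $\gamma^2\bbE\|g^{(t)}\|^2$ (substituting the descent bound) is exactly what the paper does; your observation about the missing $1/T$ in the stated bound is also correct (the paper's own proof ends with $\tfrac{1}{2\gamma T}\|\bw^{(0)}-\bw^{*}\|^2$), and your variance-bookkeeping remark on the aggregated gradient is in fact more explicit than the paper, which silently writes $\bbE\|g^{(t)}-\nabla L\|^2 \le \sigma^2$ without invoking $\sum_i \alpha_i^2 \le 1$.
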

\begin{proof}
    See Appendix~\ref{appendix:thm1}.
\end{proof}
\begin{corollary}
Let $\eta = \sqrt{\frac{\|\bw^{(0)} - \bw^{*}\|^2 }{2T \sigma^{2}}}$, we have 
\begin{align}
& \bbE[L_{(\boldsymbol{\alpha}, \boldsymbol{\beta})}(\frac{1}{T} \sum_{t=0}^{T-1} \bw^{(t)})] - L_{(\boldsymbol{\alpha}, \boldsymbol{\beta})}(\bw^{*}) \leq \sqrt{\frac{\|\bw^{(0)} - \bw^{*}\|^2 \sigma^{2} }{2T}}. \nonumber
\end{align} 
We observe that it has the same convergence rate $\cO(\frac{1}{\sqrt{T}})$ as \textsc{SGD} for smooth, convex functions~\cite{SIAM_Bottou}.
\end{corollary}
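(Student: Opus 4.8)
The plan is to obtain the corollary as an immediate consequence of Theorem~\ref{thm:cvx}, with no fresh descent argument required: I would simply evaluate the bound already proved there at the prescribed step size and simplify. Writing $D \defeq \|\bw^{(0)} - \bw^{*}\|^2$ for brevity, Theorem~\ref{thm:cvx} furnishes the estimate $\bbE[L_{(\boldsymbol{\alpha}, \boldsymbol{\beta})}(\hat{\bw})] - L_{(\boldsymbol{\alpha}, \boldsymbol{\beta})}(\bw^{*}) \le \frac{D}{2\eta} + \eta\sigma^2$, valid for every admissible $\eta \le 1/L_0$, with $\hat{\bw} = \frac{1}{T}\sum_{t=0}^{T-1}\bw^{(t)}$. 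The right-hand side has the generic shape $a/\eta + b\eta$, a strictly convex function of $\eta$ on $(0,\infty)$.

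First I would record the elementary fact that, by AM--GM, $a/\eta + b\eta \ge 2\sqrt{ab}$ with equality precisely at $\eta_{\star} = \sqrt{a/b}$; this identifies the prescribed $\eta = \sqrt{D/(2T\sigma^2)}$ as the choice that balances the optimization-error and variance contributions against each other. Substituting this value is then purely mechanical: each of the two summands reduces to a term of order $\sqrt{D\sigma^2/(2T)}$, and adding them yields the advertised bound $\sqrt{\|\bw^{(0)} - \bw^{*}\|^2\sigma^2/(2T)}$, from which the rate $\cO(1/\sqrt{T})$ and the stated comparison with vanilla \textsc{SGD} follow at once. I would not belabor this arithmetic, as it is routine.

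The single point demanding genuine attention -- and the main, though mild, obstacle -- is admissibility. Theorem~\ref{thm:cvx} is proved only under the constraint $\eta \le 1/L_0$, whereas the prescribed $\eta = \sqrt{D/(2T\sigma^2)}$ is a decreasing function of $T$. I would therefore check that this choice satisfies the hypothesis for all $T$ beyond a threshold, namely whenever $T \ge L_0^2 D/(2\sigma^2)$, and state the corollary for such sufficiently large $T$ -- precisely the asymptotic regime in which an $\cO(1/\sqrt{T})$ rate is meaningful. Once this is noted, invoking Theorem~\ref{thm:cvx} at the given $\eta$ completes the argument.
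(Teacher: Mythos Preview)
Your overall plan---substitute the prescribed step size into the bound of Theorem~\ref{thm:cvx} and simplify---is exactly how the corollary is obtained, and the paper provides no more than that. However, the arithmetic you sketch does not check out against the bound you actually quote. You write the theorem's conclusion as $D/(2\eta) + \eta\sigma^2$, which has no explicit $T$-dependence. With $a = D/2$ and $b = \sigma^2$, the AM--GM balancer is $\eta_\star = \sqrt{a/b} = \sqrt{D/(2\sigma^2)}$, \emph{not} $\sqrt{D/(2T\sigma^2)}$, so the prescribed $\eta$ is not the minimizer of the expression you wrote down. Worse, plugging $\eta = \sqrt{D/(2T\sigma^2)}$ into $D/(2\eta)$ yields $\sqrt{DT\sigma^2/2}$, which \emph{grows} with $T$; your claim that ``each of the two summands reduces to a term of order $\sqrt{D\sigma^2/(2T)}$'' is therefore false for the first summand as you have written it.

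The root cause is a typo in the statement of Theorem~\ref{thm:cvx}: the proof in Appendix~\ref{appendix:thm1} actually derives
\[
\bbE\bigl[L_{(\boldsymbol{\alpha}, \boldsymbol{\beta})}(\hat{\bw})\bigr] - L_{(\boldsymbol{\alpha}, \boldsymbol{\beta})}(\bw^{*}) \;\le\; \frac{D}{2\eta T} + \eta\sigma^2,
\]
with a factor of $T$ in the denominator of the first term. With this corrected bound your argument goes through verbatim: the prescribed $\eta = \sqrt{D/(2T\sigma^2)}$ is indeed the balancer, both summands equal $\sqrt{D\sigma^2/(2T)}$, and their sum is $2\sqrt{D\sigma^2/(2T)} = \sqrt{2D\sigma^2/T}$ (the paper's stated constant is off by a harmless factor, but the $\cO(1/\sqrt{T})$ rate is correct). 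Your remark on admissibility---that $\eta \le 1/L_0$ holds once $T \ge L_0^2 D/(2\sigma^2)$---is correct and is a useful caveat the paper omits.
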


\subsection{Nonconvex Loss}
In modern ML such as neural network training, the loss functions are typically nonconvex. In this scenario, we relax the assumption of convexity and provide convergence analysis to a stationary point~\cite{SIAM_Bottou}.
\begin{theorem}[Convergence Analysis for Nonconvex Loss]\label{thm:noncvx}
If Assumptions~\ref{assumption:smooth:1}-\ref{assumption:boundedvariance} holds and $\eta \leq \frac{1}{L_{0}}$, then the output $\{\bw^{(t)}\}_{t=0}^{T}$ from Algorithm~\ref{alg:P-SGD-S} satisfies
\begin{align}
& \frac{1}{T}\sum_{t=0}^{T-1}\left\|\nabla L_{(\boldsymbol{\alpha}, \boldsymbol{\beta})}(\bw^{(t)})\right\|^{2} \nonumber \\
& \leq \frac{2}{\eta}\left(L_{(\boldsymbol{\alpha}, \boldsymbol{\beta})}(\bw^{(0)})-L^{*} \right)
+L_{0} \eta T \sigma^{2},
\end{align}
where $L^{*} = \min_{\bw \in \cW} L_{(\boldsymbol{\alpha}, \boldsymbol{\beta})}(w)$.
\end{theorem}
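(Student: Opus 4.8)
The plan is to mimic the standard descent-lemma argument for SGD on smooth nonconvex functions, now applied to the surrogate objective $L_{(\boldsymbol{\alpha}, \boldsymbol{\beta})}$ using the reformulation from Lemma~\ref{lemma:update}. First I would write $\bw^{(t+1)} = \bw^{(t)} - \gamma g^{(t)}$ where $g^{(t)} = \sum_{i \in \cI} \alpha_{i}\tilde{\nabla} F_{\beta_i}(\bw^{(t)}; \xi_{i}^{(t)})$, and note that by Assumption~\ref{assumption:unbiased} we have $\bbE[g^{(t)} \mid \bw^{(t)}] = \nabla L_{(\boldsymbol{\alpha}, \boldsymbol{\beta})}(\bw^{(t)})$. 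Applying the $L_0$-smoothness inequality (Assumption~\ref{assumption:smooth:1}) gives
\begin{align}
L_{(\boldsymbol{\alpha}, \boldsymbol{\beta})}(\bw^{(t+1)}) \leq L_{(\boldsymbol{\alpha}, \boldsymbol{\beta})}(\bw^{(t)}) - \gamma \langle \nabla L_{(\boldsymbol{\alpha}, \boldsymbol{\beta})}(\bw^{(t)}), g^{(t)}\rangle + \frac{L_0 \gamma^2}{2}\|g^{(t)}\|^2. \nonumber
\end{align}
Taking conditional expectation, the cross term becomes $-\gamma \|\nabla L_{(\boldsymbol{\alpha}, \boldsymbol{\beta})}(\bw^{(t)})\|^2$, and I would bound $\bbE[\|g^{(t)}\|^2] \leq \|\nabla L_{(\boldsymbol{\alpha}, \boldsymbol{\beta})}(\bw^{(t)})\|^2 + \sigma^2$ using the bias-variance split together with the variance bound of Assumption~\ref{assumption:boundedvariance}; here one should be a little careful, since $g^{(t)}$ is a convex combination of the two group gradients, so the variance of $g^{(t)}$ is $\sum_i \alpha_i^2 \mathrm{Var}(\tilde\nabla F_{\beta_i})$ (the two mini-batches being independent), which is at most $\sigma^2$ because $\sum_i \alpha_i^2 \leq \sum_i \alpha_i = 1$ on the simplex. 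This yields the per-step bound
\begin{align}
\bbE[L_{(\boldsymbol{\alpha}, \boldsymbol{\beta})}(\bw^{(t+1)})] \leq \bbE[L_{(\boldsymbol{\alpha}, \boldsymbol{\beta})}(\bw^{(t)})] - \gamma\left(1 - \frac{L_0\gamma}{2}\right)\bbE[\|\nabla L_{(\boldsymbol{\alpha}, \boldsymbol{\beta})}(\bw^{(t)})\|^2] + \frac{L_0\gamma^2\sigma^2}{2}. \nonumber
\end{align}

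Next I would use the step-size condition $\eta \leq 1/L_0$ (with $\eta = \gamma$) to ensure $1 - L_0\gamma/2 \geq 1/2$, so that the coefficient on the gradient-norm term is at least $\gamma/2$. Rearranging gives $\frac{\gamma}{2}\bbE[\|\nabla L_{(\boldsymbol{\alpha}, \boldsymbol{\beta})}(\bw^{(t)})\|^2] \leq \bbE[L_{(\boldsymbol{\alpha}, \boldsymbol{\beta})}(\bw^{(t)})] - \bbE[L_{(\boldsymbol{\alpha}, \boldsymbol{\beta})}(\bw^{(t+1)})] + \frac{L_0\gamma^2\sigma^2}{2}$. Summing over $t = 0, \dots, T-1$, the right-hand side telescopes to $L_{(\boldsymbol{\alpha}, \boldsymbol{\beta})}(\bw^{(0)}) - \bbE[L_{(\boldsymbol{\alpha}, \boldsymbol{\beta})}(\bw^{(T)})] + \frac{T L_0 \gamma^2 \sigma^2}{2} \leq L_{(\boldsymbol{\alpha}, \boldsymbol{\beta})}(\bw^{(0)}) - L^* + \frac{T L_0 \gamma^2 \sigma^2}{2}$, since $L_{(\boldsymbol{\alpha}, \boldsymbol{\beta})}(\bw^{(T)}) \geq L^*$. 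Dividing through by $\gamma T / 2$ and substituting $\gamma = \eta$ produces exactly
\begin{align}
\frac{1}{T}\sum_{t=0}^{T-1}\bbE[\|\nabla L_{(\boldsymbol{\alpha}, \boldsymbol{\beta})}(\bw^{(t)})\|^2] \leq \frac{2}{\eta}\left(L_{(\boldsymbol{\alpha}, \boldsymbol{\beta})}(\bw^{(0)}) - L^*\right) + L_0 \eta T \sigma^2, \nonumber
\end{align}
which is the claimed bound (the theorem statement writes the left side without the expectation, matching the convention of taking $\bw^{(t)}$ as the algorithm's random iterates).

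I expect the only genuinely delicate step to be the variance bound on the aggregated gradient $g^{(t)}$: one must justify that combining the two per-group stochastic gradients with simplex weights $\alpha_i$, drawn from independent mini-batches, does not inflate the variance beyond $\sigma^2$ — this is where $\sum_i \alpha_i^2 \leq 1$ enters, and it also implicitly needs that $\bbE[\|g^{(t)} - \nabla L_{(\boldsymbol{\alpha}, \boldsymbol{\beta})}(\bw^{(t)})\|^2] = \sum_i \alpha_i^2 \bbE[\|\tilde\nabla F_{\beta_i} - \nabla F_{\beta_i}\|^2]$ by independence and unbiasedness (cross terms vanish). Everything else is the textbook telescoping argument; the smoothness constant $L_0$ is taken as given by Assumption~\ref{assumption:smooth:1}, so no effort is spent computing it from $\boldsymbol{\alpha}$, $\boldsymbol{\beta}$, and the smoothness of $\ell$. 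The full details will be deferred to the appendix.
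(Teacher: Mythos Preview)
Your proposal is correct and follows essentially the same route as the paper's proof: both apply the $L_0$-smoothness descent inequality, use Assumption~\ref{assumption:unbiased} to turn the cross term into $-\gamma\|\nabla L_{(\boldsymbol{\alpha},\boldsymbol{\beta})}(\bw^{(t)})\|^2$, perform the bias--variance split on $\bbE\|g^{(t)}\|^2$, invoke $\gamma \le 1/L_0$ to absorb the $\frac{L_0\gamma^2}{2}\|\nabla L\|^2$ term, and then telescope and lower-bound $L_{(\boldsymbol{\alpha},\boldsymbol{\beta})}(\bw^{(T)})$ by $L^*$. Your treatment of the aggregated variance via $\sum_i \alpha_i^2 \le 1$ and independence of the two mini-batches is in fact more explicit than the paper, which simply cites Assumption~\ref{assumption:boundedvariance} at that step; note also that after dividing by $\gamma T/2$ you actually obtain the tighter bound $\frac{2}{\eta T}(L_{(\boldsymbol{\alpha},\boldsymbol{\beta})}(\bw^{(0)})-L^*) + L_0\eta\sigma^2$, which is exactly what the paper's appendix derives and which of course implies the (looser) inequality displayed in the theorem statement.
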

\begin{proof}
    See Appendix~\ref{appendix:thm2}.
\end{proof}
\begin{corollary}
Let $\eta = \sqrt{\frac{2\left(L_{(\boldsymbol{\alpha}, \boldsymbol{\beta})}(\bw^{(0)})-L^{*} \right)}{L_{0} T \sigma^{2}}}$, we have 
\begin{align}
& \frac{1}{T}\sum_{t=0}^{T-1}\left\|\nabla L_{(\boldsymbol{\alpha}, \boldsymbol{\beta})}(\bw^{(t)})\right\|^{2} \leq \sqrt{\frac{2L_{0}\sigma^{2} \left(L_{(\boldsymbol{\alpha}, \boldsymbol{\beta})}(\bw^{(0)})-L^{*} \right)}{T }}. \nonumber
\end{align} 
We observe that it has the same convergence rate $\cO(\frac{1}{\sqrt{T}})$ as \textsc{SGD} for smooth, nonconvex functions~\cite{SIAM_Bottou}.
\end{corollary}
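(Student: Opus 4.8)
The plan is to run the standard descent-lemma argument for SGD in the nonconvex, smooth setting, using Lemma~\ref{lemma:update} to identify the per-round update as an unbiased stochastic gradient step on $L_{(\boldsymbol{\alpha}, \boldsymbol{\beta})}$. First I would apply the $L_0$-smoothness of $L_{(\boldsymbol{\alpha}, \boldsymbol{\beta})}$ (Assumption~\ref{assumption:smooth:1}) in its descent-lemma form to consecutive iterates: writing $\bw^{(t+1)} - \bw^{(t)} = -\gamma g^{(t)}$ with $g^{(t)} = \sum_{i \in \cI} \alpha_i \tilde{\nabla} F_{\beta_i}(\bw^{(t)}; \xi_i^{(t)})$, we get
\begin{align}
L_{(\boldsymbol{\alpha}, \boldsymbol{\beta})}(\bw^{(t+1)}) \leq L_{(\boldsymbol{\alpha}, \boldsymbol{\beta})}(\bw^{(t)}) - \gamma \langle \nabla L_{(\boldsymbol{\alpha}, \boldsymbol{\beta})}(\bw^{(t)}), g^{(t)} \rangle + \frac{L_0 \gamma^2}{2}\|g^{(t)}\|^2. \nonumber
\end{align}
Taking conditional expectation over the mini-batch randomness at round $t$ and using Assumption~\ref{assumption:unbiased} (so that $\bbE[g^{(t)}] = \nabla L_{(\boldsymbol{\alpha}, \boldsymbol{\beta})}(\bw^{(t)})$, which is exactly the content of~\eqref{eqn:lemma:update:expectation}), the inner-product term becomes $-\gamma \|\nabla L_{(\boldsymbol{\alpha}, \boldsymbol{\beta})}(\bw^{(t)})\|^2$.

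Next I would bound $\bbE\|g^{(t)}\|^2$. The clean way is to split $g^{(t)}$ into its mean plus noise: $\bbE\|g^{(t)}\|^2 = \|\nabla L_{(\boldsymbol{\alpha}, \boldsymbol{\beta})}(\bw^{(t)})\|^2 + \bbE\|g^{(t)} - \nabla L_{(\boldsymbol{\alpha}, \boldsymbol{\beta})}(\bw^{(t)})\|^2$. For the variance term, $g^{(t)} - \nabla L_{(\boldsymbol{\alpha}, \boldsymbol{\beta})}(\bw^{(t)}) = \sum_{i} \alpha_i (\tilde{\nabla} F_{\beta_i}(\bw^{(t)};\xi_i^{(t)}) - \nabla F_{\beta_i}(\bw^{(t)}))$, and since the two groups' mini-batches are drawn independently, the cross terms vanish in expectation, giving $\sum_i \alpha_i^2 \bbE\|\tilde{\nabla} F_{\beta_i} - \nabla F_{\beta_i}\|^2 \leq \sum_i \alpha_i^2 \sigma^2 \leq \sigma^2$ by Assumption~\ref{assumption:boundedvariance} and $\sum_i \alpha_i \le 1$ with $\alpha_i \in [0,1]$. (Even without the independence observation, convexity of $\|\cdot\|^2$ and $\boldsymbol{\alpha} \in \Delta_1$ gives the crude bound $\sigma^2$.) Substituting back and using $\eta \le 1/L_0$ so that $-\gamma + L_0\gamma^2/2 \le -\gamma/2$, we arrive at
\begin{align}
\bbE[L_{(\boldsymbol{\alpha}, \boldsymbol{\beta})}(\bw^{(t+1)})] \leq \bbE[L_{(\boldsymbol{\alpha}, \boldsymbol{\beta})}(\bw^{(t)})] - \frac{\gamma}{2}\bbE\|\nabla L_{(\boldsymbol{\alpha}, \boldsymbol{\beta})}(\bw^{(t)})\|^2 + \frac{L_0 \gamma^2 \sigma^2}{2}. \nonumber
\end{align}

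Finally I would telescope this inequality from $t=0$ to $T-1$, use $L_{(\boldsymbol{\alpha}, \boldsymbol{\beta})}(\bw^{(T)}) \geq L^*$, divide through by $\gamma T / 2$, and rearrange to get $\frac{1}{T}\sum_{t=0}^{T-1}\bbE\|\nabla L_{(\boldsymbol{\alpha}, \boldsymbol{\beta})}(\bw^{(t)})\|^2 \leq \frac{2}{\gamma T}(L_{(\boldsymbol{\alpha}, \boldsymbol{\beta})}(\bw^{(0)}) - L^*) + L_0 \gamma \sigma^2$; writing $\eta = \gamma$ yields exactly the claimed bound (the stated theorem drops the expectation on the left, which is the usual minor abuse when the left side is itself an average). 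I do not expect a genuine obstacle here — the argument is the textbook SGD nonconvex analysis — but the one point requiring care is the second-moment bound on the aggregated gradient $g^{(t)}$: one must be explicit that the reweighting by $\boldsymbol{\alpha} \in \Delta_1$ together with the per-group variance bound keeps the aggregate variance at most $\sigma^2$, and that smoothness of the aggregate $L_{(\boldsymbol{\alpha},\boldsymbol{\beta})}$ (as opposed to of each $F_{\beta_i}$ separately) is what Assumption~\ref{assumption:smooth:1} supplies, so the descent lemma applies directly to the aggregate without needing to control the $\bw_i^{(t+1)}$ separately.
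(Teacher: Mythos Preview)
Your proposal is correct and essentially identical to the paper's own argument: the paper's proof of Theorem~\ref{thm:noncvx} (from which the corollary follows by direct substitution of the given $\eta$) applies the $L_0$-smoothness descent inequality to $\bw^{(t+1)}-\bw^{(t)}$, takes expectation using Assumption~\ref{assumption:unbiased} and the mean-plus-noise decomposition, bounds the aggregate variance by $\sigma^2$ via Assumption~\ref{assumption:boundedvariance}, uses $\gamma \le 1/L_0$ to get the $-\gamma/2$ coefficient, and then telescopes --- exactly the steps you outline. Your explicit justification of why the $\boldsymbol{\alpha}$-weighted aggregate variance stays below $\sigma^2$ (via independence of the two groups' mini-batches or convexity of $\|\cdot\|^2$ with $\boldsymbol{\alpha}\in\Delta_1$) is in fact more careful than the paper's, which simply cites Assumption~\ref{assumption:boundedvariance} at that step.
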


\section{Experiments}\label{sec:experiments}
\begin{figure*}[tb!]\label{fig:adult:cvx:comparison}
     \centering
     \begin{subfigure}[b]{0.24\textwidth}
         \centering
         \includegraphics[width=\textwidth]{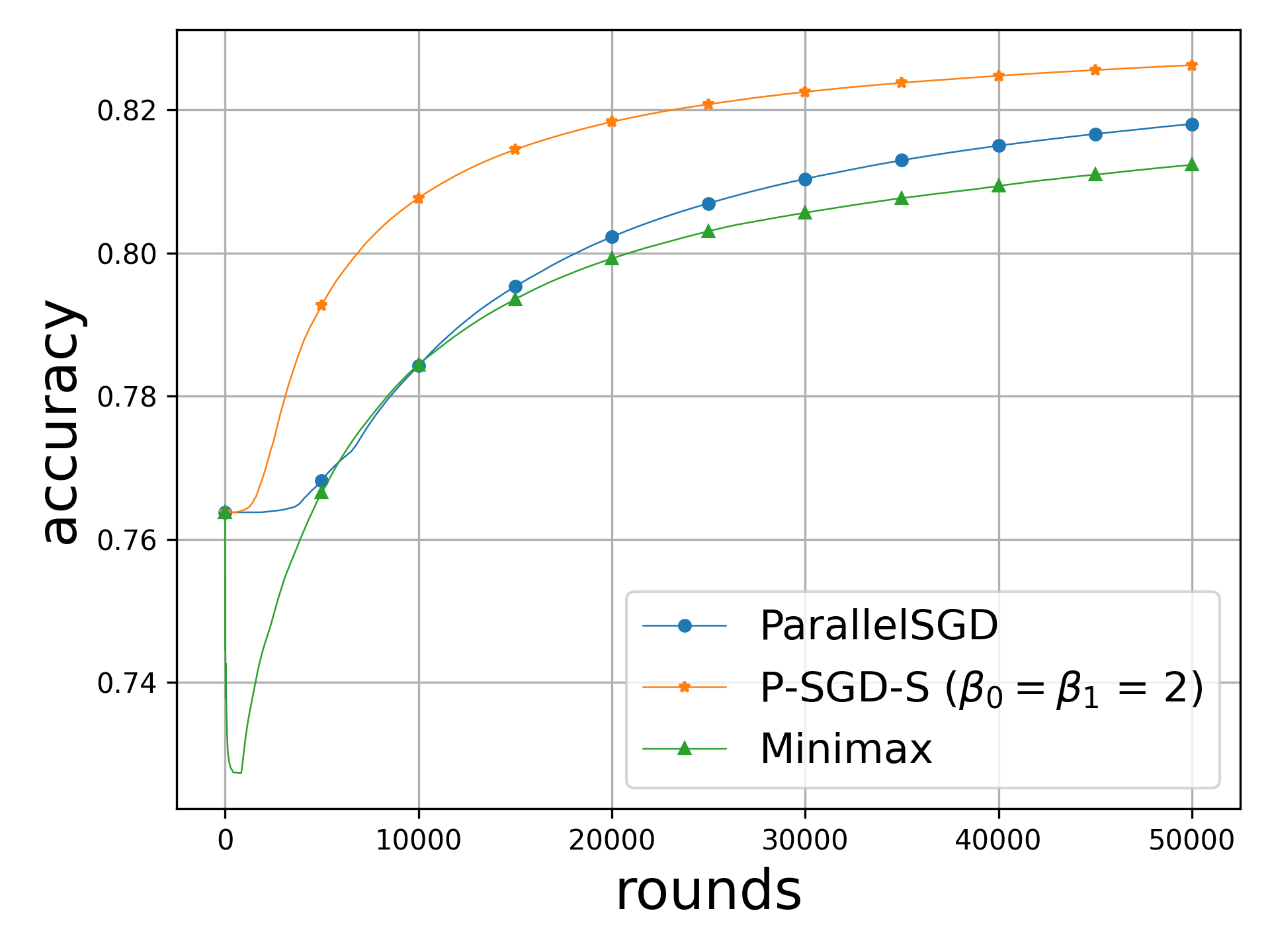}
         \caption{Overall test accuracy}
         \label{fig:adult:cvx:comparison:accuracy}
     \end{subfigure}
     \hfill
     \begin{subfigure}[b]{0.24\textwidth}
         \centering
         \includegraphics[width=\textwidth]{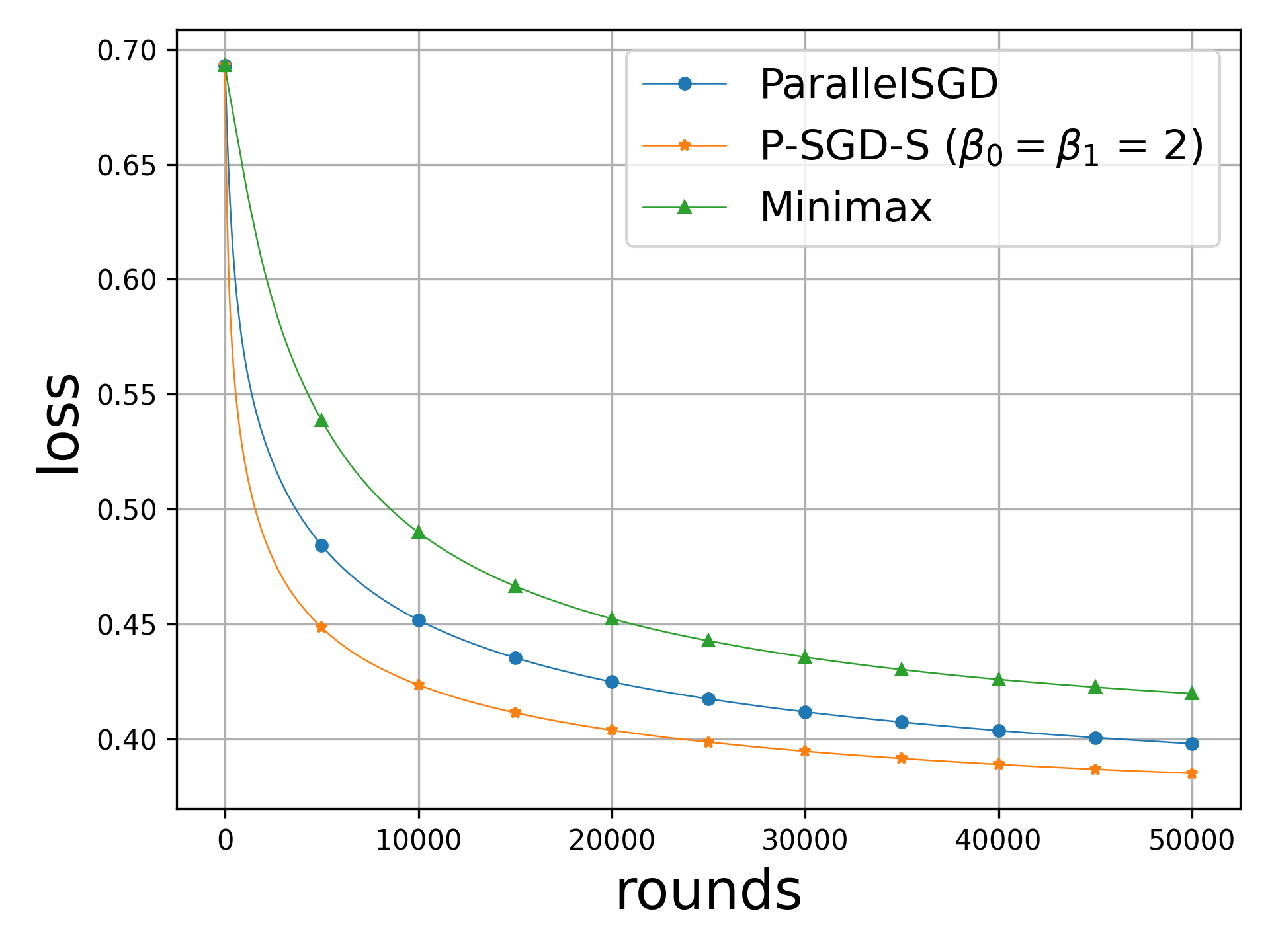}
         \caption{Overall test loss}
         \label{fig:adult:cvx:comparison:loss}
     \end{subfigure}
     \hfill
     \begin{subfigure}[b]{0.24\textwidth}
         \centering
         \includegraphics[width=\textwidth]{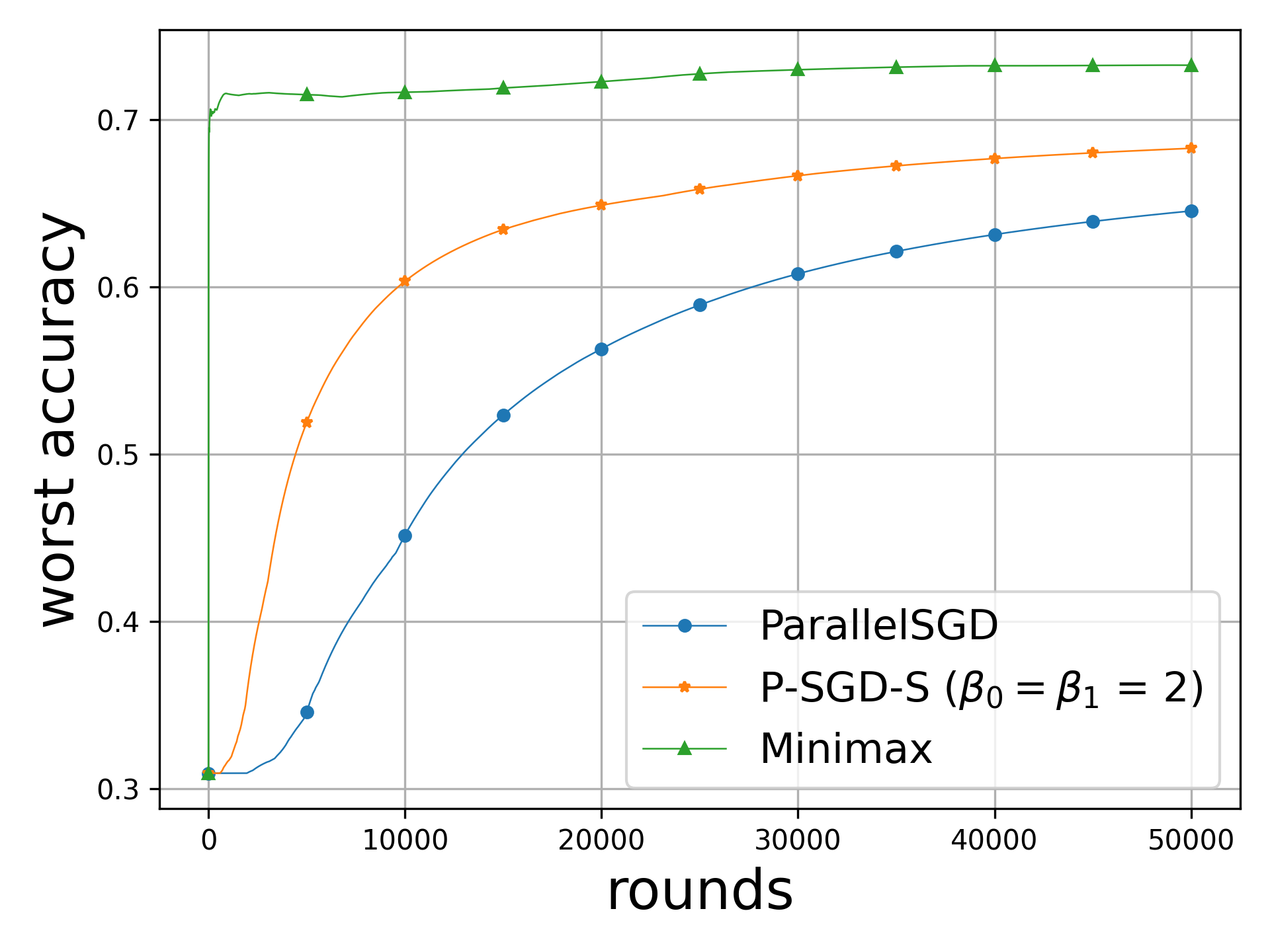}
         \caption{Worst Accuracy}
         \label{fig:adult:cvx:comparison:worstaccuracy}
     \end{subfigure}
     \hfill
     \begin{subfigure}[b]{0.24\textwidth}
         \centering
         \includegraphics[width=\textwidth]{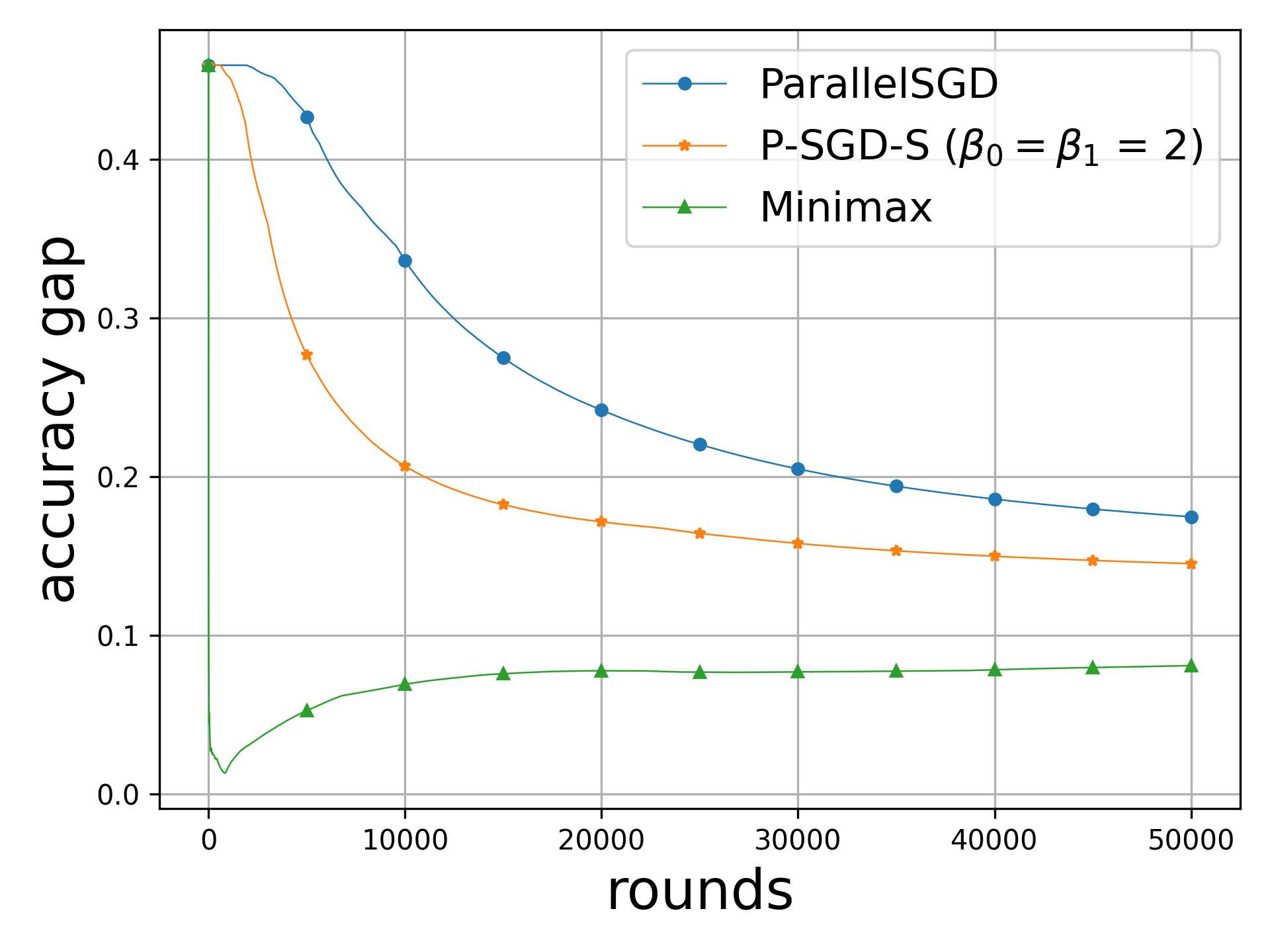}
         \caption{EA violation}
         \label{fig:adult:cvx:comparison:gap}
     \end{subfigure}
        \caption{Comparison of accuracy, loss, worst accuracy, and $\epsilon_{\text{EA}}$  for convex loss on Adult. \textsc{Parallel SGD} operates without explicit fairness constraints, while \textsc{Minimax} enforces an extreme fairness constraint by prioritizing the worst-performing group in group fairness.}
        \label{fig:adult:cvx:comparison}
        \vspace{-0.2cm}
\end{figure*}

\begin{figure*}[tb!]\label{fig:adult:noncvx:comparison}
     \centering
     \begin{subfigure}[b]{0.24\textwidth}
         \centering
         \includegraphics[width=\textwidth]{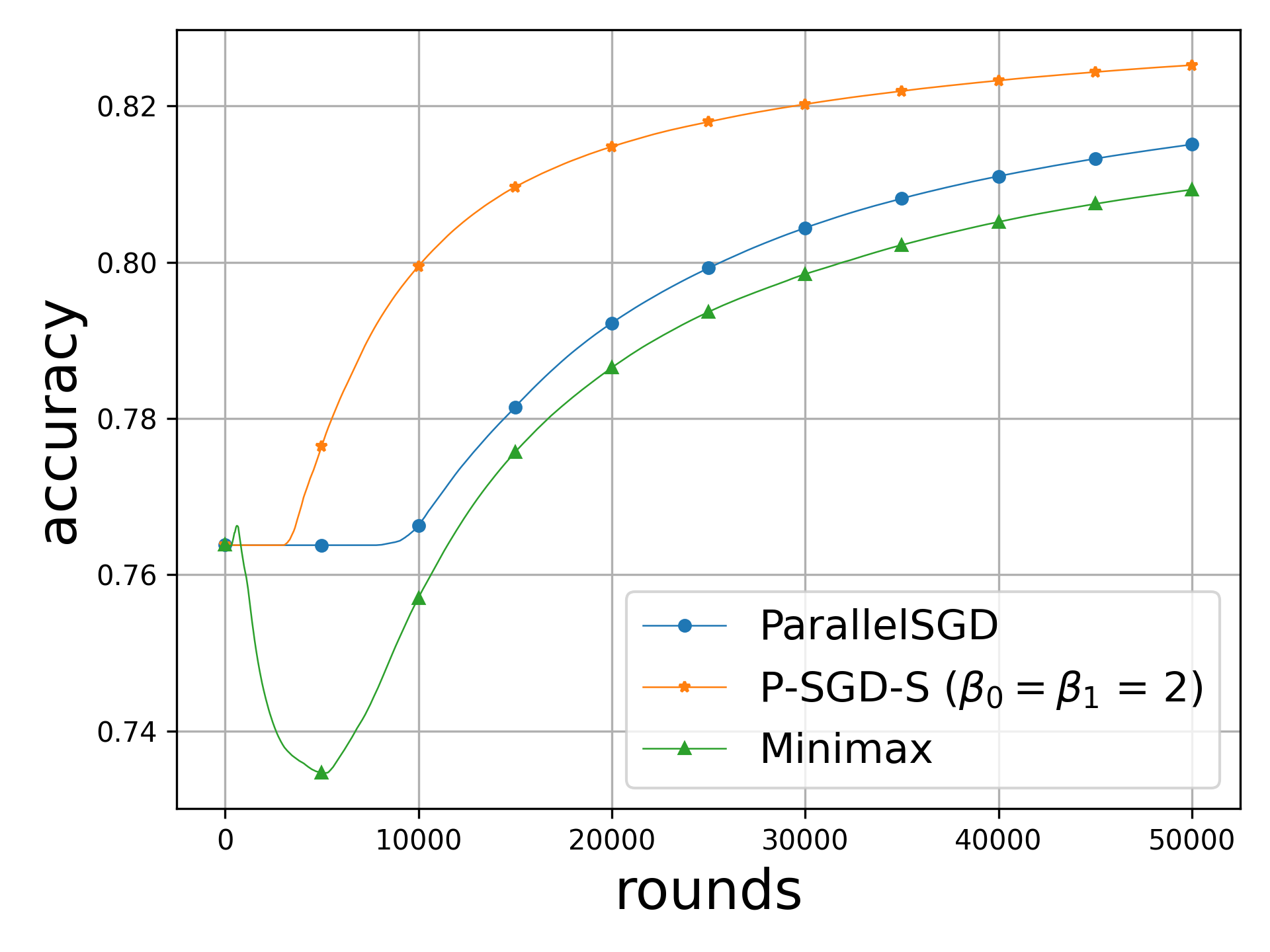}
         \caption{Overall test accuracy}
         \label{fig:adult:noncvx:comparison:accuracy}
     \end{subfigure}
     \hfill
     \begin{subfigure}[b]{0.24\textwidth}
         \centering
         \includegraphics[width=\textwidth]{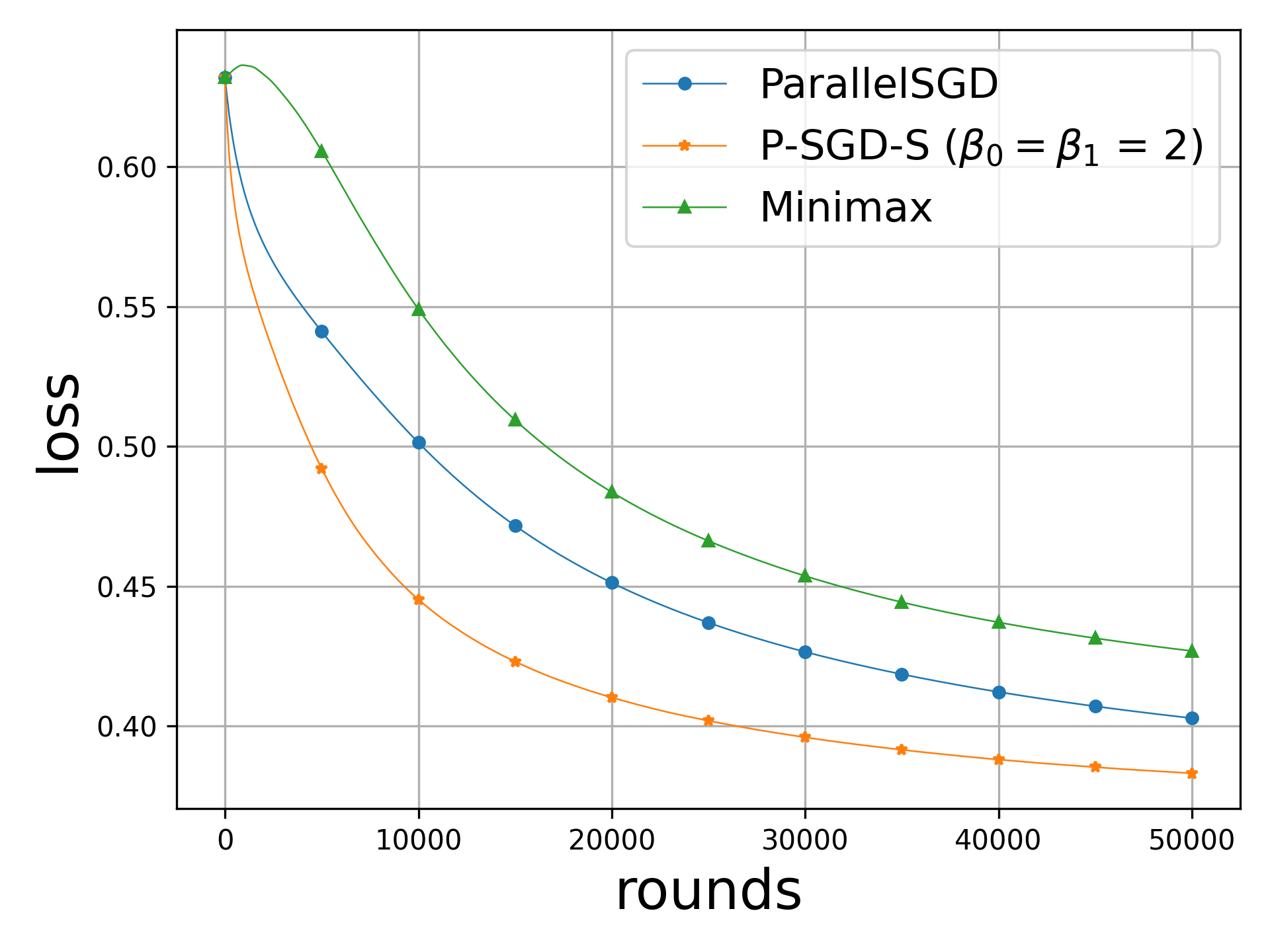}
         \caption{Overall test loss}
         \label{fig:adult:noncvx:comparison:loss}
     \end{subfigure}
     \hfill
     \begin{subfigure}[b]{0.24\textwidth}
         \centering
         \includegraphics[width=\textwidth]{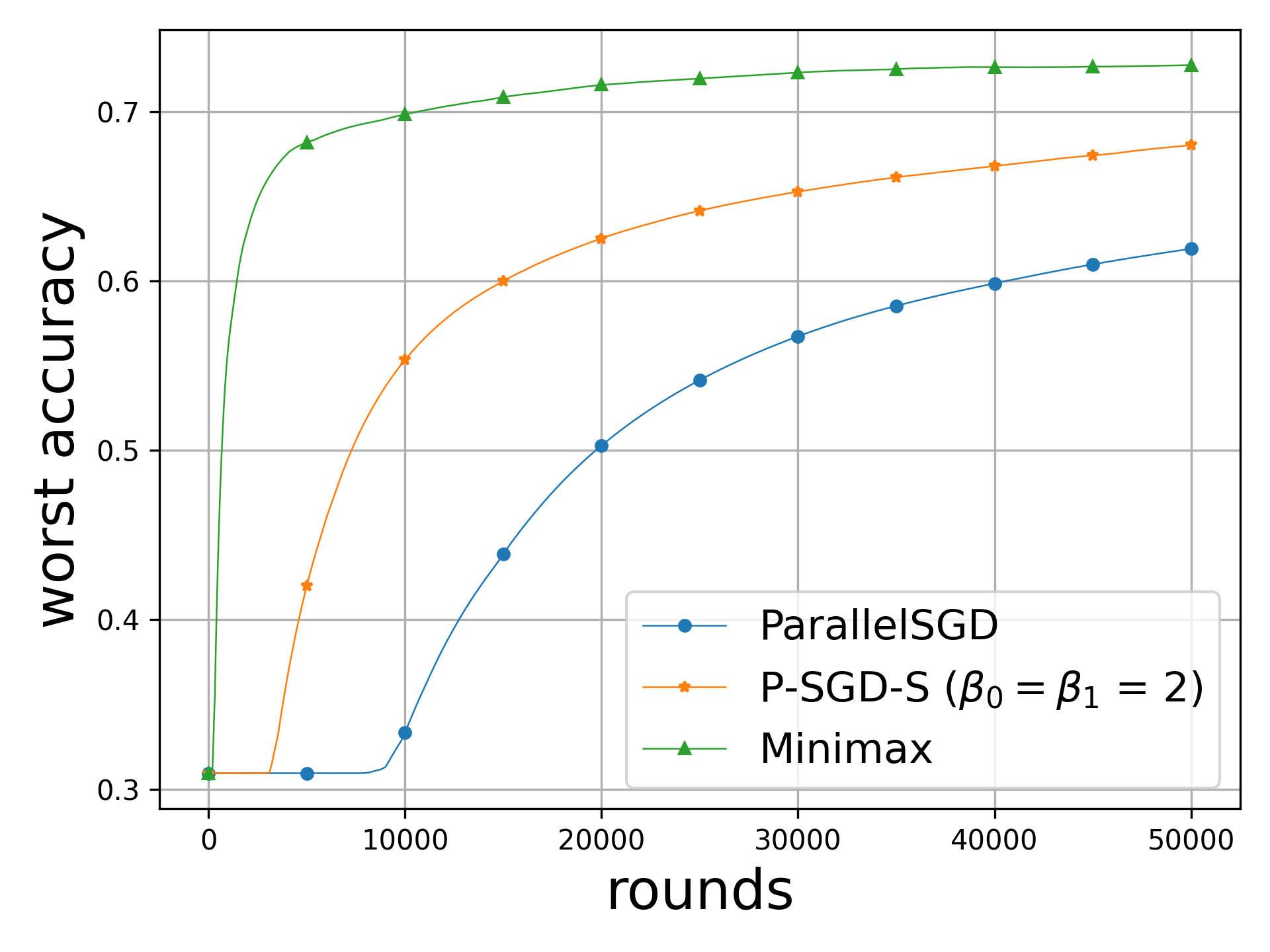}
         \caption{Worst Accuracy}
         \label{fig:adult:noncvx:comparison:worstaccuracy}
     \end{subfigure}
     \hfill
     \begin{subfigure}[b]{0.24\textwidth}
         \centering
         \includegraphics[width=\textwidth]{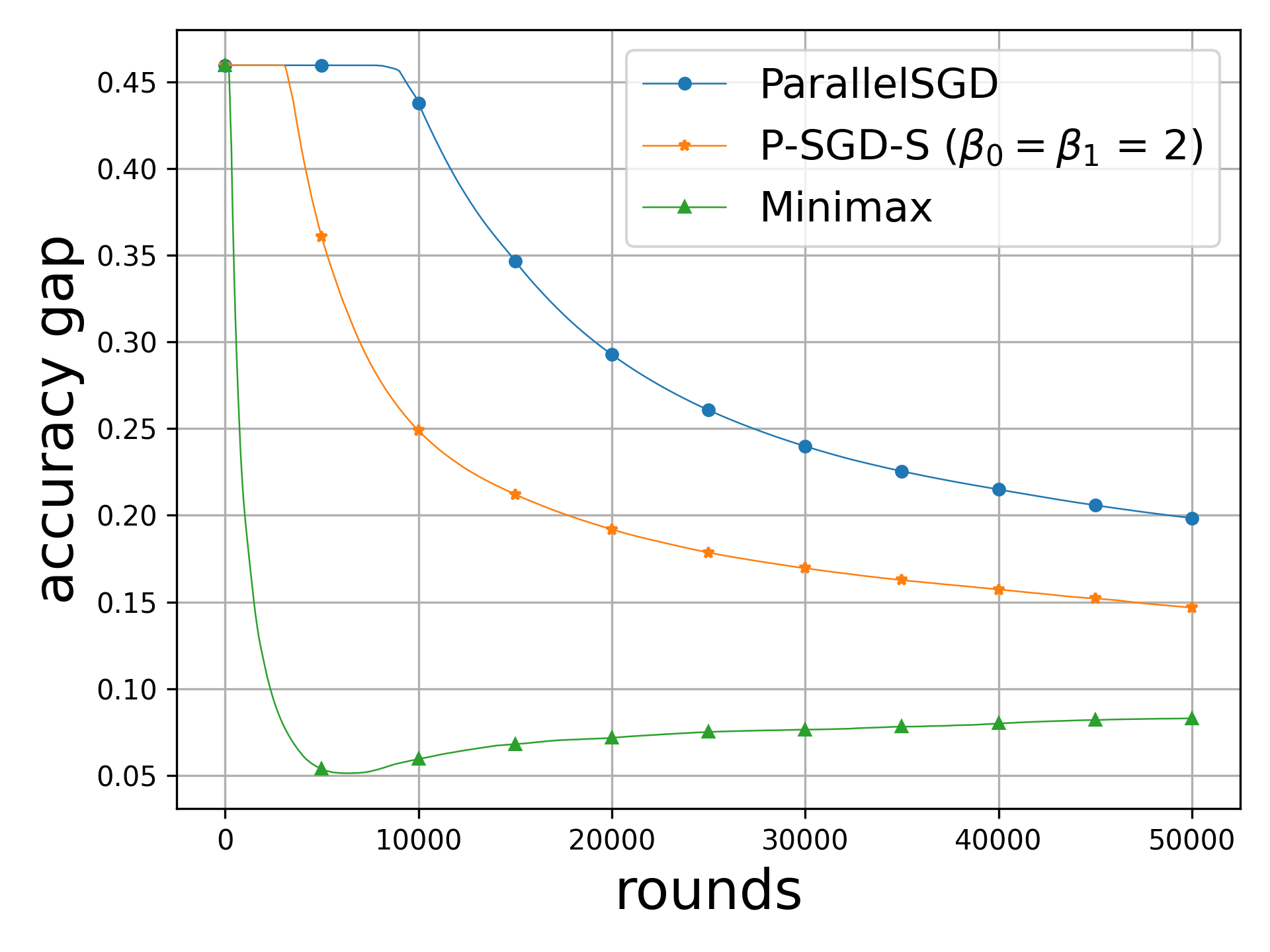}
         \caption{EA violation}
         \label{fig:adult:noncvx:comparison:gap}
     \end{subfigure}
        \caption{Comparison of accuracy, loss, worst accuracy, and $\epsilon_{\text{EA}}$ for nonconvex loss on Adult. \textsc{Parallel SGD} operates without explicit fairness constraints, while \textsc{Minimax} enforces an extreme fairness constraint by prioritizing the worst-performing group in group fairness.}
        \label{fig:adult:noncvx:comparison}
        \vspace{-0.2cm}
\end{figure*}

\begin{figure*}[tb!]
     \centering
     \begin{subfigure}[b]{0.24\textwidth}
         \centering
         \includegraphics[width=\textwidth]{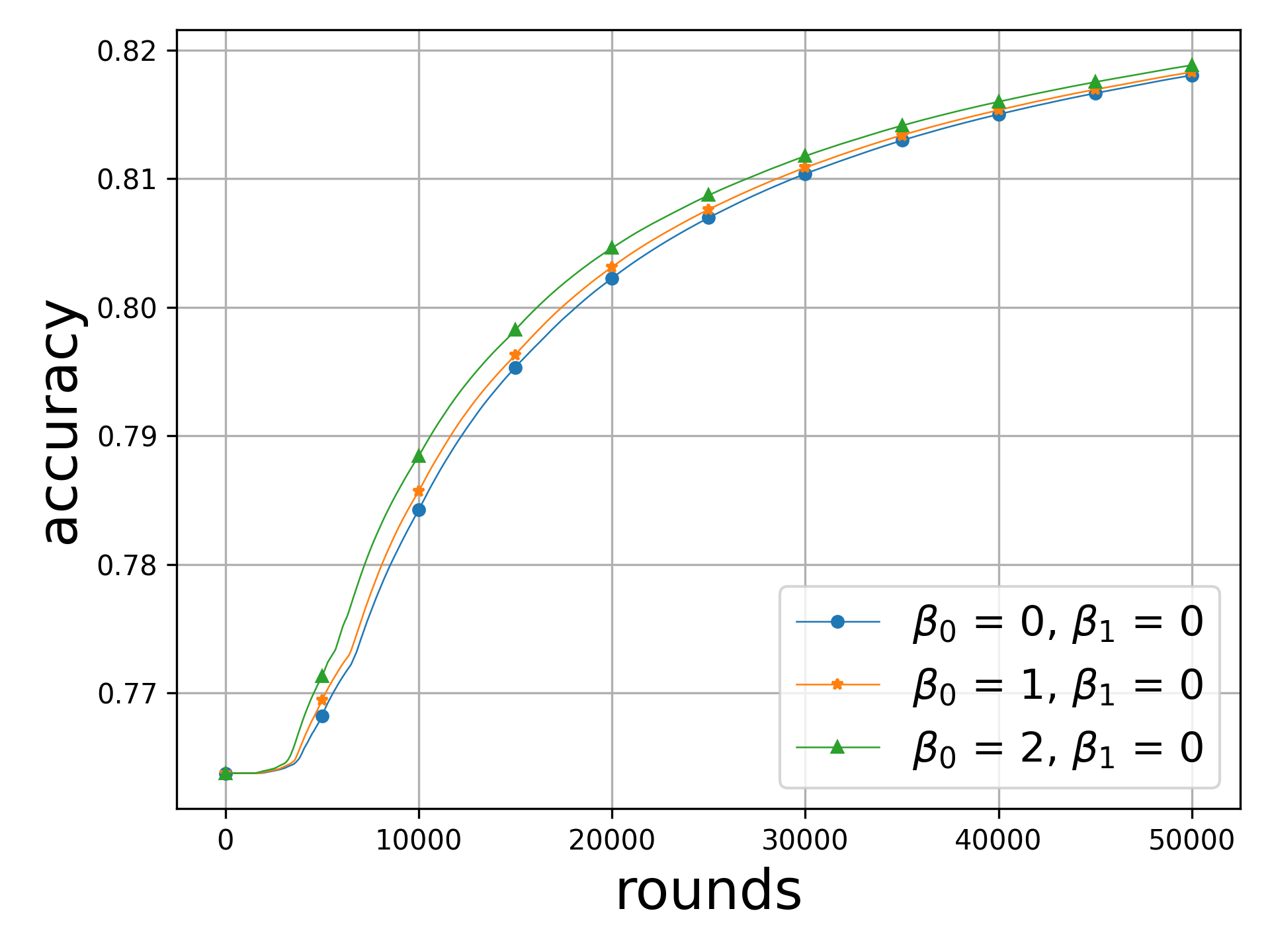}
         \caption{Overall test accuracy}\label{fig:adult:cvx:beta0:accuracy}
     \end{subfigure}
     \hfill
     \begin{subfigure}[b]{0.24\textwidth}
         \centering
         \includegraphics[width=\textwidth]{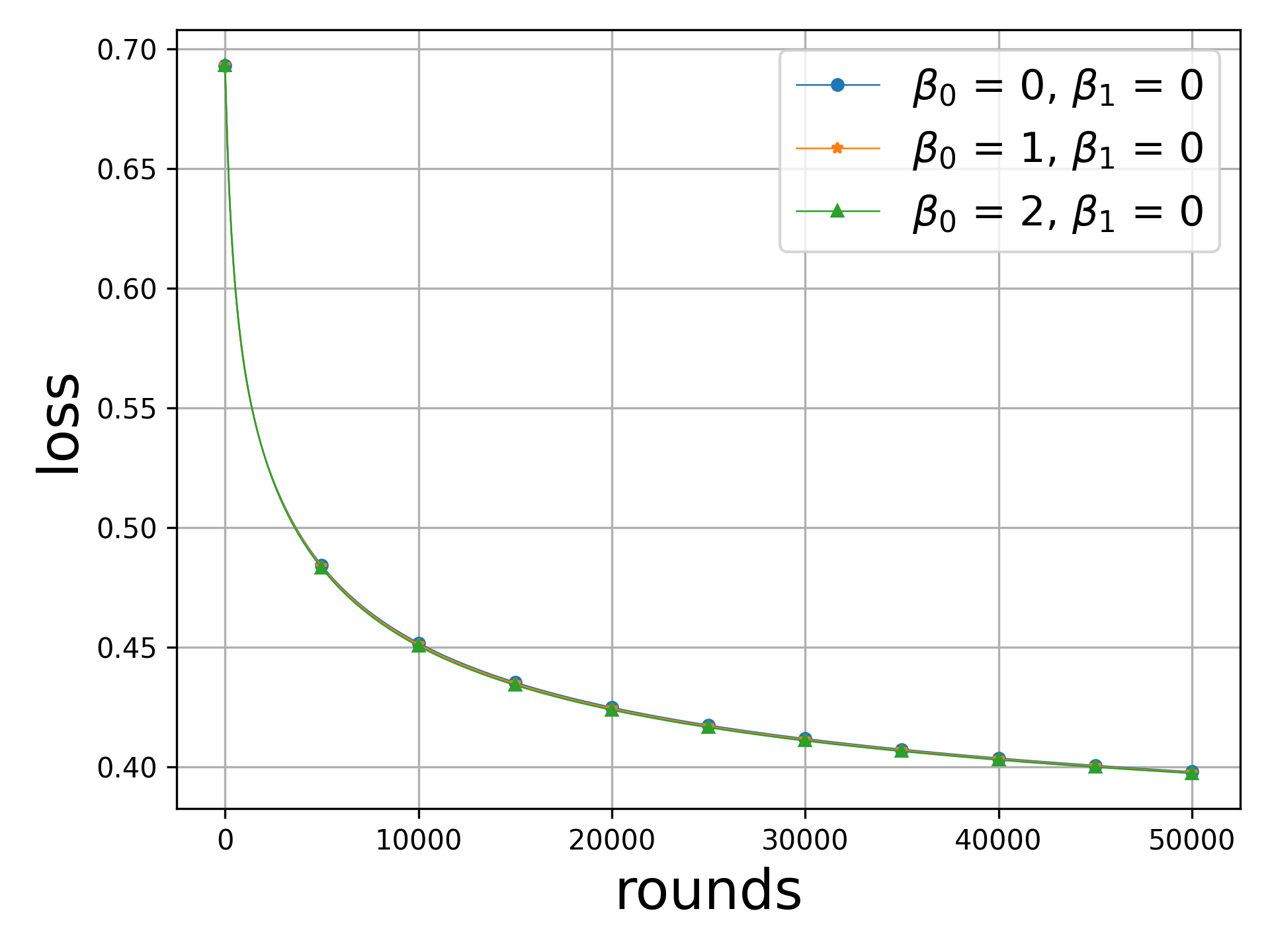}
         \caption{Overall test loss}\label{fig:adult:cvx:beta0:loss}
     \end{subfigure}
     \hfill
     \begin{subfigure}[b]{0.24\textwidth}
         \centering
         \includegraphics[width=\textwidth]{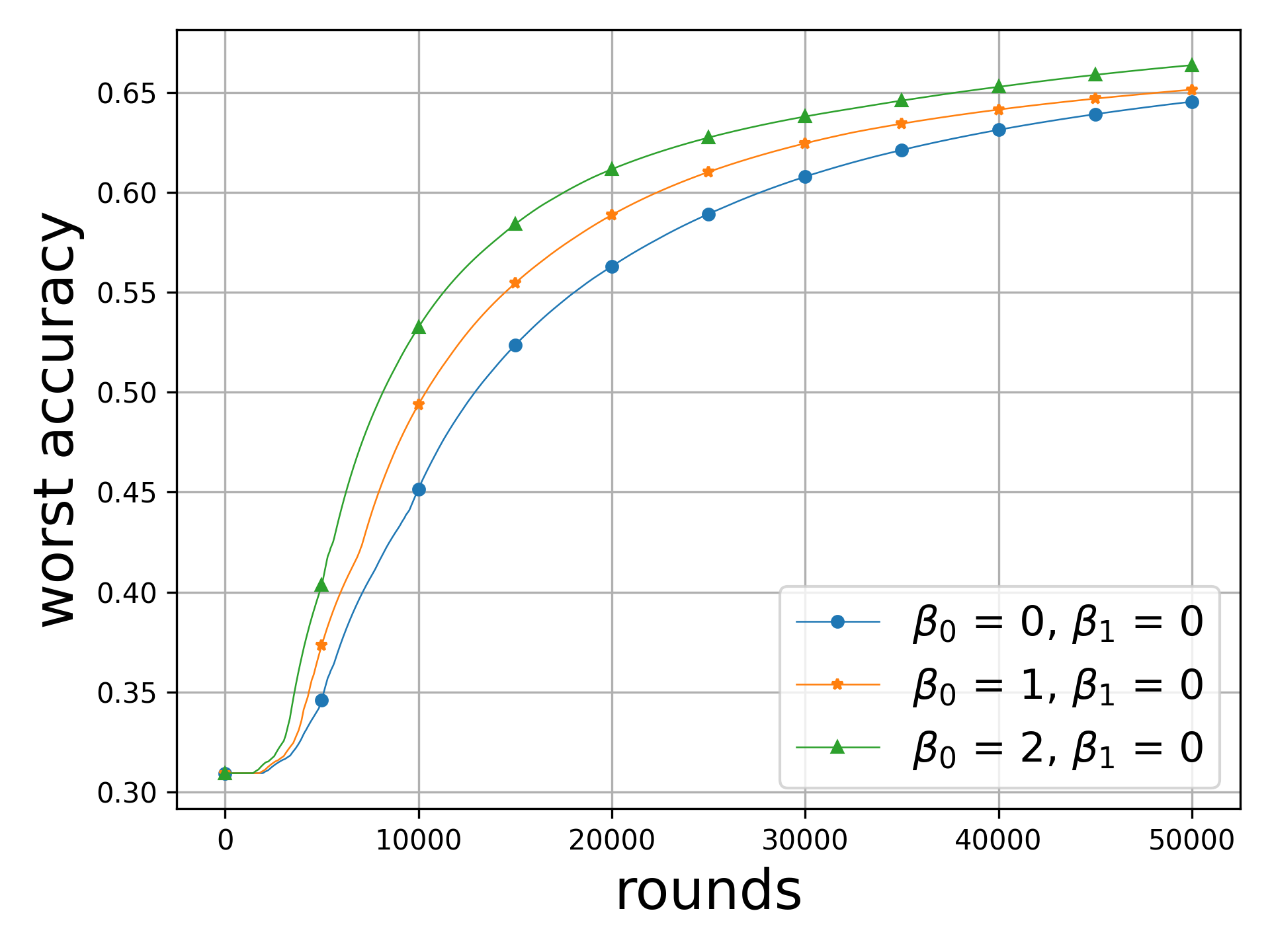}
         \caption{Worst Accuracy}\label{fig:adult:cvx:beta0:worstaccuracy}
     \end{subfigure}
     \hfill
     \begin{subfigure}[b]{0.24\textwidth}
         \centering
         \includegraphics[width=\textwidth]{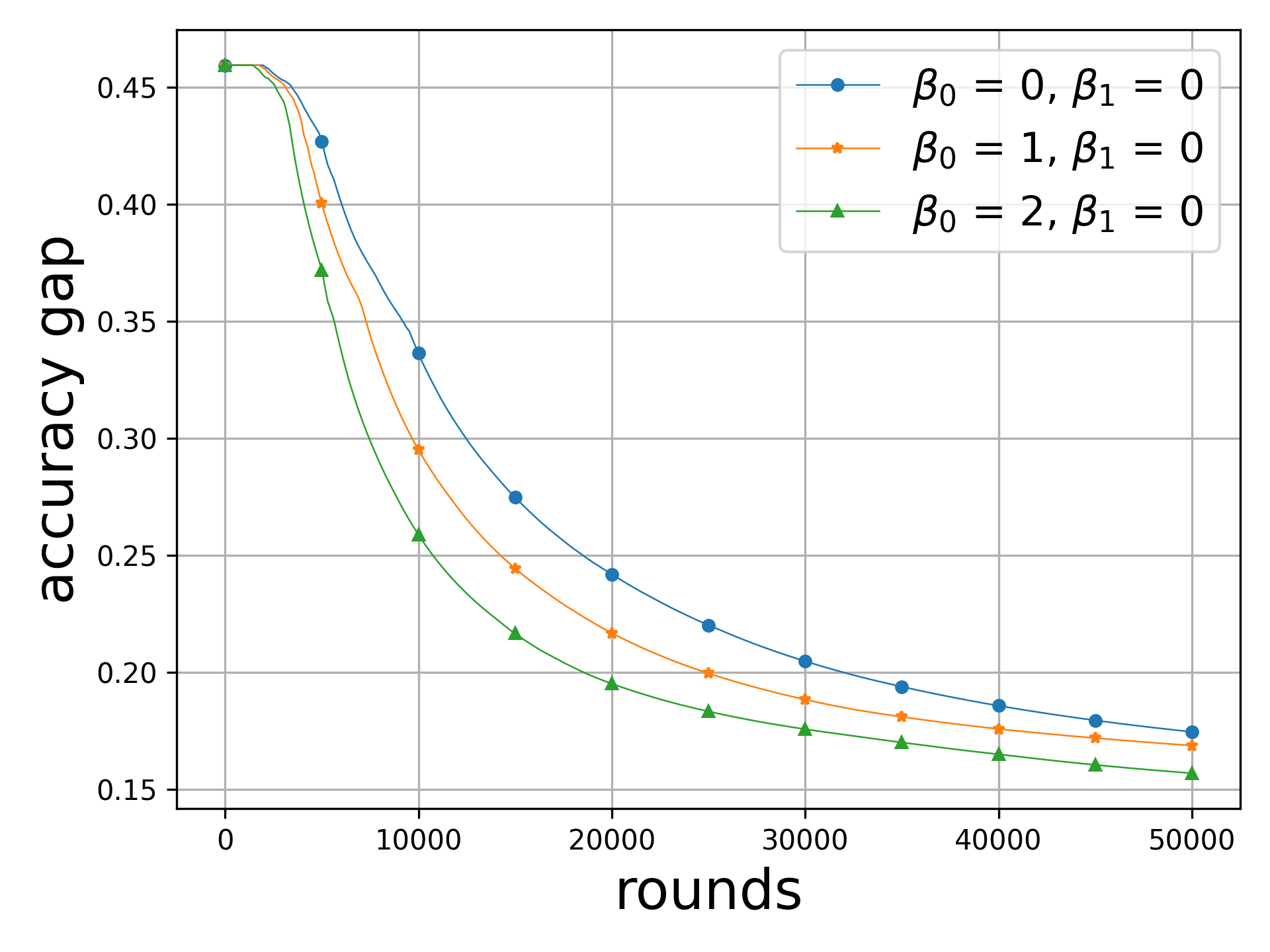}
         \caption{EA violation}\label{fig:adult:cvx:beta0:gap}
     \end{subfigure}
        \caption{Impact of varying $\boldsymbol{\beta}$ values on training dynamics for convex loss on Adult. With $\beta_{1} = 0$, increasing $\beta_{0}$ prioritizes the minority group, improving worst-group accuracy and reducing $\epsilon_{\text{EA}}$.}
        \label{fig:adult:cvx:beta0}
        \vspace{-0.3cm}
\end{figure*}

\subsection{Experimental Settings}

We use the Adult dataset~\cite{UCI} and the COMPAS dataset~\cite{COMPAS}, focusing on education-level disparities in economic opportunity, and the COMPAS dataset~\cite{COMPAS}, examining sex-based disparities in criminal risk assessment, both reflecting real-world sociodemographic inequities. To generalize, we include the Fashion-MNIST dataset~\cite{Fashion_MNIST} in a label-partitioned scenario, where groups are defined by class labels, allowing us to study fairness in settings where disparities arise from label distributions rather than features. Our experiments cover both convex and non-convex classification tasks, solving for the corresponding loss functions. For both datasets models are trained for $50,000$ iterations. 
\subsubsection{Datasets}
\textbf{Adult} dataset includes $48,842$ samples, with $32,561$ training examples and $16,281$ test examples for a salary prediction classification problem in the feature-partitioned scenario. Education level serves as our protected feature, and the data is divided into two groups: Doctorate (\textit{group} 0) with 413 training examples and 181 test examples, and non-Doctorate (\textit{group} 1) with 32,148 training examples and 16,100 test examples. For simplicity, we train the model using only categorical features. The groups are highly imbalanced and we, therefore, choose $\boldsymbol{\alpha}$ based on the ratio of the training examples between the minority and majority groups, i.e., $\alpha_{0} = 0.0127$ and $\alpha_{1} = 0.9873$. We assign the parameters $\beta_{0}$ and $\beta_{1}$ to the Doctorate and non-Doctorate groups, respectively. For the convex case of \textsc{Surrogate-Min}, we train a logistic regression model with cross-entropy loss and for the nonconvex case of \textsc{Surrogate-Min}, we train a multiple-layer neural network model (one hidden layer of $10$ neurons and ReLU as the activation function) with cross-entropy loss. Further details on hyperparameter tuning can be found in Appendix~\ref{appendix:additional}.\\ 
\textbf{COMPAS} dataset contains $6,172$ observations with $14$ features. We adopt the same $2/3$ training and $1/3$ test split used in the Adult dataset.We consider sex as our protected feature and divide data samples into two groups: Females (\textit{group} 0), including 782 training and 393 test samples, and Males (\textit{group} 1), including 3,332 training and 1,665 test samples. Similar to the Adult dataset, we only train using categorical features. The values of $\boldsymbol{\alpha}$ are selected based on the ratio of the training examples between the minority and majority groups, i.e., $\alpha_{0} = 0.1901$ and $\alpha_{1} = 0.8099$. We assign $\beta_{0}$ to the Female group and $\beta_{1}$ to the Male group. For the convex case of \textsc{Surrogate-Min}, we train a logistic regression model with cross-entropy loss and for the nonconvex case of \textsc{Surrogate-Min}, we train a multiple-layer neural network model with further details on hyperparameter shown in Appendix~\ref{appendix:additional}.

\begin{figure*}[tb!]\label{fig:COMPAS:cvx:comparison}
     \centering
     \begin{subfigure}[b]{0.24\textwidth}
         \centering
         \includegraphics[width=\textwidth]{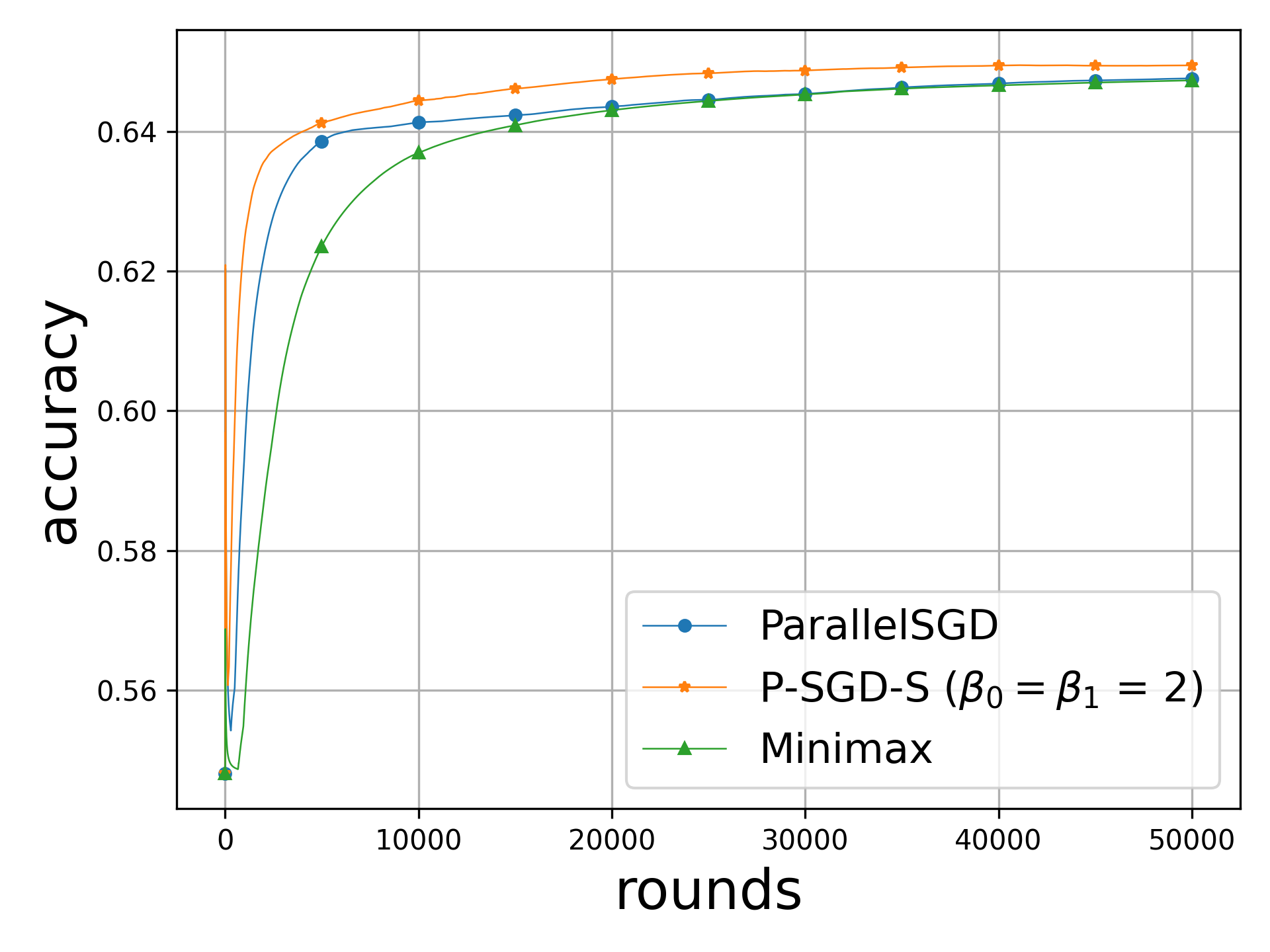}
         \caption{Overall test accuracy}
         \label{fig:COMPAS:cvx:comparison:accuracy}
     \end{subfigure}
     \hfill
     \begin{subfigure}[b]{0.24\textwidth}
         \centering
         \includegraphics[width=\textwidth]{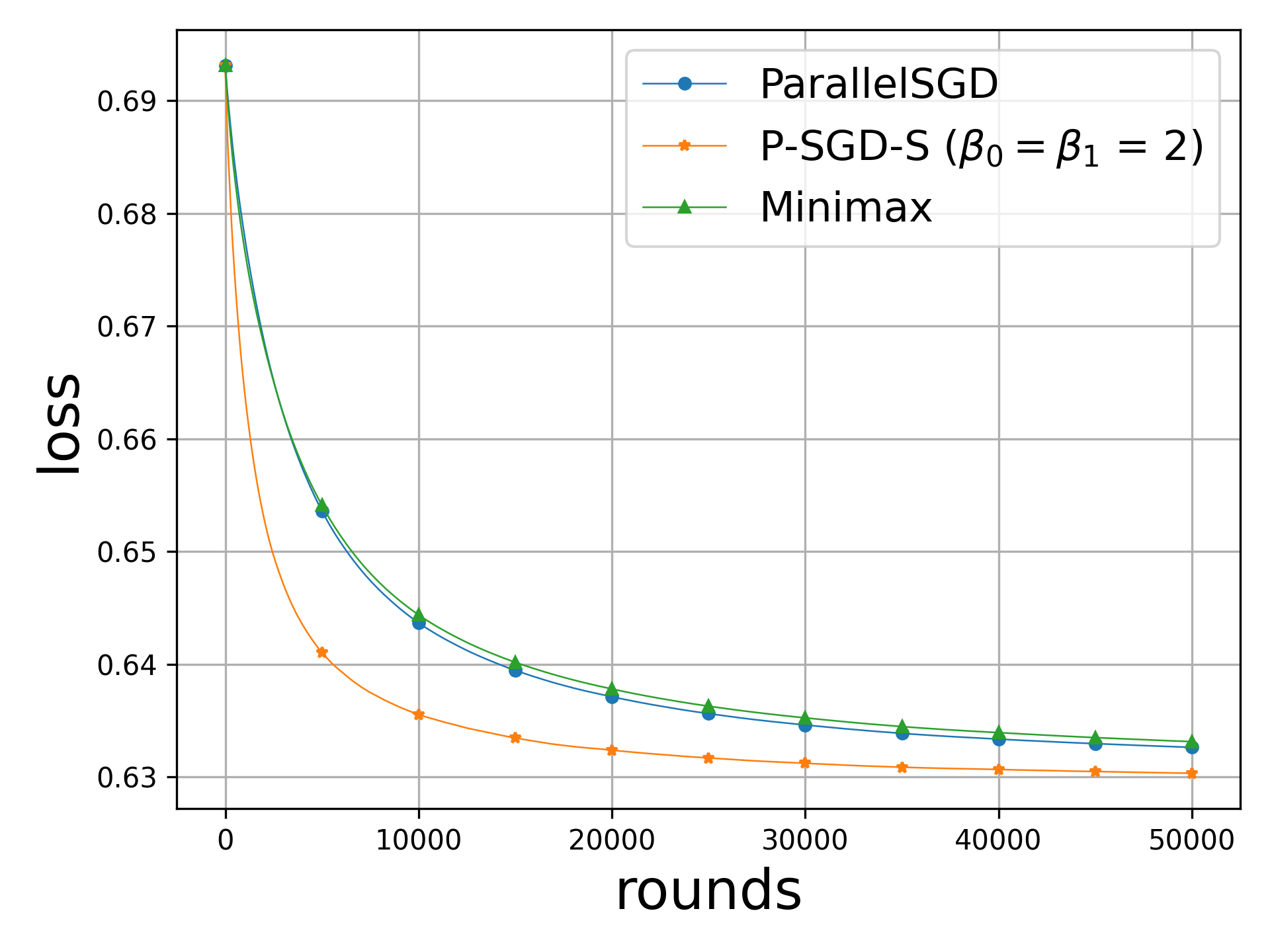}
         \caption{Overall test loss}
         \label{fig:COMPAS:cvx:comparison:loss}
     \end{subfigure}
     \hfill
     \begin{subfigure}[b]{0.24\textwidth}
         \centering
         \includegraphics[width=\textwidth]{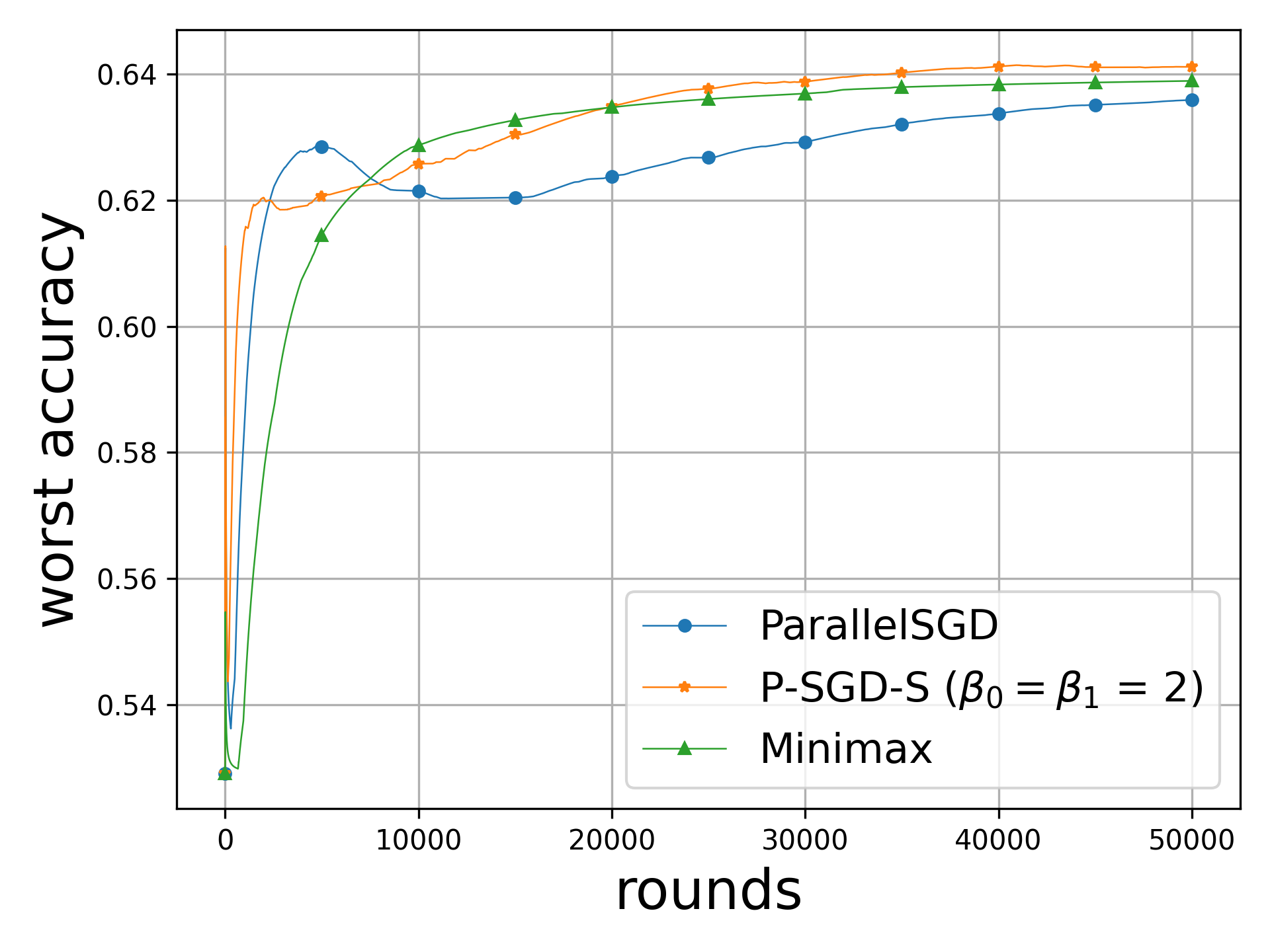}
         \caption{Worst Accuracy}
         \label{fig:COMPAS:cvx:comparison:worstaccuracy}
     \end{subfigure}
     \hfill
     \begin{subfigure}[b]{0.24\textwidth}
         \centering
         \includegraphics[width=\textwidth]{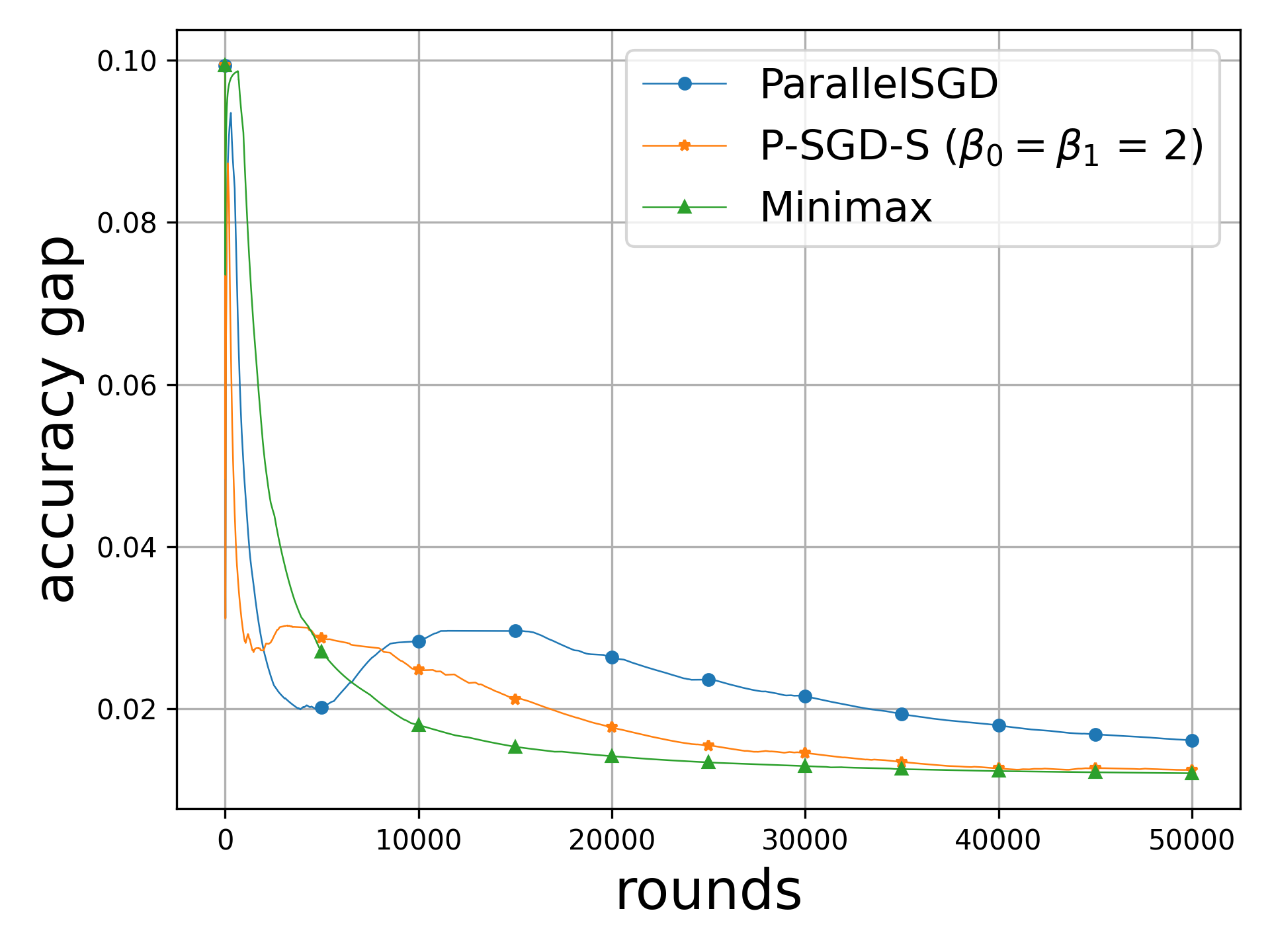}
         \caption{EA violation}
         \label{fig:COMPAS:cvx:comparison:gap}
     \end{subfigure}
        \caption{Comparison of accuracy, loss, worst accuracy, and $\epsilon_{\text{EA}}$ for convex loss on COMPAS. \textsc{Parallel SGD} operates without explicit fairness constraints, while \textsc{Minimax} enforces an extreme fairness constraint by prioritizing the worst-performing group in group fairness.}
        \label{fig:COMPAS:cvx:comparison}
\end{figure*}

\begin{figure*}[tb!]
     \centering
     \begin{subfigure}[b]{0.24\textwidth}
         \centering
         \includegraphics[width=\textwidth]{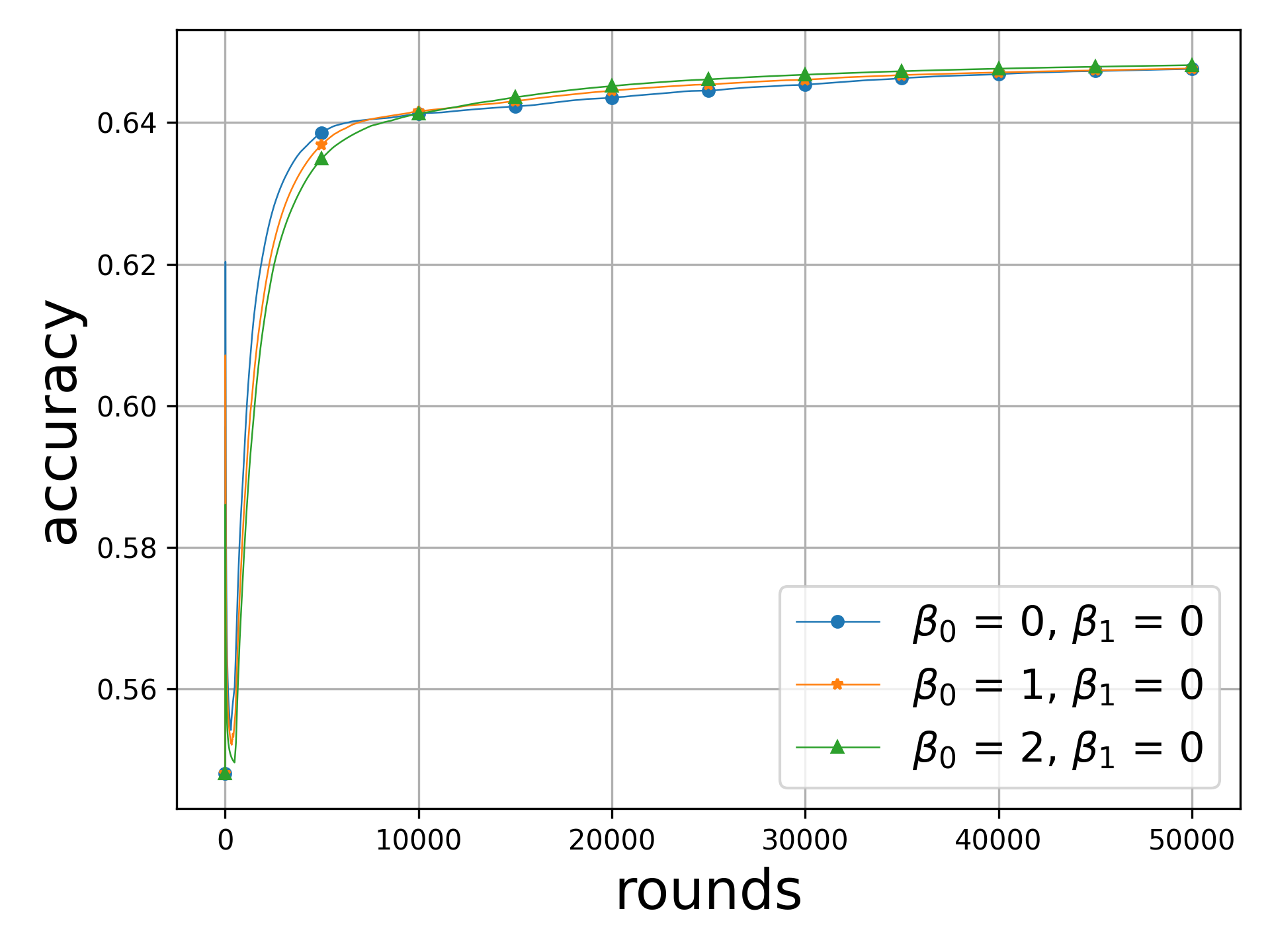}
         \caption{Overall test accuracy}\label{fig:COMPAS:cvx:beta0:accuracy}
     \end{subfigure}
     \hfill
     \begin{subfigure}[b]{0.24\textwidth}
         \centering
         \includegraphics[width=\textwidth]{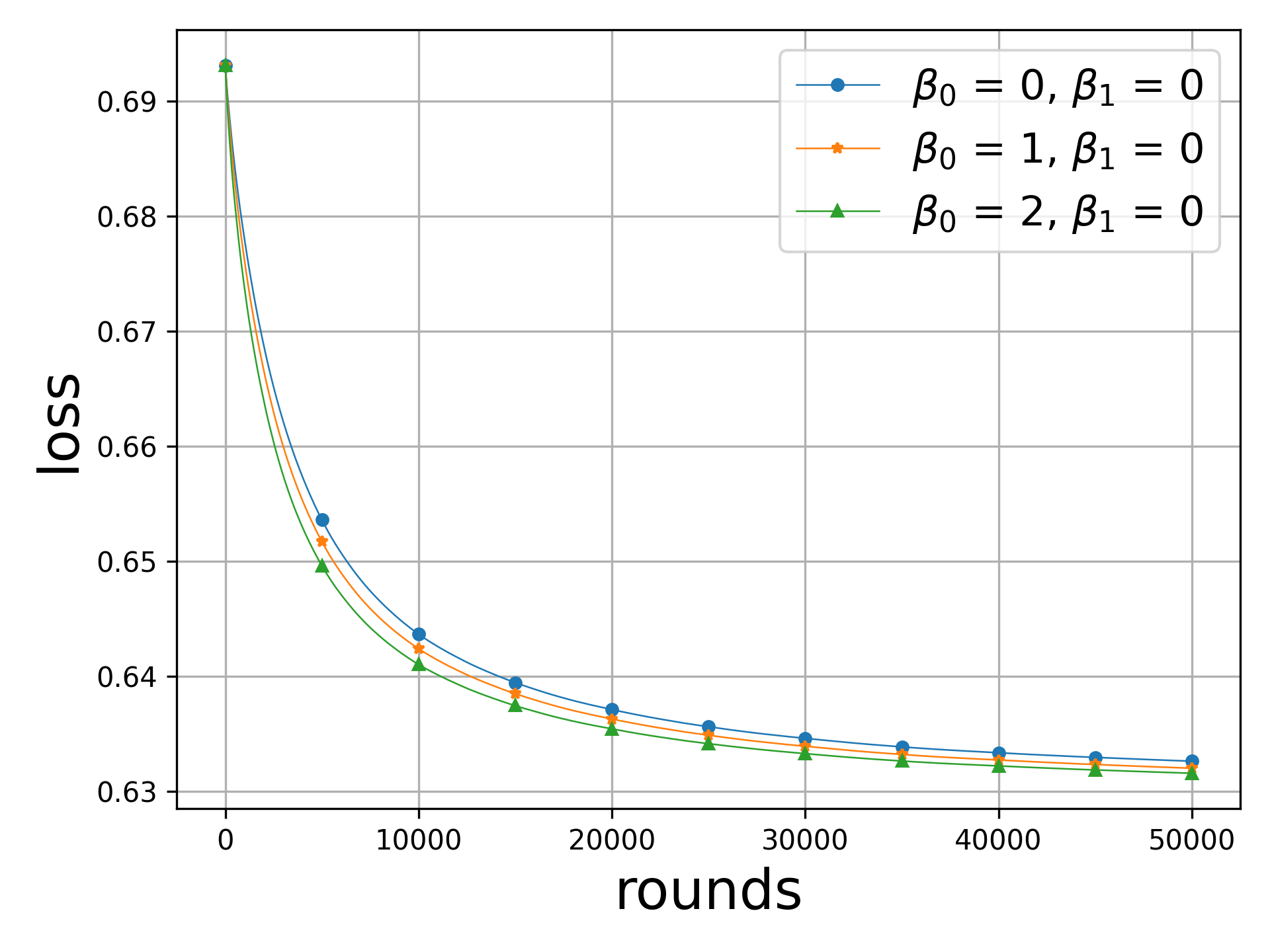}
         \caption{Overall test loss}\label{fig:COMPAS:cvx:beta0:loss}
     \end{subfigure}
     \hfill
     \begin{subfigure}[b]{0.24\textwidth}
         \centering
         \includegraphics[width=\textwidth]{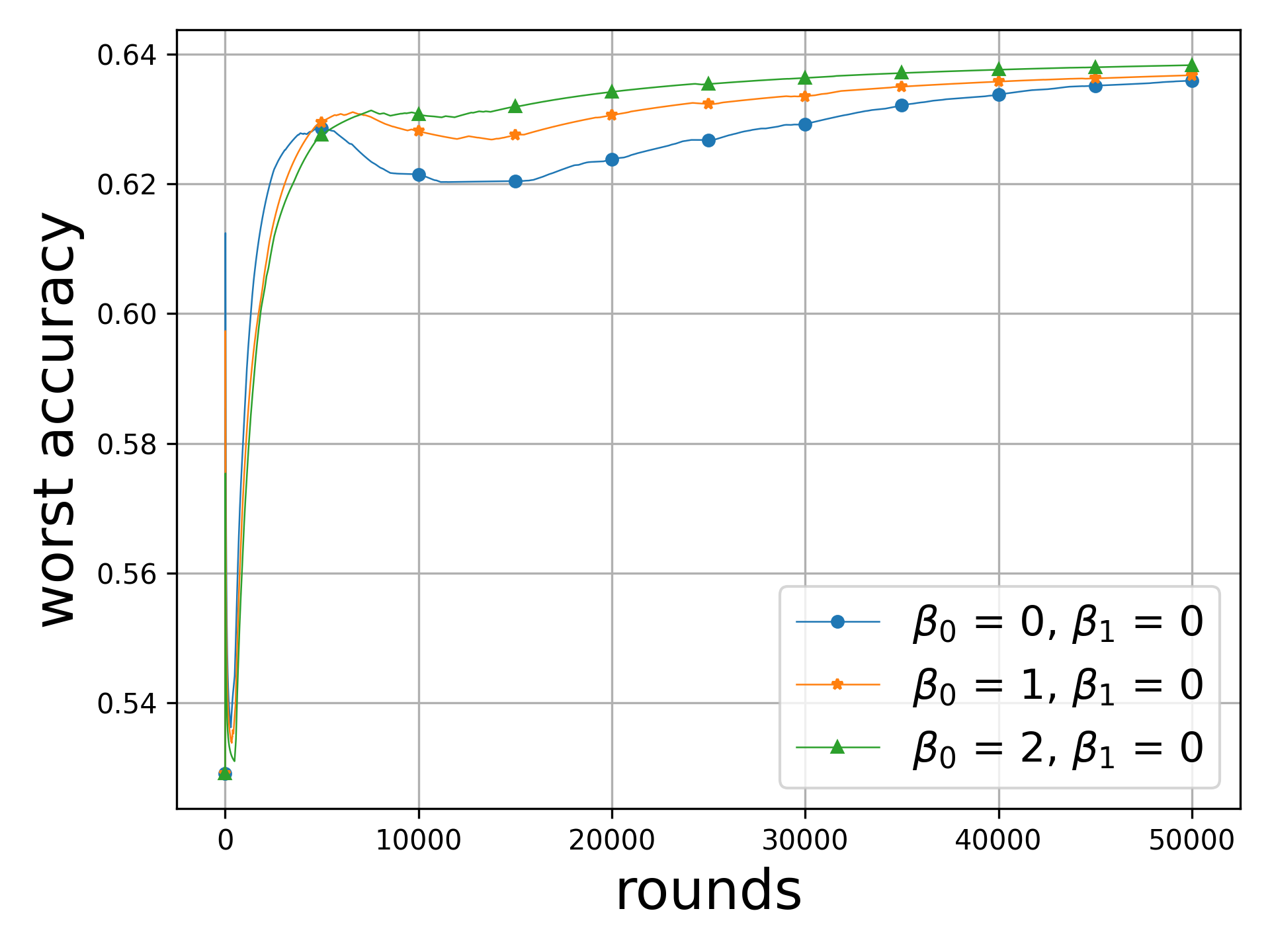}
         \caption{Worst Accuracy}\label{fig:COMPAS:cvx:beta0:worstaccuracy}
     \end{subfigure}
     \hfill
     \begin{subfigure}[b]{0.24\textwidth}
         \centering
         \includegraphics[width=\textwidth]{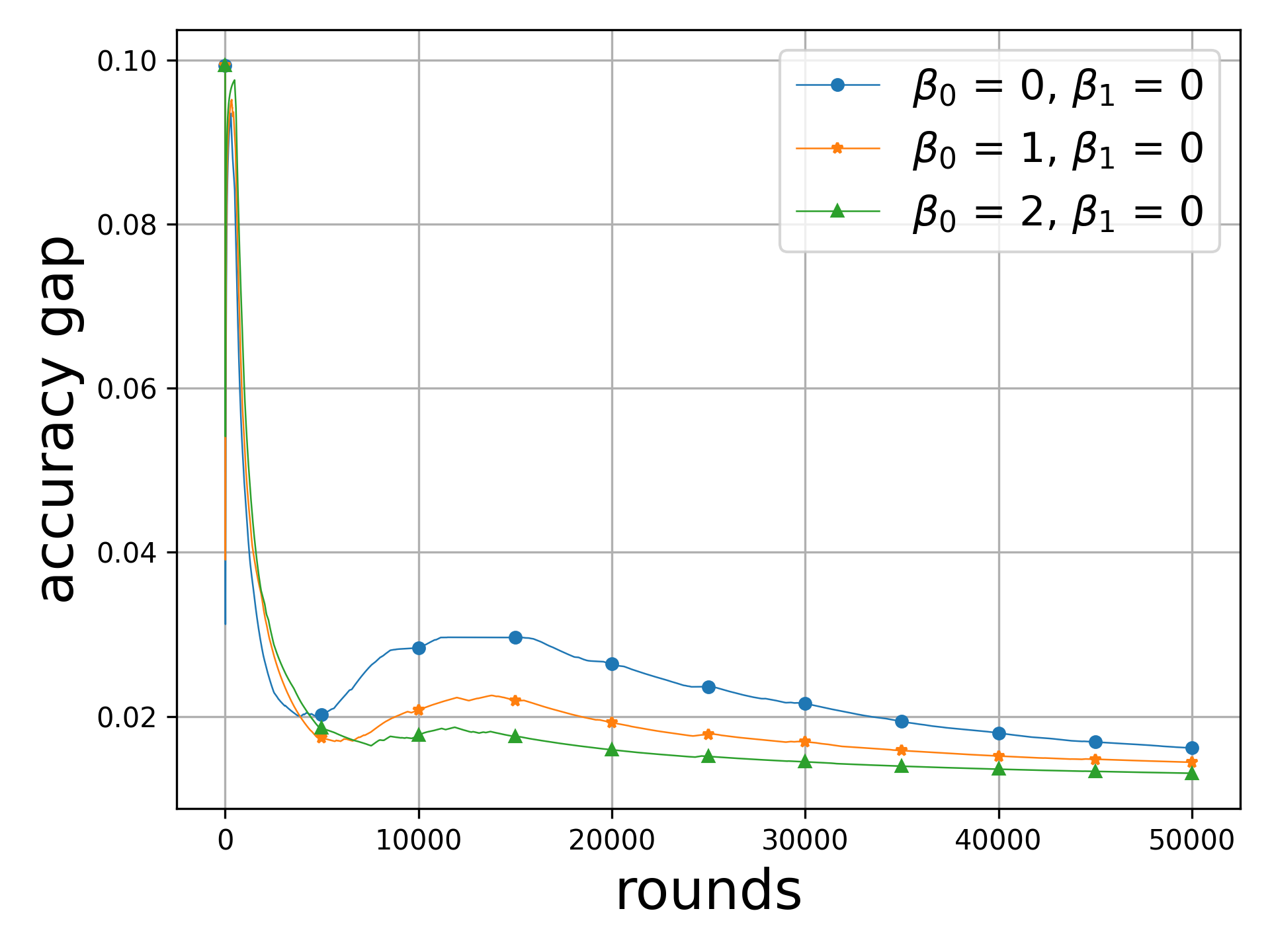}
         \caption{EA violation}\label{fig:COMPAS:cvx:beta0:gap}
     \end{subfigure}
        \caption{Impact of varying $\boldsymbol{\beta}$ values on training dynamics for convex loss on COMPAS. With $\beta_{1} = 0$, increasing $\beta_{0}$ prioritizes the minority group, improving worst-group accuracy and reducing $\epsilon_{\text{EA}}$.}
        \label{fig:COMPAS:cvx:beta0}
        \vspace{-0.5mm}
\end{figure*}

\subsubsection{Benchmarks}
We consider two classes of benchmarks as follows:
\begin{itemize}
    \setlength\itemsep{0em}
    \item \textsc{Surrogate-Min}: solving the minimization optimization problem in~\eqref{def:alpha-beta:opt} using our proposed \textsc{P-SGD-S} as outlined in Algorithm~\ref{alg:P-SGD-S};
    \item \textsc{Minimax}: solving the minimax optimization problem proposed in~\cite{ICML18_Hashimoto,ICML19_AFL}, i.e., minimizing the loss for the worst mixture of all group distributions, 
    \begin{align}\label{def:minimax:opt}
    \min_{\bw \in \cW} \max_{\boldsymbol{\lambda} \in \Lambda} \sum_{i \in \cI} \lambda_{i}F_{\beta_{i}}(w),
\end{align}
    where $\boldsymbol{\lambda} \in \Lambda \subseteq \Delta_{|\cI|-1}$ and $\beta_{i} = 0, \forall i \in \cI$, using a stochastic gradient descent ascent (SGDA) algorithm.
\end{itemize}
We choose specific values for $\boldsymbol{\alpha}$ and $\boldsymbol{\beta}$ in the \textsc{Surrogate-Min} class of algorithms. The original ERM formulation in~\eqref{def:erm:opt} is a special case of the \textsc{Surrogate-Min} class, where $\alpha_{0} =S_0/N$, $\alpha_{1} =S_1/N$, and $\beta_{0} = \beta_{1} = 0$. For clarity of comparison, we call the corresponding algorithm \textsc{Parallel SGD}. For the \textsc{Minimax} type of algorithms, by solving the minimax optimization problem in~\eqref{def:minimax:opt}, the resulting model tends to have uniform performance with respect to accuracy among different groups, i.e., imposing the most extreme min-max fairness on accuracy. Due to space constraints, the results for the label-partitioned scenario as well as on other fairness notions ~\cite{AISTATS17_Zafar,NIPS16_Hardt} are deferred to Appendix~\ref{appendix:additional}.

\subsubsection{Evaluation}
We evaluate model performance using the following four metrics on the test set: average test accuracy, the worst group accuracy (the lowest accuracy between the two protected groups), and the EA violation.

\subsection{Experimental Results}

\subsubsection{Balanced Fairness with Improved Accuracy}

To evaluate our framework against benchmark methods, we set $\beta_{1} = \beta_{2} = 2$. The results for the convex loss case for Adult dataset are presented in Figure~\ref{fig:adult:cvx:comparison}, while those for the COMPAS dataset are shown in Figure~\ref{fig:COMPAS:cvx:comparison}. For the nonconvex loss case, the corresponding results are provided in Figure~\ref{fig:adult:noncvx:comparison}. Our approach demonstrates superior average accuracy compared to both the \textsc{Parallel SGD} and \textsc{Minimax} (Figure~\ref{fig:adult:cvx:comparison:accuracy} and Figure \ref{fig:COMPAS:cvx:comparison:accuracy}), indicating that our fairness-aware utility function is being properly utilized.  
More importantly, our method improves worst-group accuracy compared to \textsc{Parallel SGD} while remaining slightly lower than \textsc{Minimax} (Figure~\ref{fig:adult:cvx:comparison:worstaccuracy} and Figure \ref{fig:COMPAS:cvx:comparison:worstaccuracy}). This aligns with our expectations, as \textsc{Minimax} explicitly prioritizes worst-case group performance at the cost of overall accuracy, making it a more extreme approach. In contrast, our method achieves a more balanced trade-off, enhancing fairness without imposing overly restrictive constraints on model performance. A similar trend is observed for EA violation, where our approach effectively reduces disparities compared to \textsc{Parallel SGD} but does not enforce as strict a correction as \textsc{Minimax} (Figure~\ref{fig:adult:cvx:comparison:gap} and Figure \ref{fig:COMPAS:cvx:comparison:gap}).  For comparison of other fairness metrics, we provide Table~\ref{tab:comparison:metrics} in Appendix~\ref{appendix:additional}. We have also observed similar trends in the label-partitioned setting using the Fashion-MNIST dataset with the results are provided in Appendix~\ref{appendix:additional}, further supporting the generalizability of our proposed approach.

\subsubsection{Flexibility in Fairness Optimization}
We demonstrate the adaptability of our framework by varying $\boldsymbol{\beta}$ values on training dynamics. The results are shown in Figure~\ref{fig:adult:cvx:beta0} for the Adult dataset and Figure~\ref{fig:COMPAS:cvx:beta0} for COMPAS. Specifically, we fix $\beta_{1} = 0$ and vary $\beta_{0}$, effectively shifting focus toward the minority group. For instance, in the COMPAS dataset, where Females are the minority, adjusting $\beta_{0}$ redistributes optimization focus toward this group (Figures~\ref{fig:adult:cvx:beta0:loss} for Adults and Figures~\ref{fig:COMPAS:cvx:beta0:loss} for COMPAS). As expected, increasing $\beta_{0}$ enhances our fairness metric by improving worst-group accuracy (Figures~\ref{fig:adult:cvx:beta0:worstaccuracy} for Adults and Figures~\ref{fig:COMPAS:cvx:beta0:worstaccuracy} for COMPAS) while also reducing EA violation(Figures~\ref{fig:adult:cvx:beta0:gap} for Adults and Figures~\ref{fig:COMPAS:cvx:beta0:gap} for COMPAS).

These results demonstrate that our approach is highly adaptable and can be tuned to align with different fairness definitions and objectives. By simply adjusting a single parameter $\beta_{0}$, our method offers a smooth fairness-performance trade-off, spanning from \textsc{Parallel SGD} (no explicit fairness constraints) to \textsc{Minimax} fairness (maximizing worst-group performance at the cost of overall accuracy). 

\section{Conclusion}\label{sec:conclusion}
Reweighting techniques for both convex and nonconvex settings, effectively addresses sociodemographic disparities in classification tasks while maintaining strong predictive performance. By tuning fairness parameters, our method provides a flexible trade-off between no fairness constraints and strict fairness enforcement, allowing practitioners to adapt fairness interventions to specific contexts without imposing overly rigid constraints. Future work will extend this framework to intersectional fairness analysis, more complex multimodal datasets, and integration with state-of-the-art deep learning models, including transformers, to further enhance its applicability in real-world decision-making systems.

\newpage
\bibliographystyle{named}
\bibliography{main}
\onecolumn
\appendix
\section*{Supplementary Materials}
\allowdisplaybreaks

\section{Proof of Theorem~\ref{thm:cvx}}\label{appendix:thm1}
\begin{proof}
From the $L_{0}$-smoothness of the surrogate loss, we have
\begin{align}
& L_{(\boldsymbol{\alpha}, \boldsymbol{\beta})}(\bw^{(t+1)}) - L_{(\boldsymbol{\alpha}, \boldsymbol{\beta})}(\bw^{(t)}) \nonumber \\
& \leq \nabla L_{(\boldsymbol{\alpha}, \boldsymbol{\beta})}(\bw^{(t)})^\top (\bw^{(t+1)}-\bw^{(t)}) + \frac{L_{0}}{2}\|\bw^{(t+1)}-\bw^{(t)}\|^{2} \label{thm1:proof:1} \\
& = -\gamma \nabla L_{(\boldsymbol{\alpha}, \boldsymbol{\beta})}(\bw^{(t)})^\top \left(\sum_{i \in \cI} \alpha_{i} \tilde{\nabla} F_{\beta_i}(\bw^{(t)}; \xi_{i}^{(t)})\right) + \frac{L_{0} \gamma^{2}}{2} \left\|\sum_{i \in \cI} \alpha_{i}\tilde{\nabla} F_{\beta_i}(\bw^{(t)}; \xi_{i}^{(t)})\right\|^{2}, \label{thm1:proof:2}
\end{align}
where we substitute in the update rule in~\eqref{thm1:proof:2}. Taking expectations on both sides, we have
\begin{align}
& \bbE\left[L_{(\boldsymbol{\alpha}, \boldsymbol{\beta})}(\bw^{(t+1)})\right] - L_{(\boldsymbol{\alpha}, \boldsymbol{\beta})}(\bw^{(t)}) \nonumber \\
& \leq -\gamma \left\|\nabla L_{(\boldsymbol{\alpha}, \boldsymbol{\beta})}(\bw^{(t)})\right\|^{2} + \frac{L_{0} \gamma^{2}}{2} \bbE\left[\left\|\sum_{i \in \cI} \alpha_{i}\tilde{\nabla} F_{\beta_i}(\bw^{(t)}; \xi_{i}^{(t)}) -\nabla L_{(\boldsymbol{\alpha}, \boldsymbol{\beta})}(\bw^{(t)})+\nabla L_{(\boldsymbol{\alpha}, \boldsymbol{\beta})}(\bw^{(t)})\right\|^{2} \right]  \label{thm1:proof:descent:1} \\
& = (-\gamma+ \frac{L_{0} \gamma^{2}}{2} )\left\|\nabla L_{(\boldsymbol{\alpha}, \boldsymbol{\beta})}(\bw^{(t)})\right\|^{2} + \frac{L_{0} \gamma^{2}}{2} \bbE\left[\left\|\sum_{i \in \cI} \alpha_{i}\tilde{\nabla} F_{\beta_i}(\bw^{(t)}; \xi_{i}^{(t)})-\nabla L_{(\boldsymbol{\alpha}, \boldsymbol{\beta})}(\bw^{(t)})\right\|^{2} \right]\label{thm1:proof:descent:2} \\
& \leq -\frac{\gamma}{2}\left\|\nabla L_{(\boldsymbol{\alpha}, \boldsymbol{\beta})}(\bw^{(t)})\right\|^{2}+\frac{L_{0} \gamma^{2}\sigma^{2}}{2}  \label{thm1:proof:descent:3} \\
& \leq -\frac{\gamma}{2}\left\|\nabla L_{(\boldsymbol{\alpha}, \boldsymbol{\beta})}(\bw^{(t)})\right\|^{2}+\frac{\gamma\sigma^{2}}{2} \label{thm1:proof:descent:4},
\end{align}
where~\eqref{thm1:proof:descent:1} is by Assumption~\ref{assumption:unbiased},~\eqref{thm1:proof:descent:3} is by assuming $\gamma \leq 1/L_{0}$ and Assumption~\ref{assumption:boundedvariance}, and~\eqref{thm1:proof:descent:4} is by assuming $\gamma \leq 1/L_{0}$ again.
From the update rule, we have
\begin{align}
& \bbE\left[\left\|\bw^{(t+1)} - \bw^{*} \right\|^2 \right] \nonumber \\
& =\bbE\left[\left\|\bw^{(t)} - \gamma \left(\sum_{i \in \cI} \alpha_{i} \tilde{\nabla} F_{\beta_i}(\bw^{(t)}; \xi_{i}^{(t)})\right) - \bw^{*} \right \|^2 \right] \label{thm1:proof:main:1} \\
& = \underbrace{\|\bw^{(t)} - \bw^{*}\|^2}_{A^{(t)}} - 2\gamma \nabla L_{(\boldsymbol{\alpha}, \boldsymbol{\beta})}(\bw^{(t)})^\top  (\bw^{(t)} - \bw^{*})  + \gamma^{2} \bbE\left[\left\|\sum_{i \in \cI} \alpha_{i} \tilde{\nabla} F_{\beta_i}(\bw^{(t)}; \xi_{i}^{(t)})\right\|^2\right] \label{thm1:proof:main:2} \\
& \leq A^{(t)} + 2\gamma (L_{(\boldsymbol{\alpha}, \boldsymbol{\beta})}(\bw^{*}) -L_{(\boldsymbol{\alpha}, \boldsymbol{\beta})}(\bw^{(t)})) + \gamma^{2} \bbE\left[\left\|\sum_{i \in \cI} \alpha_{i} \tilde{\nabla} F_{\beta_i}(\bw^{(t)}; \xi_{i}^{(t)})\right\|^2\right] \label{thm1:proof:main:3} \\
& = A^{(t)} \!+\! 2\gamma (L_{(\boldsymbol{\alpha}, \boldsymbol{\beta})}(\bw^{*}) \!-\! L_{(\boldsymbol{\alpha}, \boldsymbol{\beta})}(\bw^{(t)})) \!+\! \gamma^{2} \bbE\bigg[\bigg\|\sum_{i \in \cI} \alpha_{i} \tilde{\nabla} F_{\beta_i}(\bw^{(t)}; \xi_{i}^{(t)})\!-\!\nabla L_{(\boldsymbol{\alpha}, \boldsymbol{\beta})}(\bw^{(t)})\!+\!\nabla L_{(\boldsymbol{\alpha}, \boldsymbol{\beta})}(\bw^{(t)})\bigg\|^2\bigg] \label{thm1:proof:main:4}  \\
& = A^{(t)} + 2\gamma \left(L_{(\boldsymbol{\alpha}, \boldsymbol{\beta})}(\bw^{*}) - L_{(\boldsymbol{\alpha}, \boldsymbol{\beta})}(\bw^{(t)})\right) + \gamma^{2} (\sigma^{2}+\|\nabla L_{(\boldsymbol{\alpha}, \boldsymbol{\beta})}(\bw^{(t)})\|^{2}) \\
& \leq A^{(t)} + 2\gamma \left(L_{(\boldsymbol{\alpha}, \boldsymbol{\beta})}(\bw^{*}) - L_{(\boldsymbol{\alpha}, \boldsymbol{\beta})}(\bw^{(t)})\right)  + \gamma^{2} \left(2\sigma^{2}+\frac{2}{\gamma}\left(L_{(\boldsymbol{\alpha}, \boldsymbol{\beta})}(\bw^{(t)}) - \bbE\left[L_{(\boldsymbol{\alpha}, \boldsymbol{\beta})}(\bw^{(t+1)})\right]\right)\right) \label{thm1:proof:main:5}  \\
& = A^{(t)} + 2\gamma \left(L_{(\boldsymbol{\alpha}, \boldsymbol{\beta})}(\bw^{*}) - \bbE\left[L_{(\boldsymbol{\alpha}, \boldsymbol{\beta})}(\bw^{(t+1)})\right]\right) + 2\gamma^{2} \sigma^{2},\label{thm1:proof:main:6} 
\end{align}
where~\eqref{thm1:proof:main:3} is by convexity of $L_{(\boldsymbol{\alpha}, \boldsymbol{\beta})}$ and~\eqref{thm1:proof:main:5} is by~\eqref{thm1:proof:descent:4}. 
Rearranging terms in~\eqref{thm1:proof:main:6}, we have
\begin{align}
\bbE\left[L_{(\boldsymbol{\alpha}, \boldsymbol{\beta})}(\bw^{(t+1)})\right] - L_{(\boldsymbol{\alpha}, \boldsymbol{\beta})}(\bw^{*}) \leq \frac{1}{2\gamma}\left(A^{(t)} - \bbE\left[A^{(t+1)} \right]\right) + \gamma \sigma^{2}. \label{thm1:proof:main:7} 
\end{align}

Summing~\eqref{thm1:proof:main:7} over $t$ from $0$ to $T-1$, taking total expectation, and dividing both sides by $T$, we have
\begin{align}
\frac{1}{T}\sum_{t=0}^{T-1} \bbE\left[L_{(\boldsymbol{\alpha}, \boldsymbol{\beta})}(\bw^{(t+1)})\right] - L_{(\boldsymbol{\alpha}, \boldsymbol{\beta})}(\bw^{*})
& \leq \frac{1}{2\gamma T}\left(\|\bw^{(0)} - \bw^{*}\|^2 - \bbE\left[\left\|\bw^{(T)} - \bw^{*} \right\|^2 \right]\right) + \gamma \sigma^{2} \\
& \leq \frac{1}{2\gamma T}\left(\|\bw^{(0)} - \bw^{*}\|^2 \right) + \gamma \sigma^{2}.
\end{align}
As $L_{(\boldsymbol{\alpha}, \boldsymbol{\beta})}$ is a convex function, we have
\begin{align}
\bbE\left[L_{(\boldsymbol{\alpha}, \boldsymbol{\beta})}\left(\frac{1}{T} \sum_{t=0}^{T-1} \bw^{(t)}\right)\right] - L(\bw^{*}) \leq \frac{1}{T}\sum_{t=0}^{T-1} \bbE\left[L_{(\boldsymbol{\alpha}, \boldsymbol{\beta})}(\bw^{(t+1)})\right] - L_{(\boldsymbol{\alpha}, \boldsymbol{\beta})}(\bw^{*}),
\end{align}
which completes the proof.
\end{proof}

\section{Proof of Theorem~\ref{thm:noncvx}}\label{appendix:thm2}
\begin{proof}
We derive the bound using the same proof in the convex case up to~\eqref{thm1:proof:descent:3}.
\begin{align}
\bbE\left[L_{(\boldsymbol{\alpha}, \boldsymbol{\beta})}(\bw^{(t+1)})\right] - L_{(\boldsymbol{\alpha}, \boldsymbol{\beta})}(\bw^{(t)}) 
& \leq -\frac{\gamma}{2}\left\|\nabla L_{(\boldsymbol{\alpha}, \boldsymbol{\beta})}(\bw^{(t)})\right\|^{2}+\frac{L_{0} \gamma^{2}\sigma^{2}}{2}.  \label{proof:noncvx:start}
\end{align}
Summing~\eqref{proof:noncvx:start} over $t$ from $0$ to $T-1$, taking total expectation, and dividing both sides by $T$, we have
\begin{align}
\bbE\left[L_{(\boldsymbol{\alpha}, \boldsymbol{\beta})}(\bw^{(T)})\right] - L_{(\boldsymbol{\alpha}, \boldsymbol{\beta})}(\bw^{(0)}) 
& \leq -\frac{\gamma}{2T}\sum_{t=0}^{T-1}\left\|\nabla L_{(\boldsymbol{\alpha}, \boldsymbol{\beta})}(\bw^{(t)})\right\|^{2}+\frac{L_{0} \gamma^{2}\sigma^{2}}{2}. 
\end{align}
Rearranging terms, we have
\begin{align}
\frac{1}{T}\sum_{t=0}^{T-1}\left\|\nabla L_{(\boldsymbol{\alpha}, \boldsymbol{\beta})}(\bw^{(t)})\right\|^{2}
& \leq  \frac{2}{\gamma T}\left(L_{(\boldsymbol{\alpha}, \boldsymbol{\beta})}(\bw^{(0)}) -\bbE\left[L_{(\boldsymbol{\alpha}, \boldsymbol{\beta})}(\bw^{(T)})\right] \right)
+L_{0} \gamma  \sigma^{2} \\
& \leq \frac{2}{\gamma T}\left(L_{(\boldsymbol{\alpha}, \boldsymbol{\beta})}(\bw^{(0)})-L^{*} \right)
+L_{0} \gamma \sigma^{2},
\end{align}
where $L^{*} = \min_{\bw \in \cW} L_{(\boldsymbol{\alpha}, \boldsymbol{\beta})}(\bw)$.
\end{proof}

\section{Additional Experiments}\label{appendix:additional}
In this section, we provide additional experiments that are omitted in the main text due to space limits.
\subsection{Other fairness notions}\label{appendix:fairness}
We consider other popular fairness notions.
\begin{definition}[Demographic Parity (DP)~\cite{AISTATS17_Zafar}]\label{def:dp}
A model satisfies demographic parity if 
\begin{align}
    P(\hat{Y}=1|A=0) = P(\hat{Y}=1|A=1).
\end{align}
\end{definition}

\begin{definition}[Equality of Opportunity (EO)~\cite{NIPS16_Hardt}]\label{def:eo}
A model satisfies equality of opportunity if
\begin{align}
P(\hat{Y}=1|Y=1, A=0) = P(\hat{Y}=1|Y=1, A=1).
\end{align}
\end{definition}
Definition~\ref{def:dp} is satisfied when the predicted outcomes have equal probability between these two groups. Definition~\ref{def:eo} to hold, the average predicted outcomes among individuals with a true label of $Y=1$ must be identical across the two groups determined by the protected feature $A$. Here, we evaluate the learned model's fairness by measuring the violation of EA, DP, and EO properties. The violation of EA is already defined in Definition~\ref{def:ea:violation} and the violation of DP and EO are defined in Definition~\ref{def:violations}.
\begin{definition}[Violations of Fairness Metrics]\label{def:violations}
The DP violation is defined as 
\begin{align}\label{def:dp:violation}
    \epsilon_{\text{DP}} = |P(\hat{Y}=1|A=0) - P(\hat{Y}=1|A=1)|,
\end{align}
and the EO violation $\epsilon_{\text{EO}}$ is defined as 
\begin{align}\label{def:eo:violation}
     \epsilon_{\text{EO}} = |P(\hat{Y}=1|Y=1, A=0) - P(\hat{Y}=1|Y=1, A=1)|.
\end{align}
\end{definition}
Based on the definitions of $\epsilon_{\text{EA}}$, $\epsilon_{\text{DP}}$, $\epsilon_{\text{EO}}$, it is clear that a lower value for each metric means a greater degree of fairness imposed by the model.

The results of different fairness metrics for both convex and nonconvex loss of various $\boldsymbol{\beta}$ under minimization optimization are shown in Table~\ref{tab:comparison:metrics}. It is evident that for both convex and nonconvex loss, with higher $\boldsymbol{\beta}$, the learned model tends to have a smaller EA violation, yet larger DP violation and EO violation in Table~\ref{tab:comparison:metrics}. This observation matches with the argument in~\cite{FairML_book} that those fairness metrics can be contradictory. In the extreme case of \textsc{Minimax}, the EA violation is very low, meaning the accuracy of each group is almost uniform. Our \textsc{Surrogate Min} class of algorithms with different values of $\boldsymbol{\beta}$ can obtain different levels of fairness with respect to those fairness metrics between the vanilla ERM and \textsc{Minimax}.

\begin{table*}[t]
  \caption{Comparison on different fairness metrics for convex and nonconvex loss on Adult.}
  \label{tab:comparison:metrics}
  \centering
  \begin{tabular}{lllllll}
    \toprule
    & \multicolumn{3}{c}{Convex Loss} & \multicolumn{3}{c}{Nonconvex Loss}  \\
     \cmidrule(r){2-4} \cmidrule(r){5-7}
     & $\epsilon_{\text{EA}} \downarrow$ & $\epsilon_{\text{DP}} \downarrow$  & $\epsilon_{\text{EO}} \downarrow$ & $\epsilon_{\text{EA}} \downarrow$ & $\epsilon_{\text{DP}} \downarrow$  & $\epsilon_{\text{EO}} \downarrow$ \\
    \midrule
    \textsc{Surrogate-Min} & & & \\
     $\beta_{0} = \beta_{1} = 0$ & $ 0.1747$ & $\boldsymbol{0.2846}$       & $\boldsymbol{0.1043}$ & $0.1983$ & $\boldsymbol{0.3526}$       & $\boldsymbol{0.1564}$ \\

     $\beta_{0} = \beta_{1} = 1$  & $0.1589$   & $0.3147$  & $0.1309$ & $0.1712$   & $0.3922$ & $0.1953$   \\
    $\beta_{0} = \beta_{1} = 2$   & $0.1452$   & $0.3451$      & $0.1596$  & $0.1466$   & $0.4248$      & $0.2295$ \\    
    \midrule
    \textsc{Minimax} 
    & $\boldsymbol{0.0809}$   & $0.4827$      & $0.3216$ & $\boldsymbol{0.0829}$   & $0.4949$      & $0.3179$\\
    \bottomrule
  \end{tabular}
\end{table*}



\subsection{Non-convex Loss on Adult and COMPAS}
For the nonconvex case of \textsc{Surrogate-Min}, we train a multiple-layer neural network model (one hidden layer of $10$ neurons and ReLU as the activation function) with cross-entropy loss with a batch size of $8$ and a learning rate of $0.001$ for both Adult and COMPAS datasets. 

The results of different methods for the nonconvex loss on Adult are already shown in Figure~\ref{fig:adult:noncvx:comparison}. Here, we examine the effects of varying $\boldsymbol{\beta}$ values on the training dynamics, as shown in Figure~\ref{fig:adult:noncvx:beta0}. Specifically, we fix $\beta_{1} = 0$ and vary $\beta_{0}$. With higher $\beta_{0}$, we are essentially focusing more on the minority group, which will improve our fairness metric on EA violation. As shown in Figure~\ref{fig:adult:noncvx:beta0:worstaccuracy} and~\ref{fig:adult:noncvx:beta0:gap}, higher $\beta_{0}$ results improved worst accuracy and reduced EA violation.

The results of different methods for the nonconvex loss on COMPAS are shown in Figure~\ref{fig:COMPAS:noncvx:comparison}. We can observe that our approach achieves superior accuracy (Figure \ref{fig:COMPAS:noncvx:comparison:accuracy}) with the lowest loss (Figure \ref{fig:COMPAS:noncvx:comparison:loss}). It somehow even outperforms \textsc{Minimax} in terms of worst-case (Figure \ref{fig:COMPAS:noncvx:comparison:worstaccuracy}) and EA violation (Figure \ref{fig:COMPAS:noncvx:comparison:gap}), which may be due to the lack of guarantee to achieve optimal solutions for solving nonconvex-concave optimization problems via SGDA. We also examine the effects of varying $\boldsymbol{\beta}$ values on the training dynamics. As demonstrated in Figure \ref{fig:COMPAS:noncvx:beta0}, increasing $\beta_{0}$ leads to improvements in worst-case accuracy and a reduction in EA violation, consistent with our expectations. Similar to the previous experiment, this confirms that our method can flexibly achieve varying degrees of fairness. 

\begin{figure*}[tb!]
     \centering
     \begin{subfigure}[b]{0.24\textwidth}
         \centering
         \includegraphics[width=\textwidth]{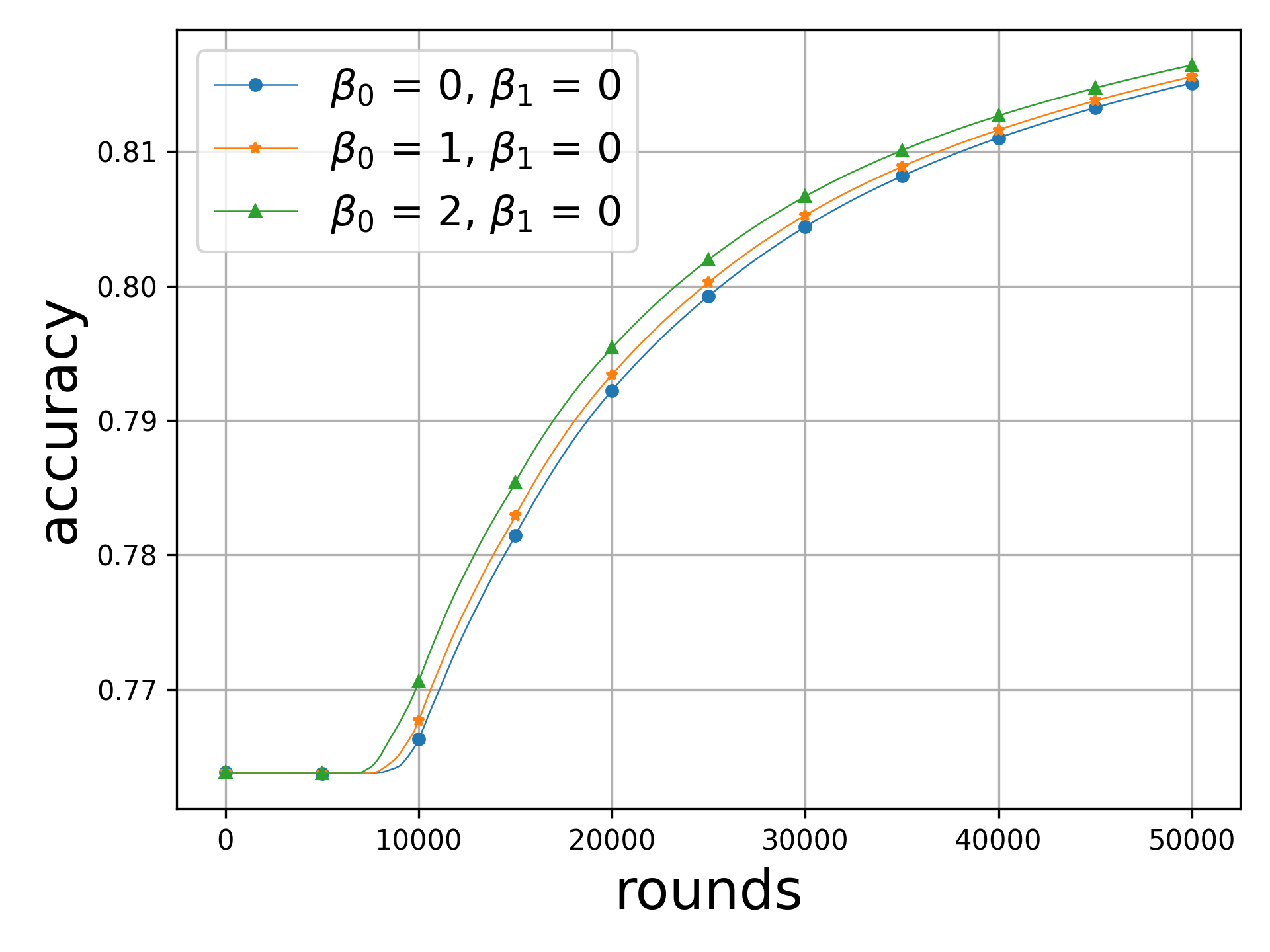}
         \caption{Overall test accuracy}\label{fig:adult:noncvx:beta0:accuracy}
     \end{subfigure}
     \hfill
     \begin{subfigure}[b]{0.24\textwidth}
         \centering
         \includegraphics[width=\textwidth]{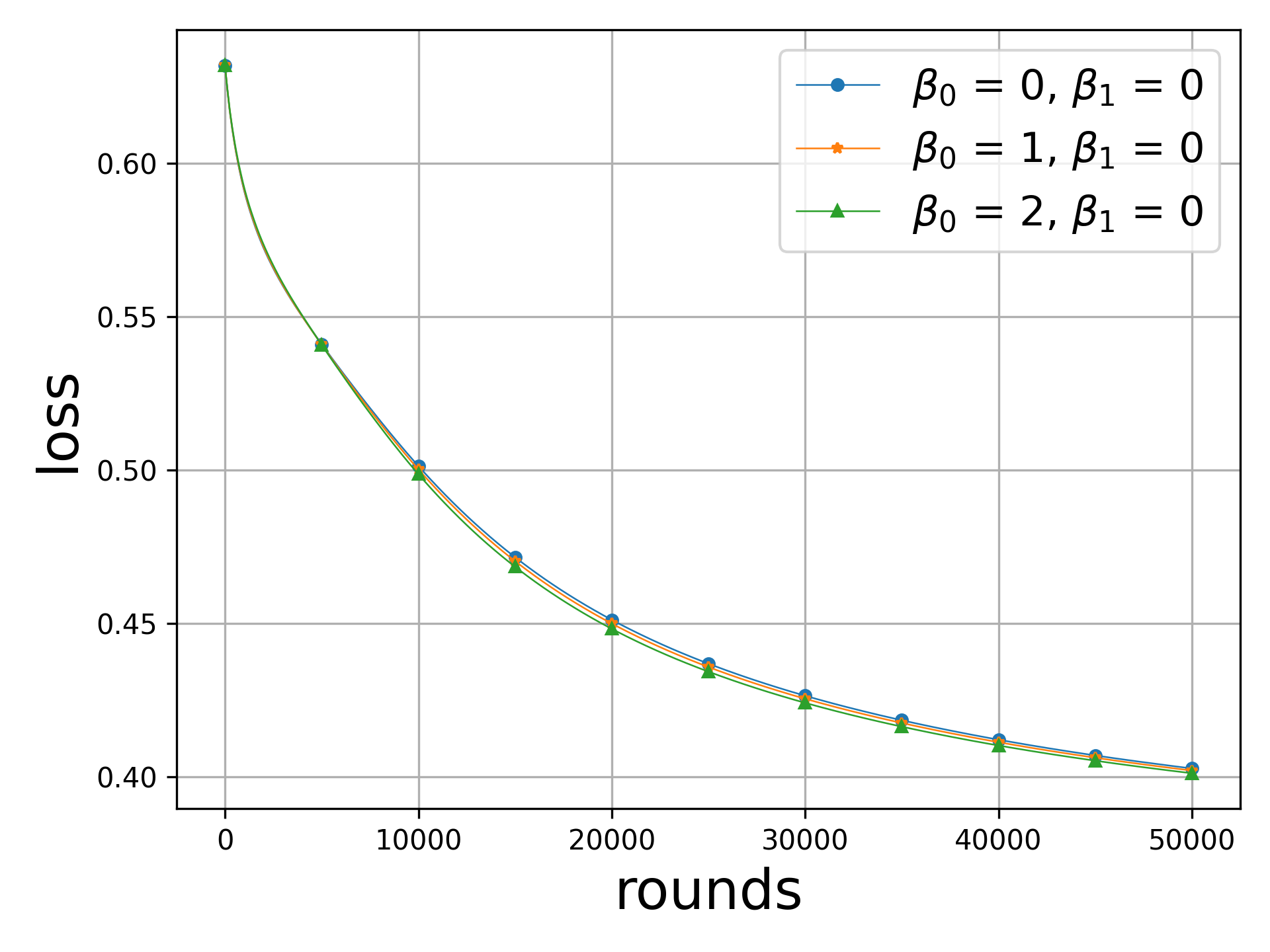}
         \caption{Overall test loss}\label{fig:adult:noncvx:beta0:loss}
     \end{subfigure}
     \hfill
     \begin{subfigure}[b]{0.24\textwidth}
         \centering
         \includegraphics[width=\textwidth]{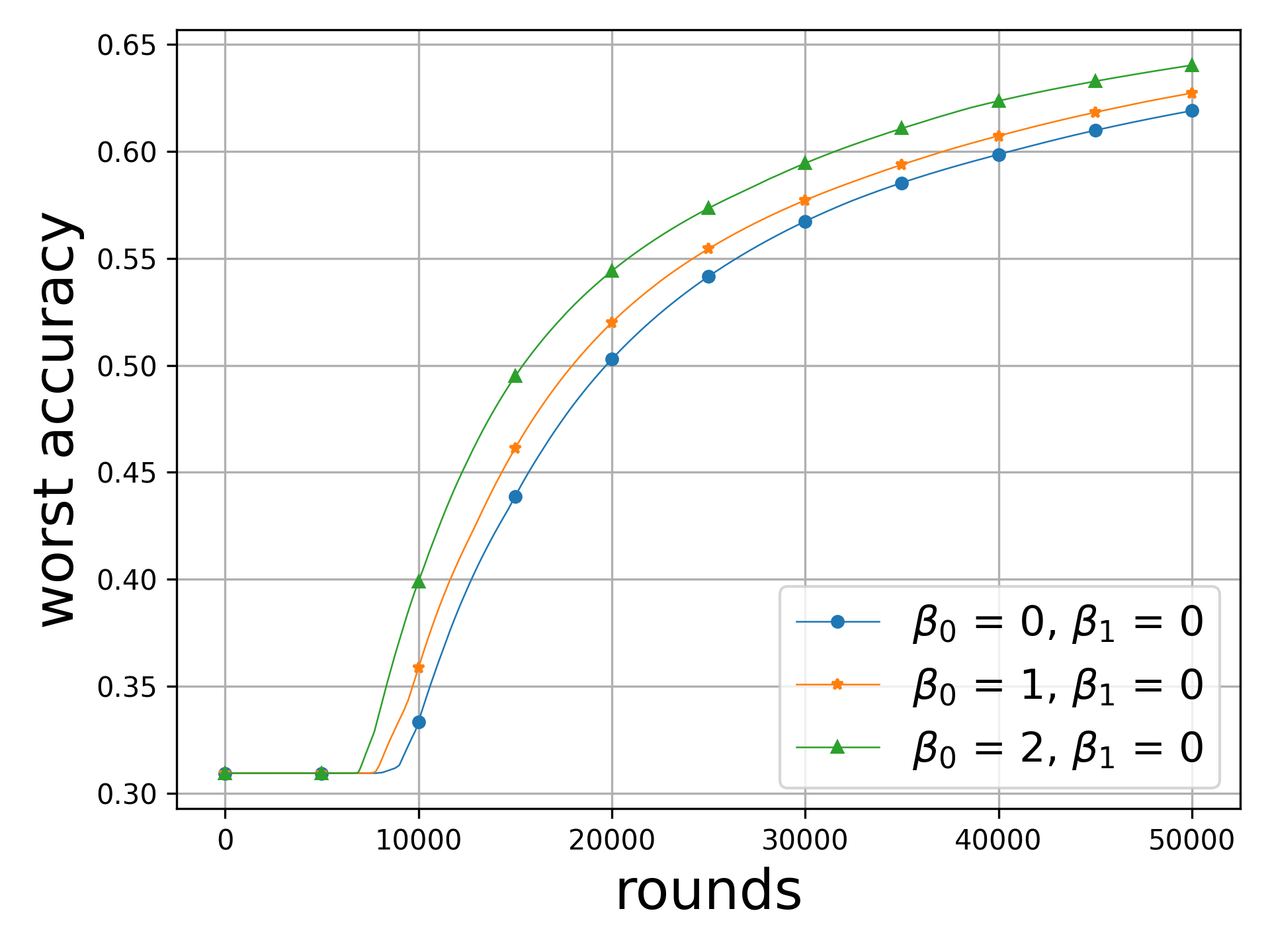}
         \caption{Worst Accuracy}\label{fig:adult:noncvx:beta0:worstaccuracy}
     \end{subfigure}
     \hfill
     \begin{subfigure}[b]{0.24\textwidth}
         \centering
         \includegraphics[width=\textwidth]{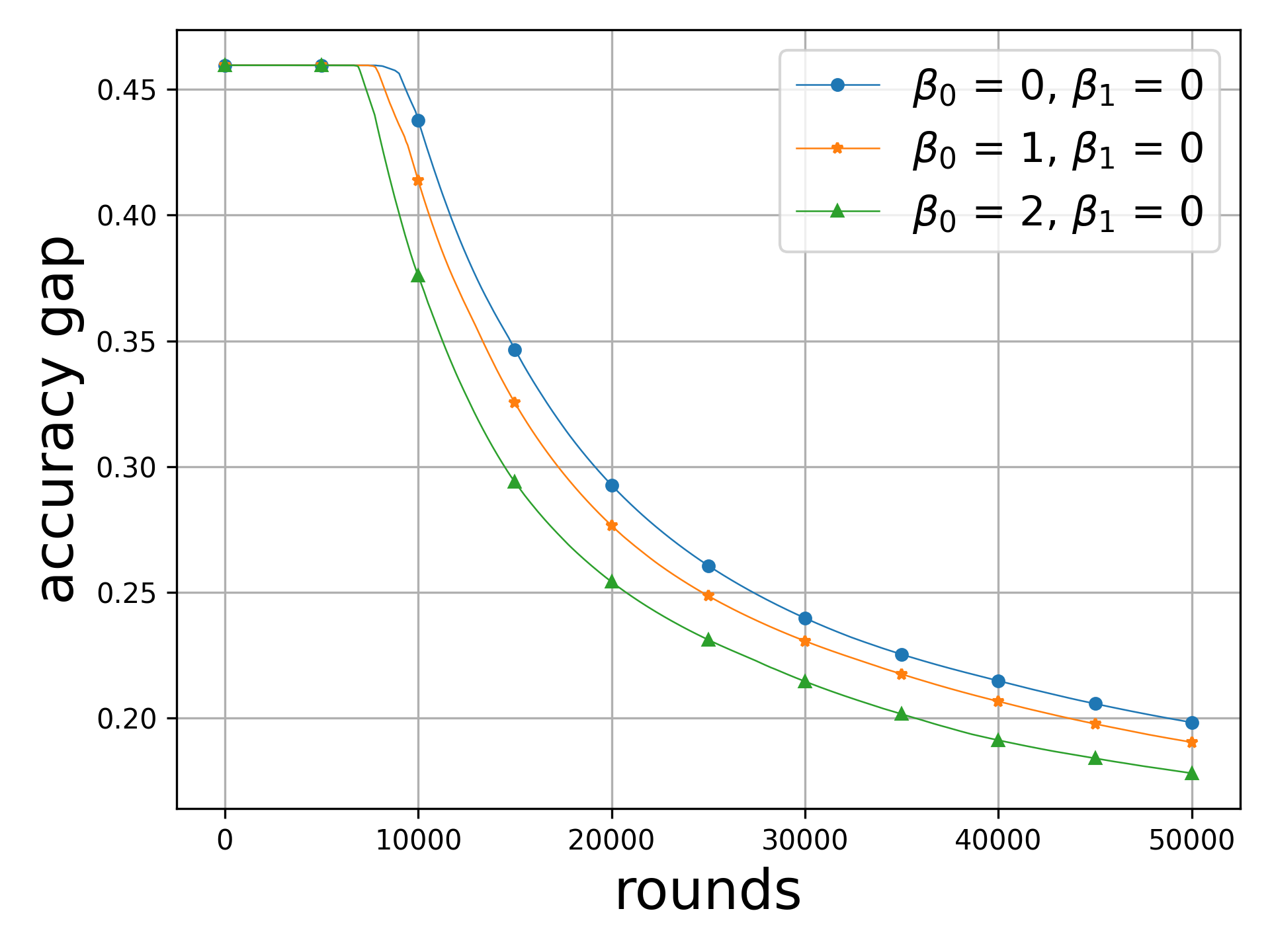}
         \caption{EA violation}\label{fig:adult:noncvx:beta0:gap}
     \end{subfigure}
        \caption{Comparison of accuracy, loss, worst accuracy, and $\epsilon_{\text{EA}}$  for nonconvex loss on Adult with varying $\beta_{0}$.}
        \label{fig:adult:noncvx:beta0}
\end{figure*}

\begin{figure*}[tb!]
     \centering
     \begin{subfigure}[b]{0.24\textwidth}
         \centering
         \includegraphics[width=\textwidth]{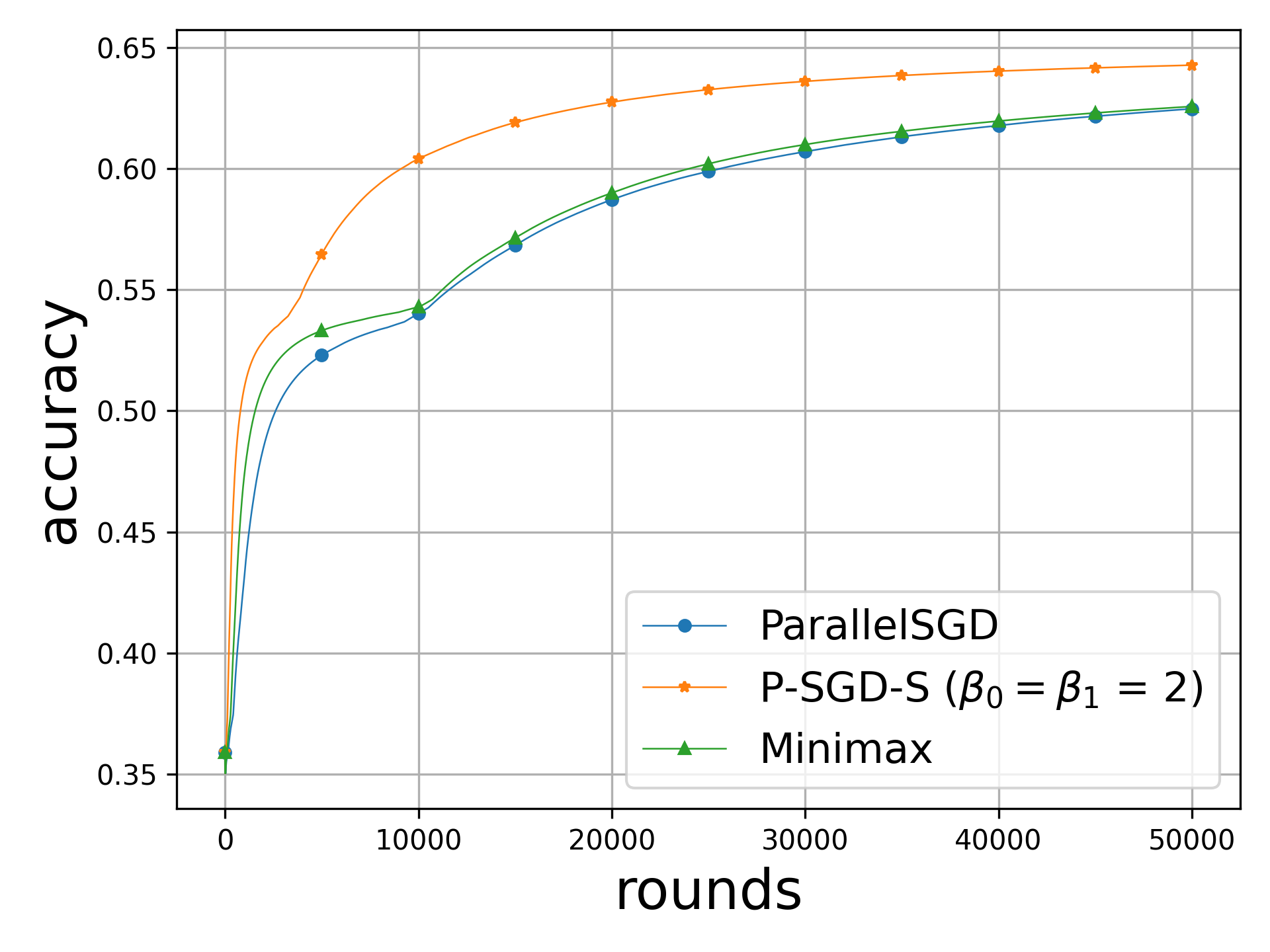}
         \caption{Overall test accuracy}
         \label{fig:COMPAS:noncvx:comparison:accuracy}
     \end{subfigure}
     \hfill
     \begin{subfigure}[b]{0.24\textwidth}
         \centering
         \includegraphics[width=\textwidth]{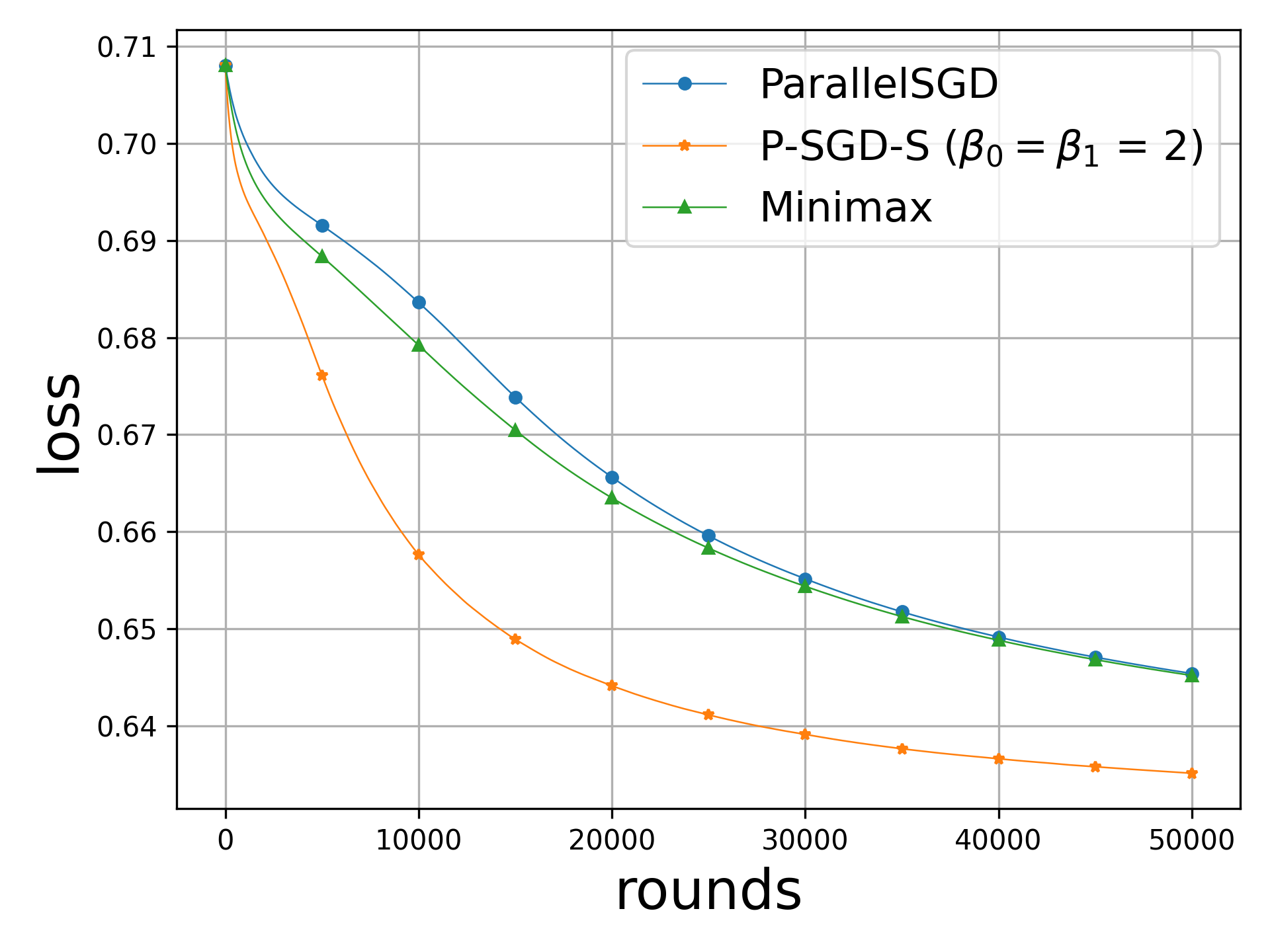}
         \caption{Overall test loss}
         \label{fig:COMPAS:noncvx:comparison:loss}
     \end{subfigure}
     \hfill
     \begin{subfigure}[b]{0.24\textwidth}
         \centering
         \includegraphics[width=\textwidth]{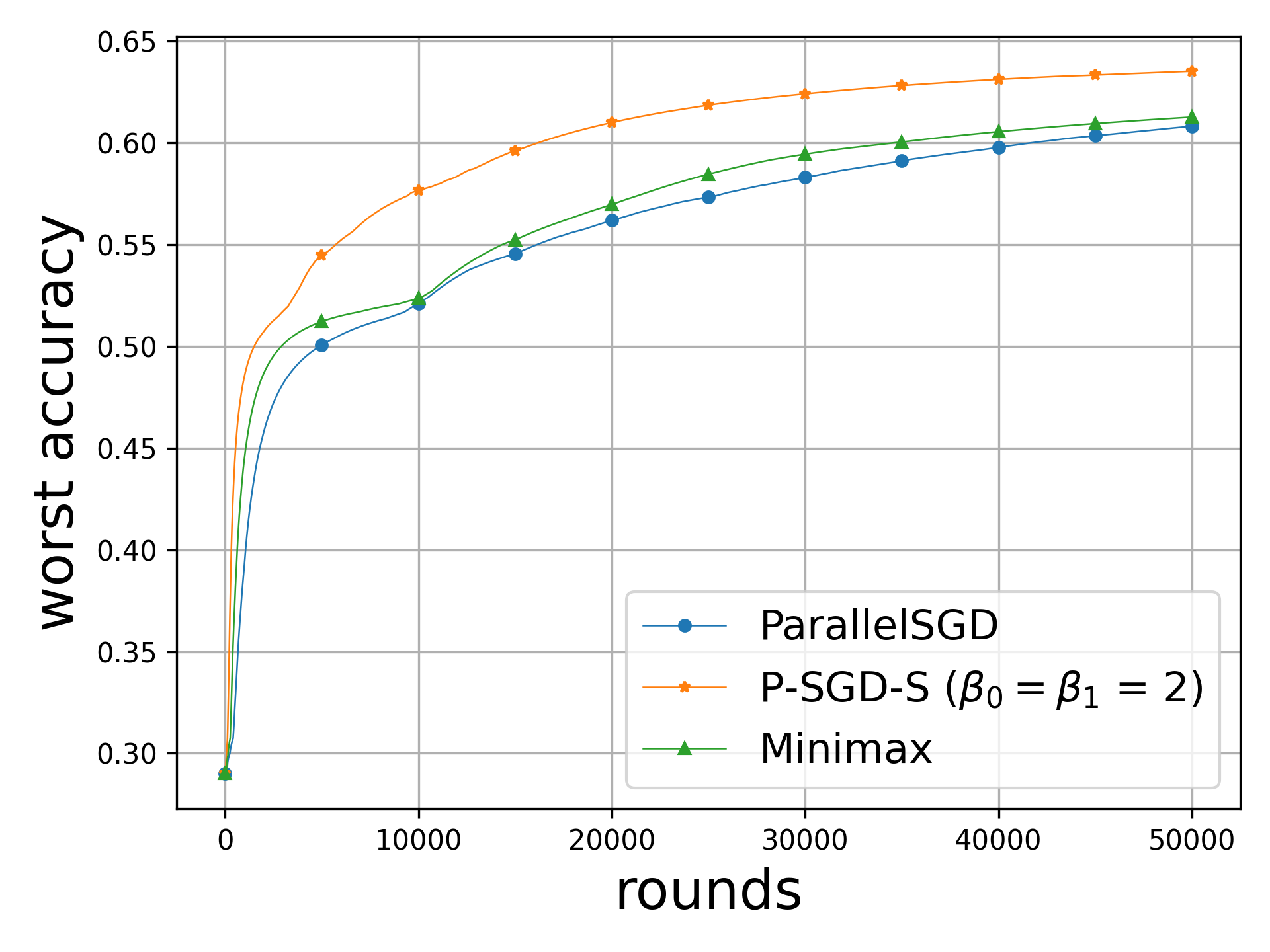}
         \caption{Worst Accuracy}
         \label{fig:COMPAS:noncvx:comparison:worstaccuracy}
     \end{subfigure}
     \hfill
     \begin{subfigure}[b]{0.24\textwidth}
         \centering
         \includegraphics[width=\textwidth]{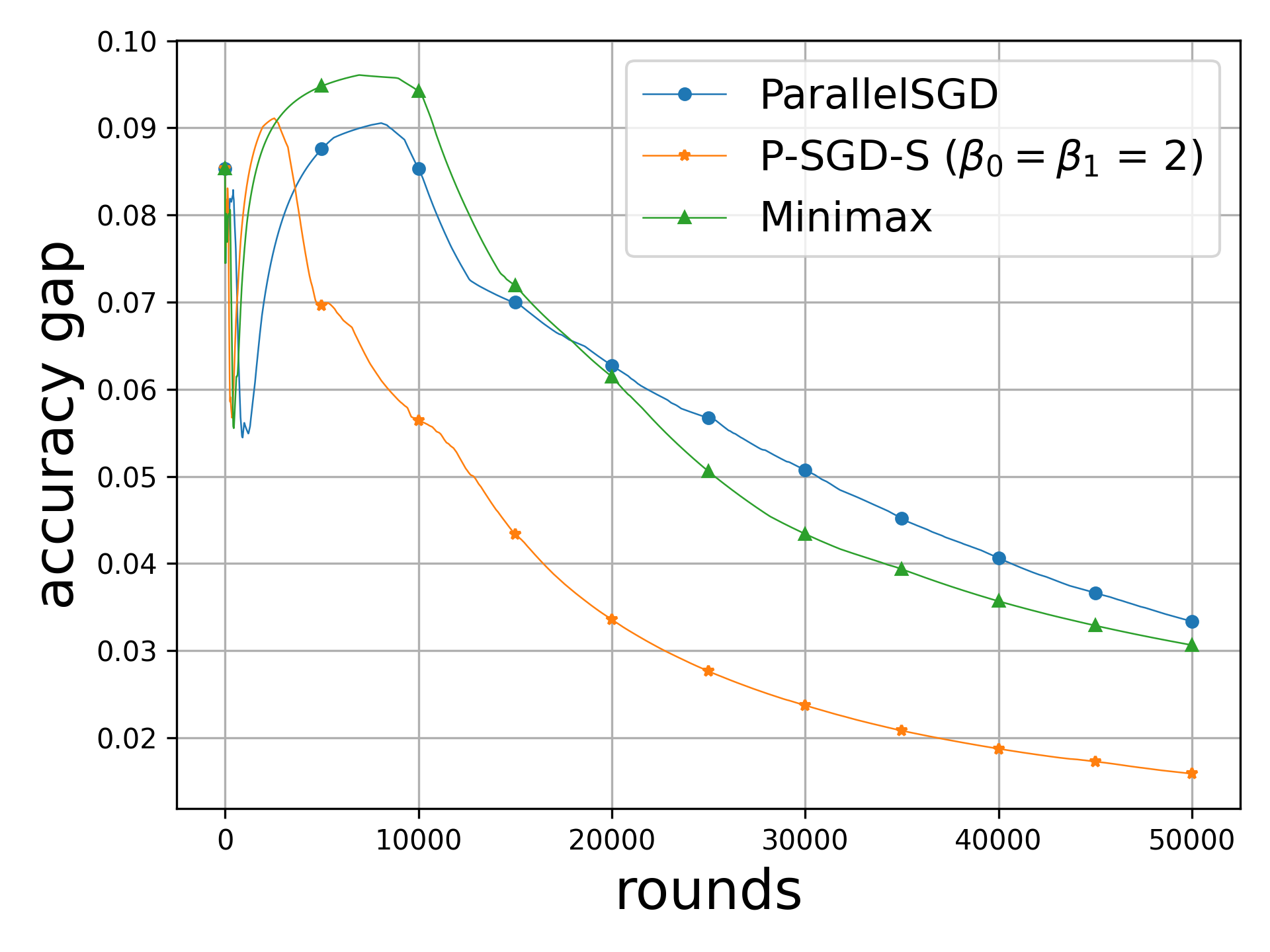}
         \caption{EA violation}
         \label{fig:COMPAS:noncvx:comparison:gap}
     \end{subfigure}
        \caption{Comparison of average accuracy, loss, worst accuracy, and $\epsilon_{\text{EA}}$ for nonconvex loss on COMPAS with different methods.}
        \label{fig:COMPAS:noncvx:comparison}
\end{figure*}

\begin{figure*}[tb!]
     \centering
     \begin{subfigure}[b]{0.24\textwidth}
         \centering
         \includegraphics[width=\textwidth]{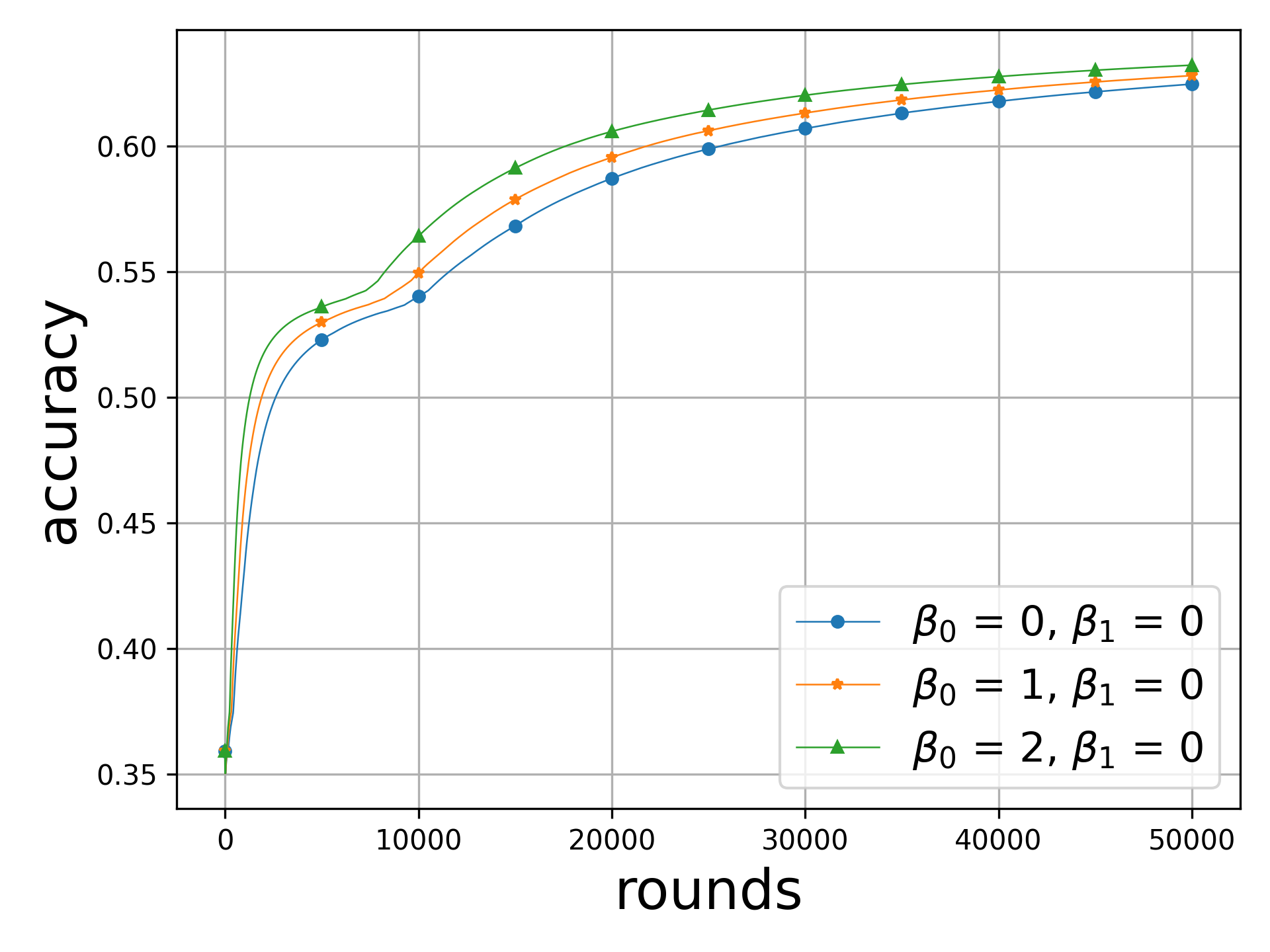}
         \caption{Overall test accuracy}\label{fig:COMPAS:noncvx:beta0:accuracy}
     \end{subfigure}
     \hfill
     \begin{subfigure}[b]{0.24\textwidth}
         \centering
         \includegraphics[width=\textwidth]{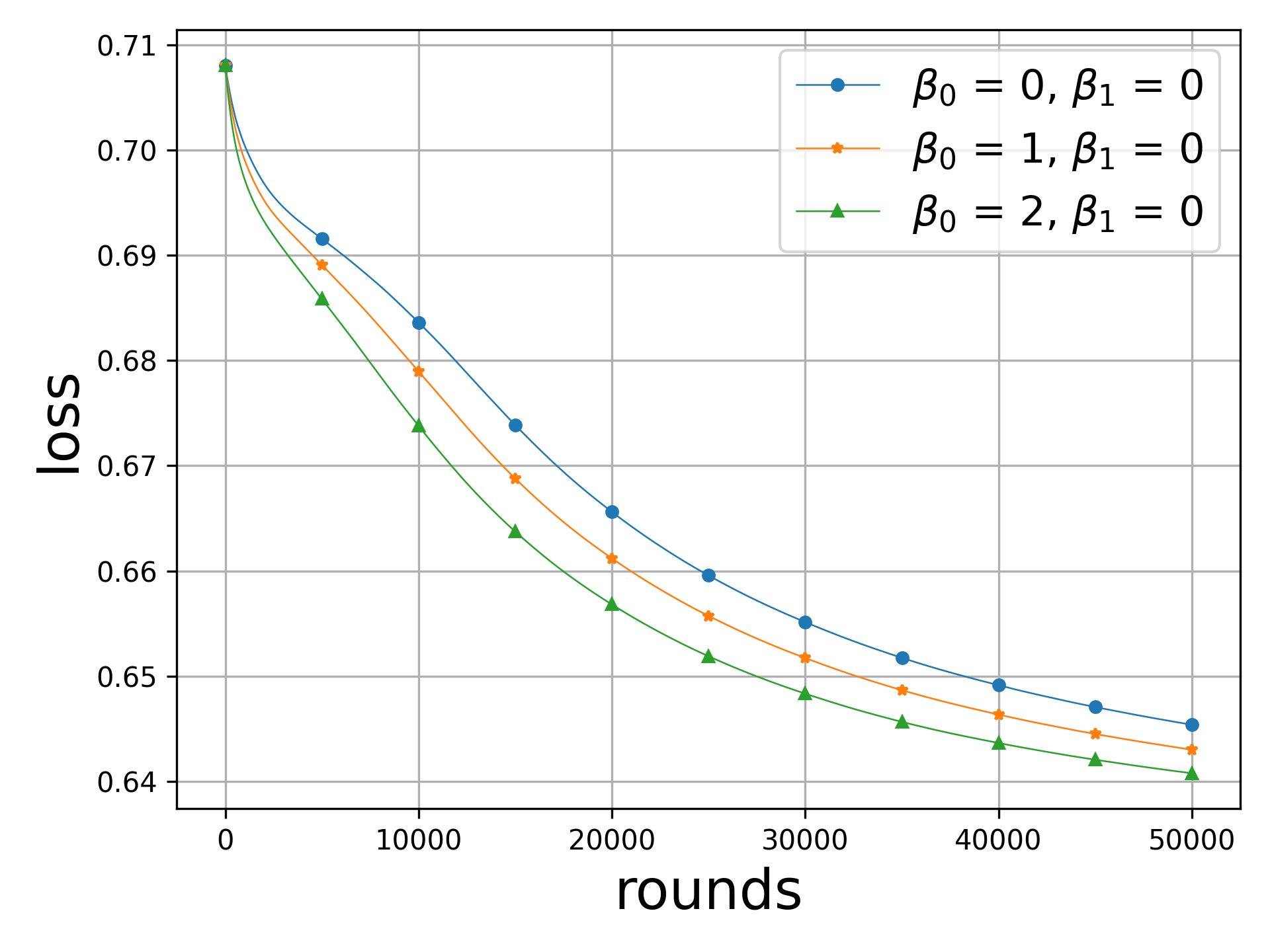}
         \caption{Overall test loss}\label{fig:COMPAS:noncvx:beta0:loss}
     \end{subfigure}
     \hfill
     \begin{subfigure}[b]{0.24\textwidth}
         \centering
         \includegraphics[width=\textwidth]{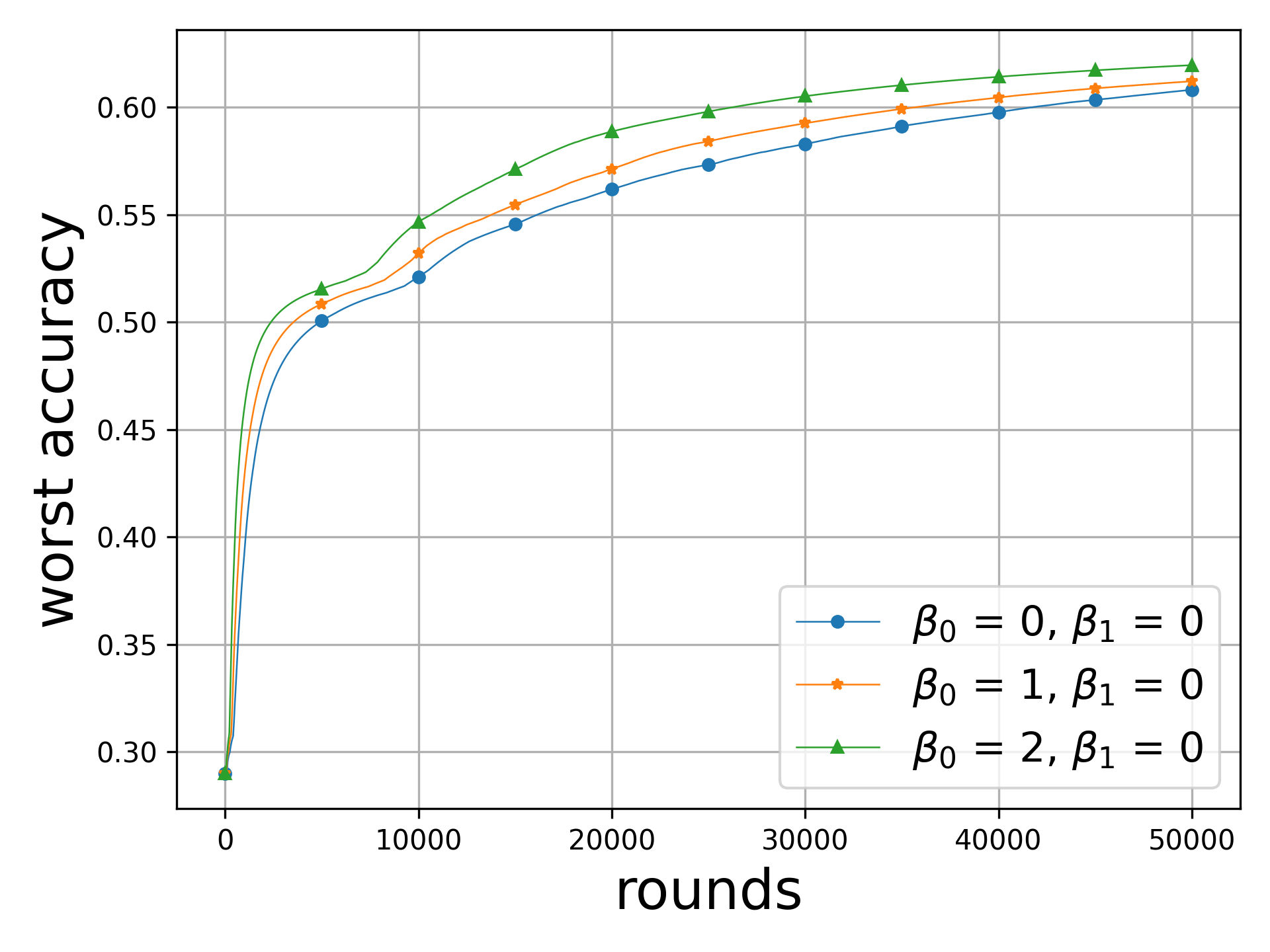}
         \caption{Worst Accuracy}\label{fig:COMPAS:noncvx:beta0:worstaccuracy}
     \end{subfigure}
     \hfill
     \begin{subfigure}[b]{0.24\textwidth}
         \centering
         \includegraphics[width=\textwidth]{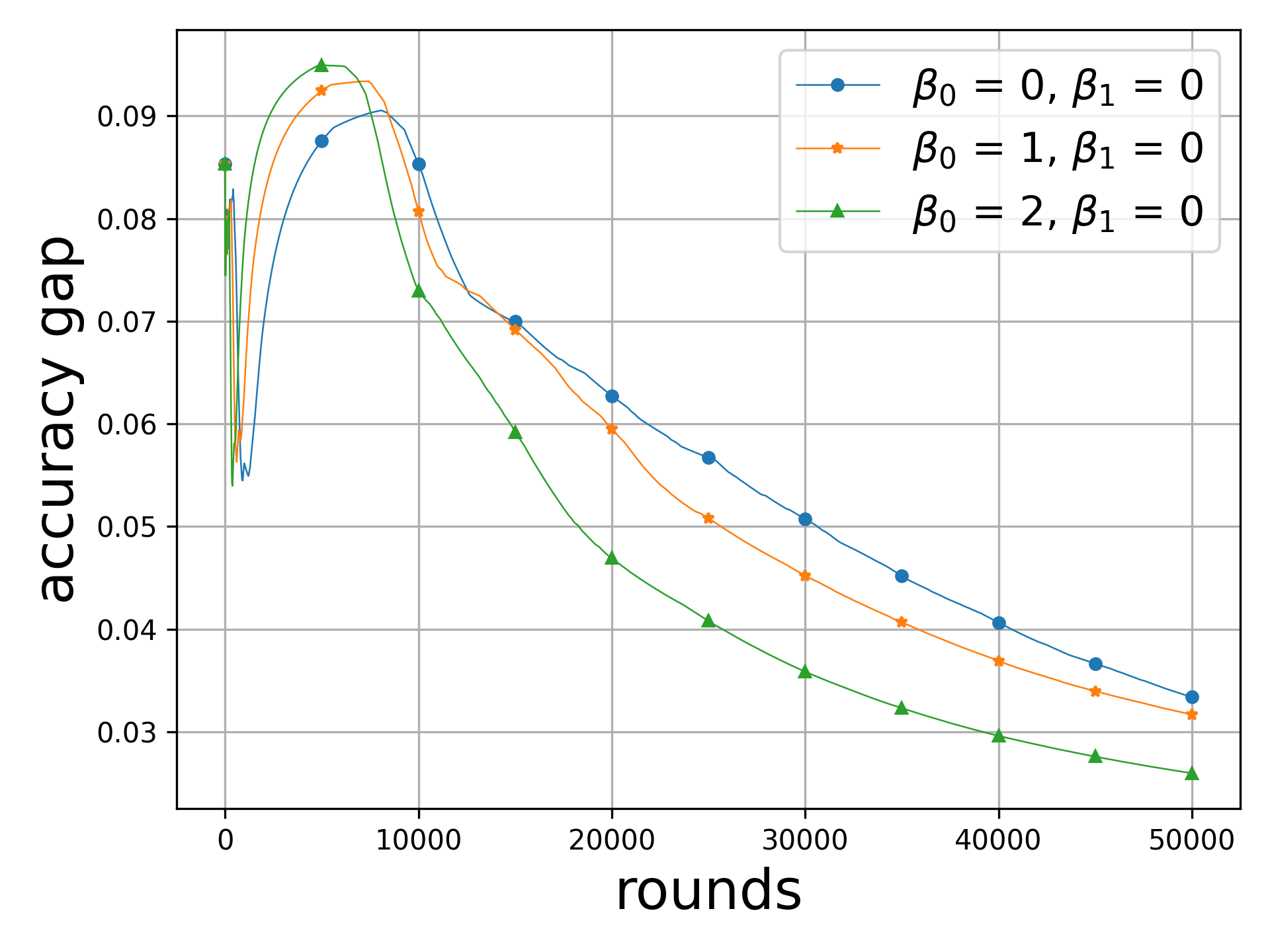}
         \caption{EA violation}\label{fig:COMPAS:noncvx:beta0:gap}
     \end{subfigure}
        \caption{Comparison of accuracy, loss, worst accuracy, and $\epsilon_{\text{EA}}$ for nonconvex loss on COMPAS with varying $\beta_{0}$.}
        \label{fig:COMPAS:noncvx:beta0}
\end{figure*}

\subsection{Label-partitioned Scenario}

We solve an image classification problem on the Fashion-MNIST~\cite{Fashion_MNIST} dataset for the label-partitioned scenario. There are $10$ classes of clothes, each containing $6000$ training examples and $1000$ test examples of gray-scale images of size $28 \times 28$. As we focus on the case of binary labels, we divide the dataset into two groups: shirt, i.e., group $0$, and non-shirt, i.e., group $1$. We choose $\boldsymbol{\alpha}$ according to the data ratio of each group, i.e., $\alpha_{0} = 0.1$ and $\alpha_{1} = 0.9$. We apply $\beta_{0}$ to the shirt group and $\beta_{1}$ to the nonshirt group. We set the number of iterations to $100,000$. For the convex case, we train a logistic regression model on cross-entropy loss with a batch size of $1$ and a learning rate of $0.001$. 

Although this scenario does not fall into the case where the level of fairness can be quantified by DP
violation, EO violation, and EA violation, we can still use a similar metric to the EA violation, i.e., accuracy gap, which is defined as the absolute value of the difference between the test accuracy of the two label-partitioned groups. The results of time-averaged test accuracy, test loss, worst test accuracy, and accuracy gap between groups of different benchmarks are shown in Figure~\ref{fig:fashion:cvx:comparison} for the convex case. For our method, we choose $\beta_{0} = \beta_{1} = 2$. In Figure~\ref{fig:fashion:cvx:comparison:accuracy}, we observe that for the average test accuracy, our method converges to a better model compared to the vanilla \textsc{Parallel SGD} as well as the \textsc{Minimax}. However, in terms of the worst test accuracy among the two groups shown in Figure~\ref{fig:fashion:cvx:comparison:worstaccuracy}, it is clear that \textsc{Minimax} achieves the highest worst accuracy, which matches the theory that minimax is by design trying to minimize the loss even for the any mixture of the two distributions. Nevertheless, we can also observe that our method lies in between the \textsc{Parallel SGD} and the \textsc{Minimax}, which is as expected. Similarly, for the accuracy gap, \textsc{Minimax} outperforms our method, which further outperforms the \textsc{ParallelSGD}. We also demonstrate the effects of $\boldsymbol{\beta}$ in Figure~\ref{fig:fashion:cvx:beta1}. We fix $\beta_{0} = 0$ and vary $\beta_{1}$. With higher $\beta_{1}$, we are essentially focusing more on the majority group, which will worsen our fairness metric on accuracy. As shown in Figure~\ref{fig:fashion:cvx:beta1:worstaccuracy} and~\ref{fig:fashion:cvx:beta1:gap}, higher $\beta_{1}$ results a lower worst accuracy and a higher accuracy gap. 

\begin{figure*}[tb!]
     \centering
     \begin{subfigure}[b]{0.24\textwidth}
         \centering
         \includegraphics[width=\textwidth]{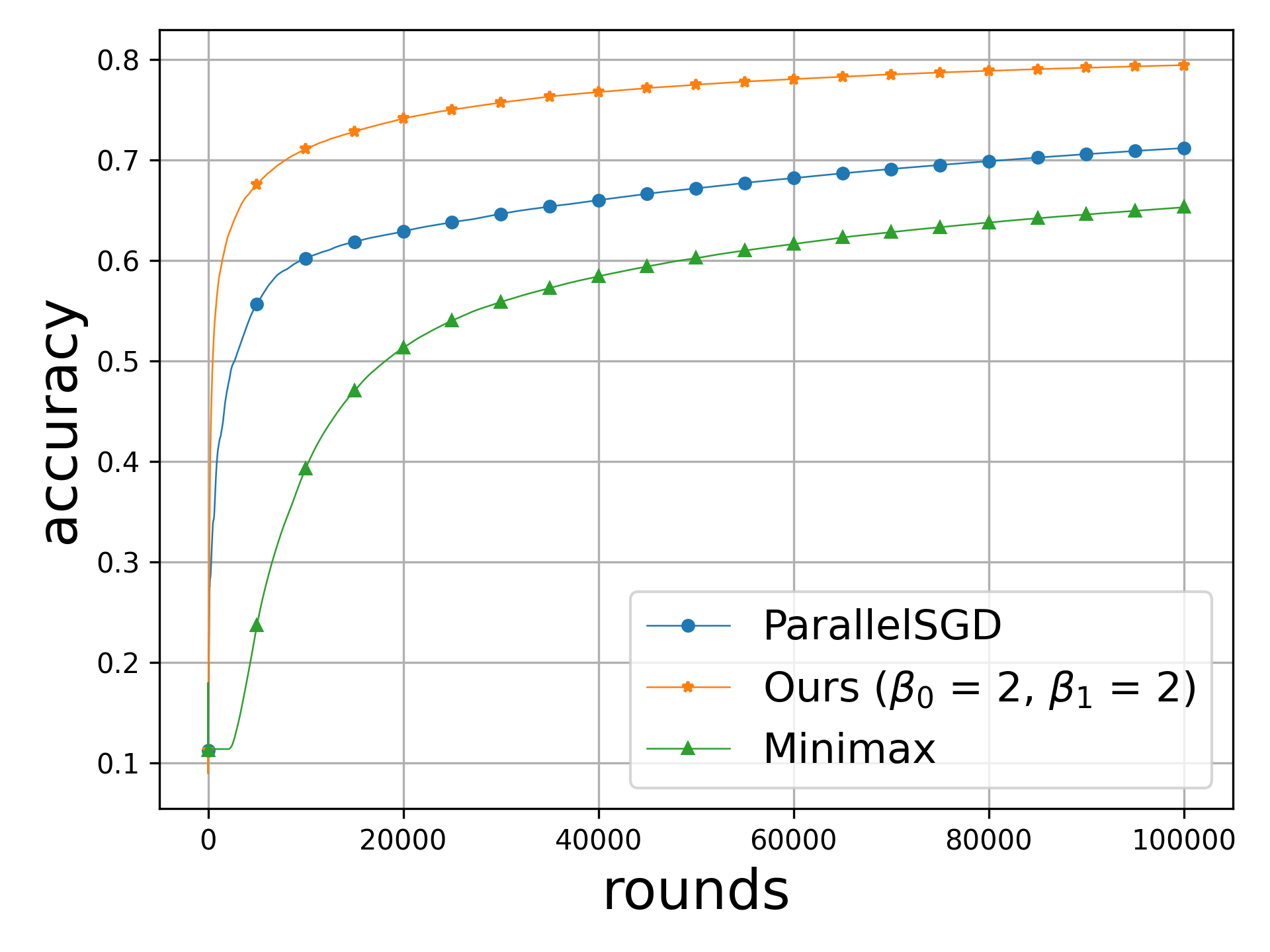}
         \caption{Overall test accuracy}
         \label{fig:fashion:cvx:comparison:accuracy}
     \end{subfigure}
     \hfill
     \begin{subfigure}[b]{0.24\textwidth}
         \centering
         \includegraphics[width=\textwidth]{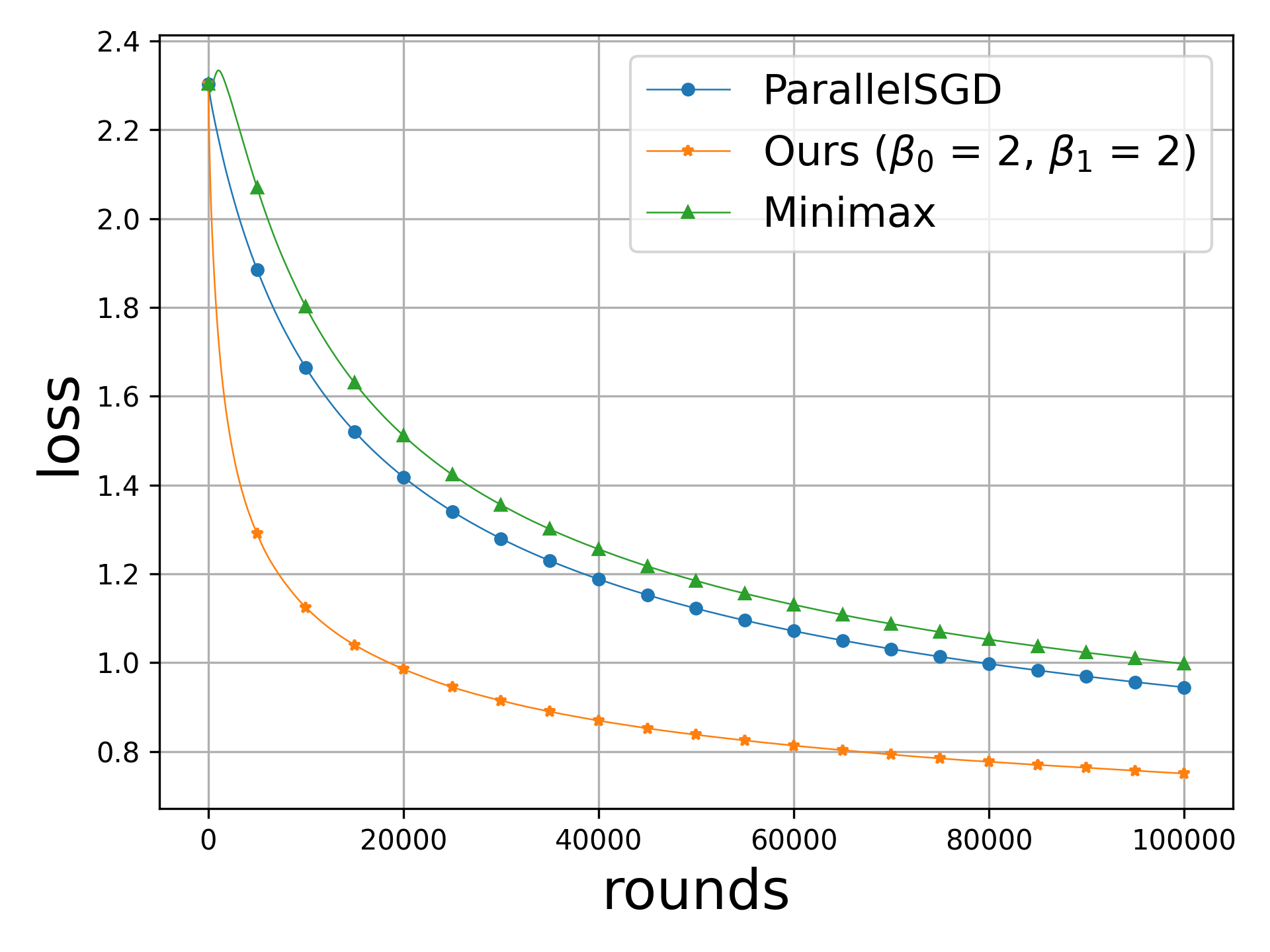}
         \caption{Overall test loss}
         \label{fig:fashion:cvx:comparison:loss}
     \end{subfigure}
     \hfill
     \begin{subfigure}[b]{0.24\textwidth}
         \centering
         \includegraphics[width=\textwidth]{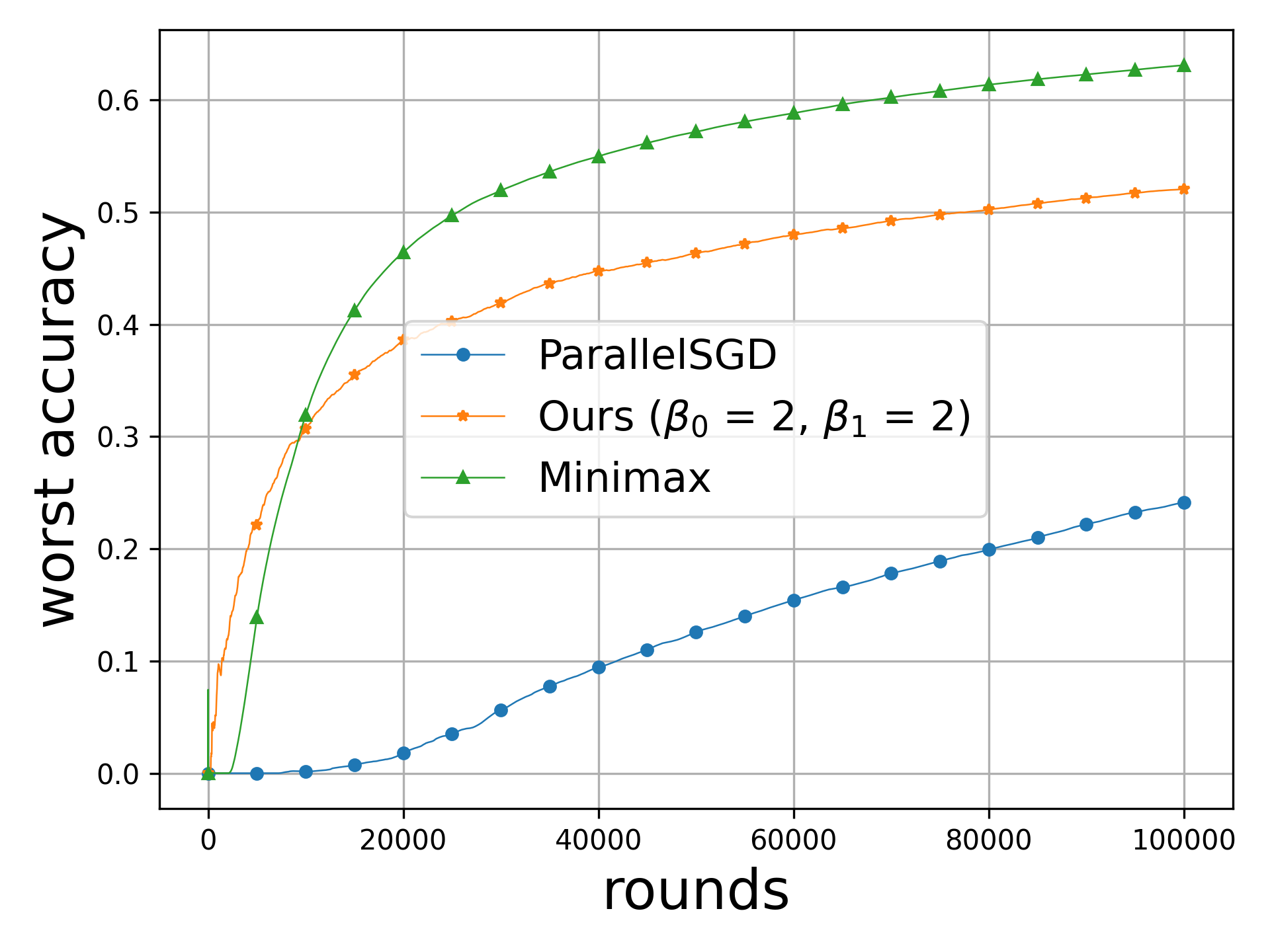}
         \caption{Worst Accuracy}
         \label{fig:fashion:cvx:comparison:worstaccuracy}
     \end{subfigure}
     \hfill
     \begin{subfigure}[b]{0.24\textwidth}
         \centering
         \includegraphics[width=\textwidth]{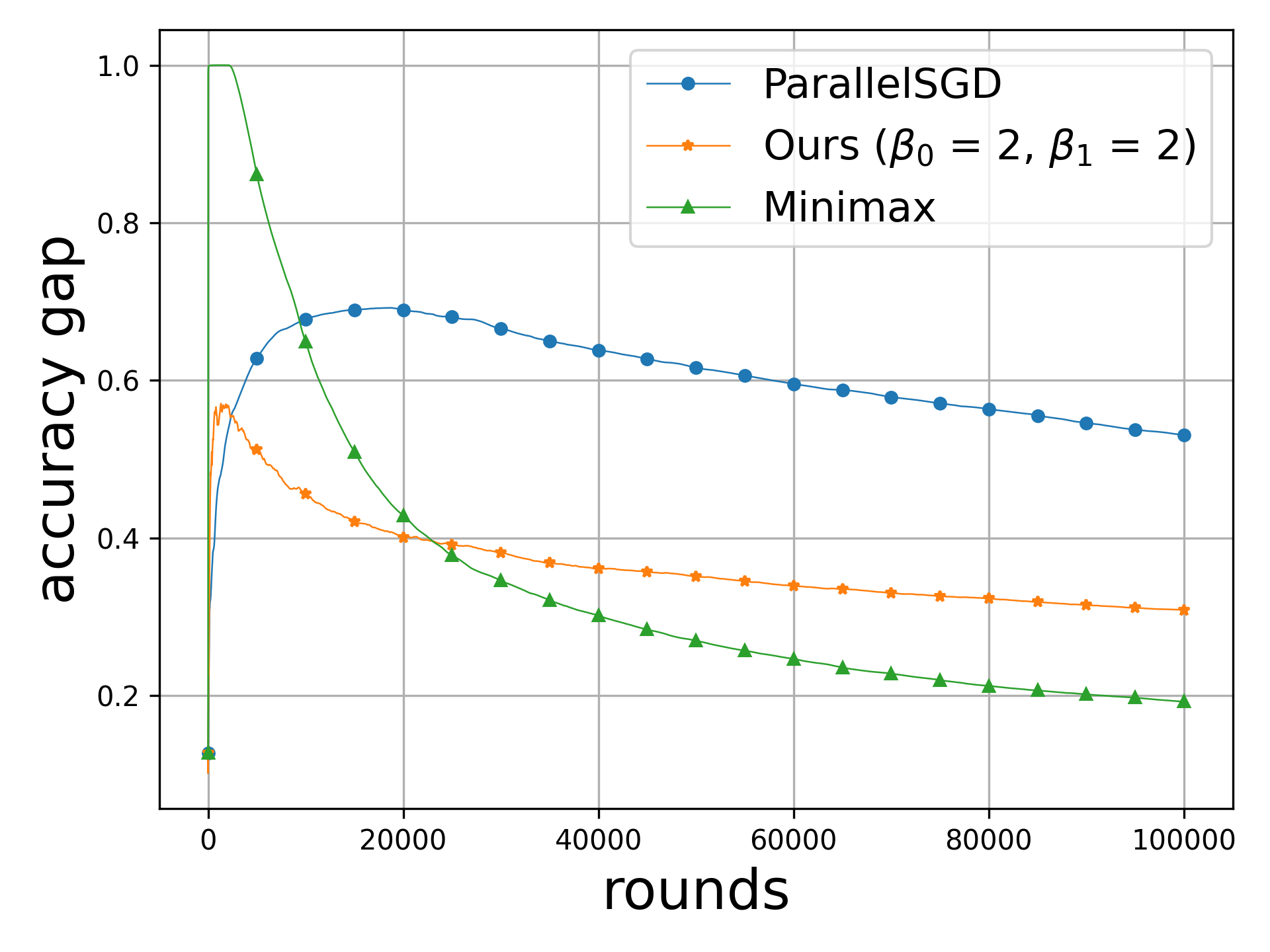}
         \caption{Accuracy gap}
         \label{fig:fashion:cvx:comparison:gap}
     \end{subfigure}
        \caption{Comparison of accuracy, loss, worst accuracy, and accuracy gap for LR on Fashion-MNIST with different methods.}
        \label{fig:fashion:cvx:comparison}
\end{figure*}

\begin{figure*}[tb!]
     \centering
     \begin{subfigure}[b]{0.24\textwidth}
         \centering
         \includegraphics[width=\textwidth]{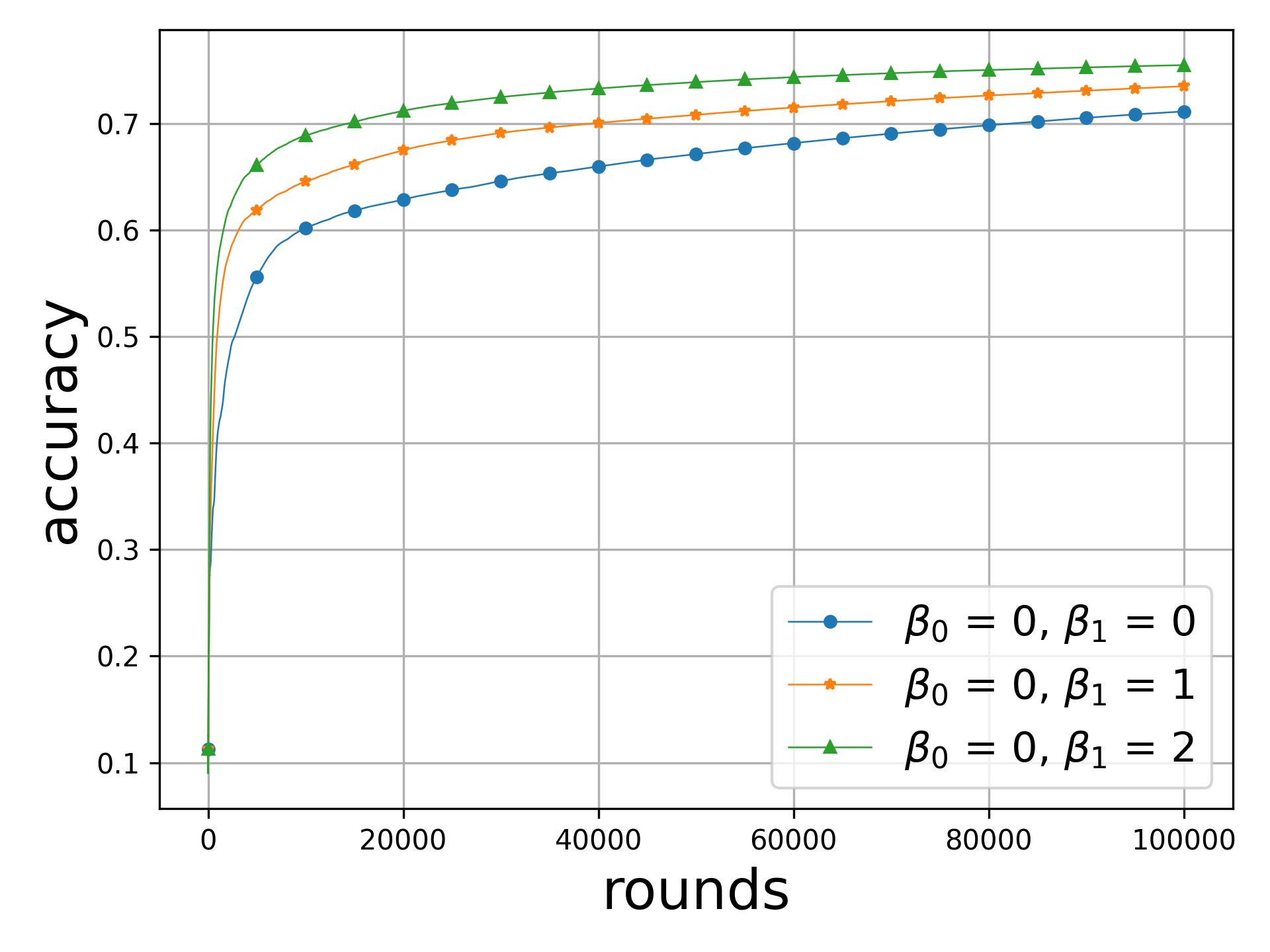}
         \caption{Overall test accuracy}\label{fig:fashion:cvx:beta1:accuracy}
     \end{subfigure}
     \hfill
     \begin{subfigure}[b]{0.24\textwidth}
         \centering
         \includegraphics[width=\textwidth]{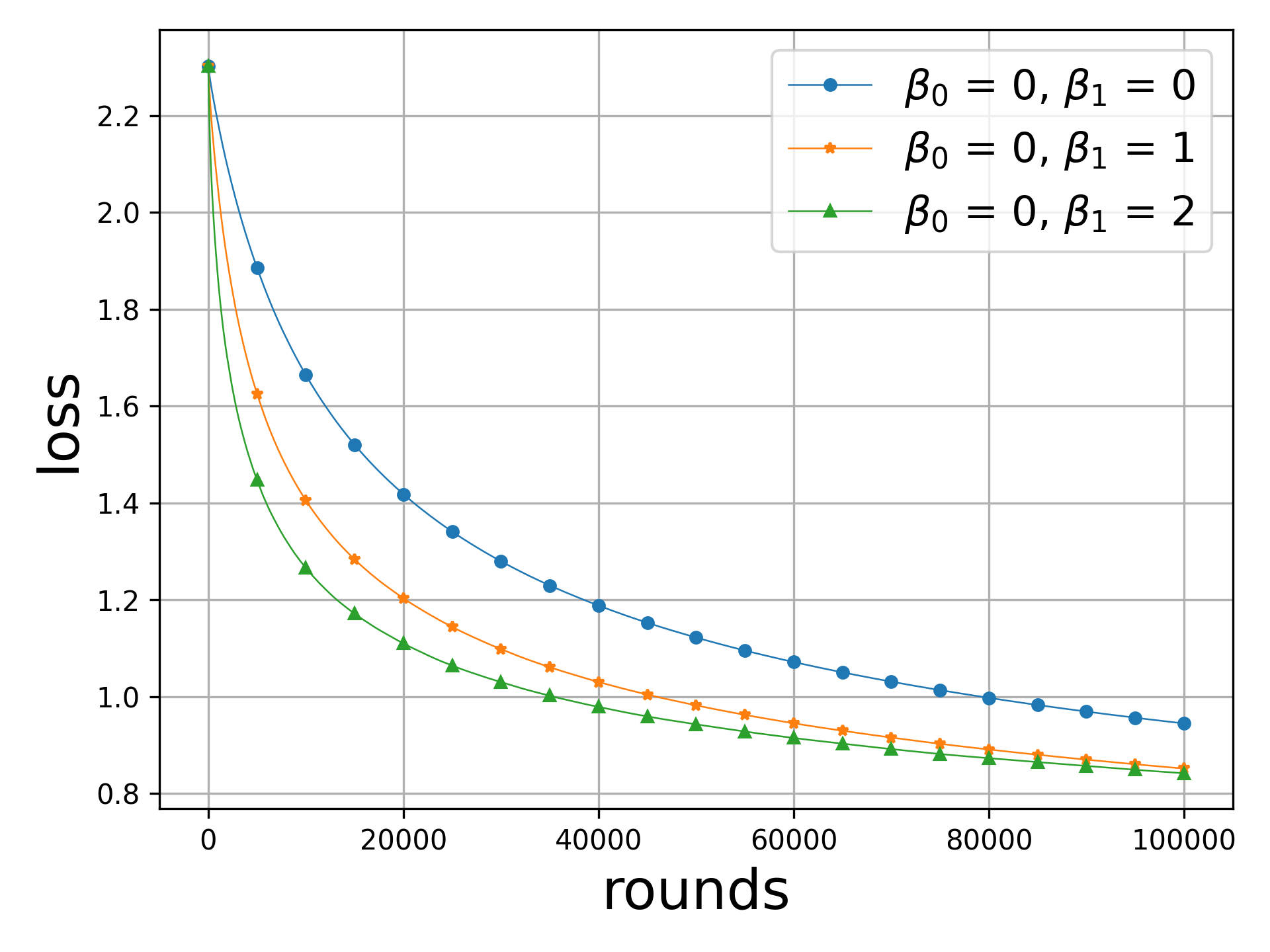}
         \caption{Overall test loss}\label{fig:fashion:cvx:beta1:loss}
     \end{subfigure}
     \hfill
     \begin{subfigure}[b]{0.24\textwidth}
         \centering
         \includegraphics[width=\textwidth]{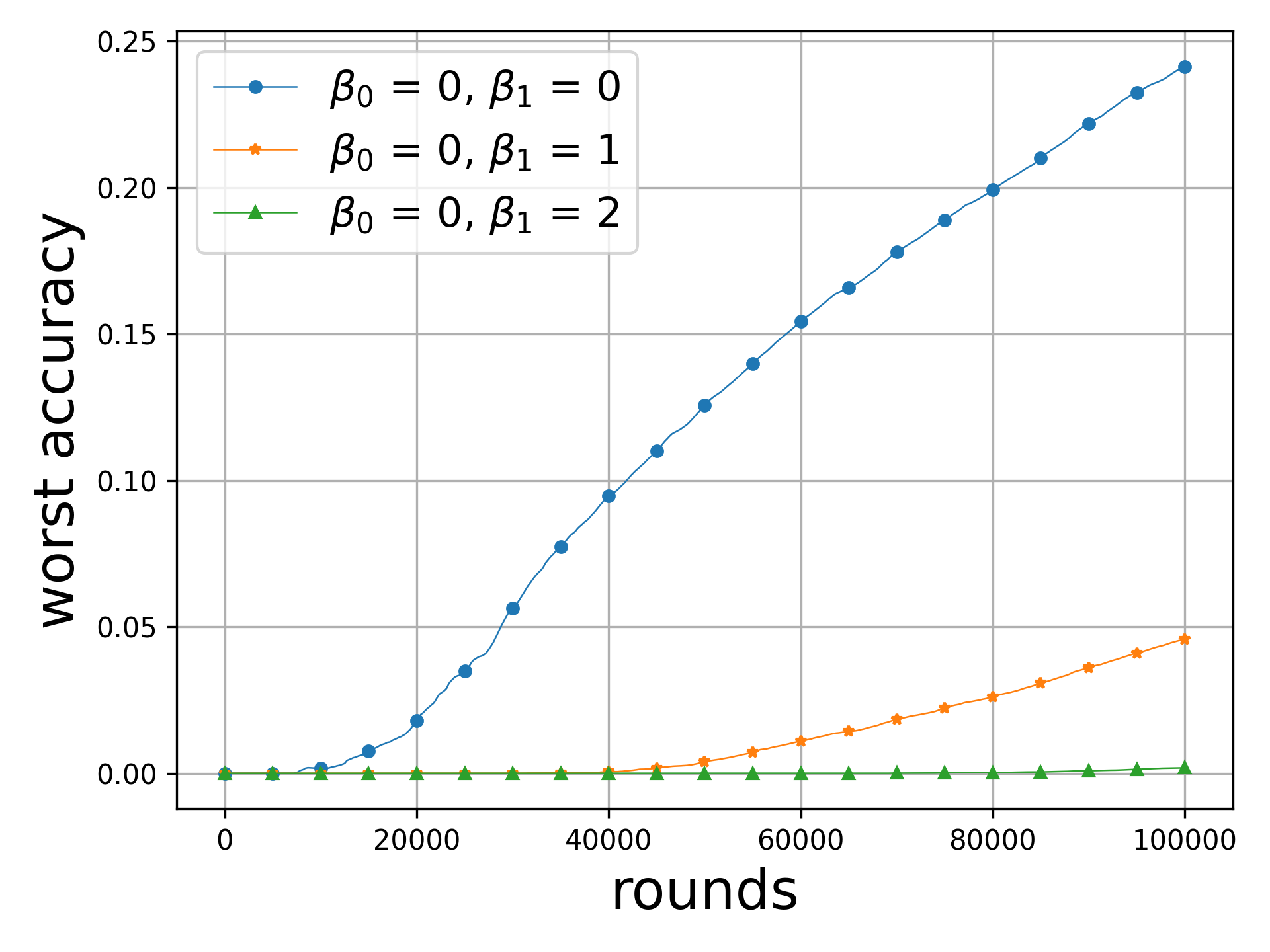}
         \caption{Worst Accuracy}\label{fig:fashion:cvx:beta1:worstaccuracy}
     \end{subfigure}
     \hfill
     \begin{subfigure}[b]{0.24\textwidth}
         \centering
         \includegraphics[width=\textwidth]{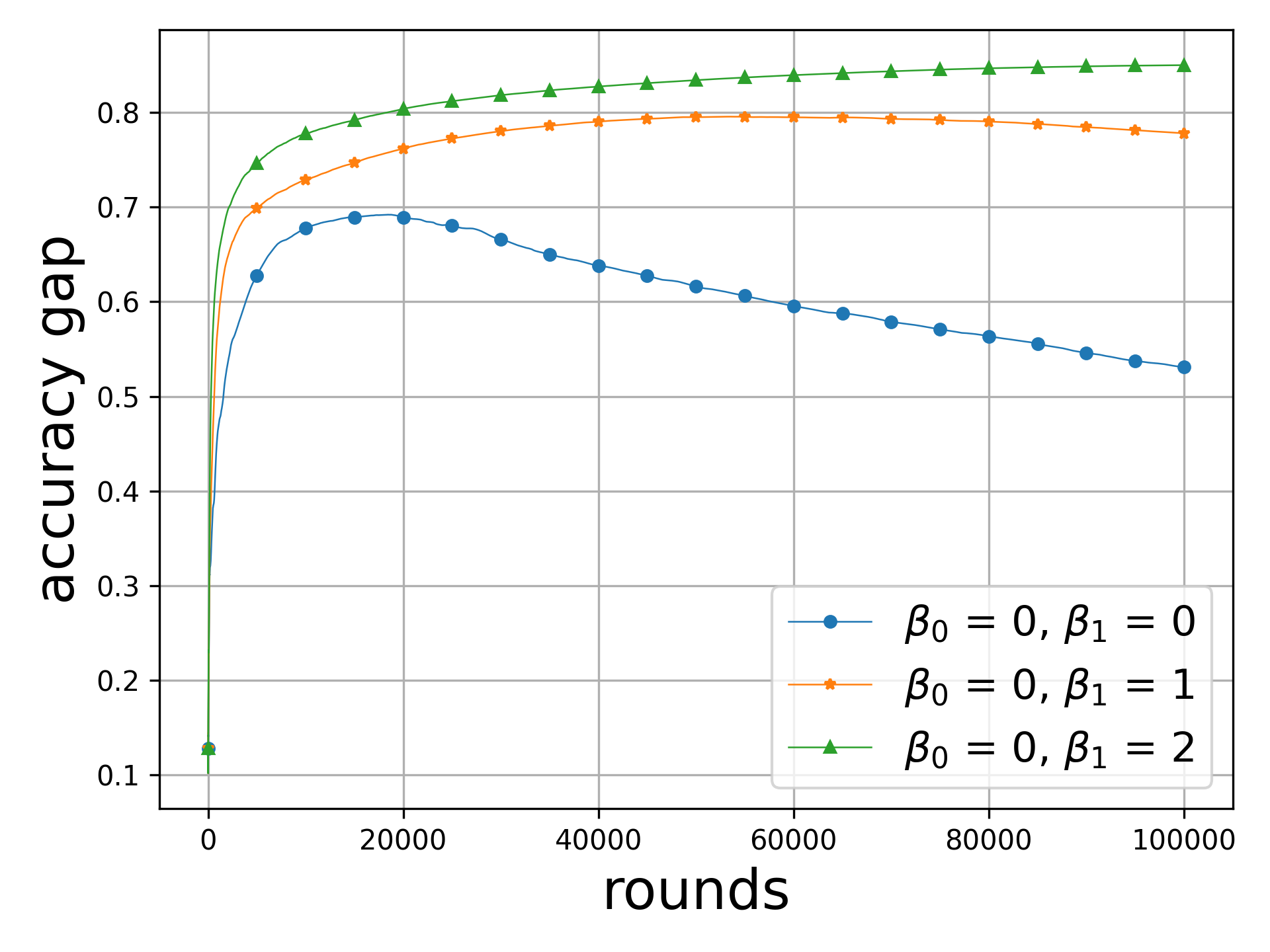}
         \caption{Accuracy gap}\label{fig:fashion:cvx:beta1:gap}
     \end{subfigure}
        \caption{Comparison of accuracy, loss, worst accuracy, and accuracy gap for LR on Fashion-MNIST with varying $\beta_{1}$.}
        \label{fig:fashion:cvx:beta1}
\end{figure*}

\end{document}